%%
%% This is file `sample-sigplan.tex',
%% generated with the docstrip utility.
%%
%% The original source files were:
%%
%% samples.dtx  (with options: `all,proceedings,bibtex,sigplan')
%% 
%% IMPORTANT NOTICE:
%% 
%% For the copyright see the source file.
%% 
%% Any modified versions of this file must be renamed
%% with new filenames distinct from sample-sigplan.tex.
%% 
%% For distribution of the original source see the terms
%% for copying and modification in the file samples.dtx.
%% 
%% This generated file may be distributed as long as the
%% original source files, as listed above, are part of the
%% same distribution. (The sources need not necessarily be
%% in the same archive or directory.)
%%
%%
%% Commands for TeXCount
%TC:macro \cite [option:text,text]
%TC:macro \citep [option:text,text]
%TC:macro \citet [option:text,text]
%TC:envir table 0 1
%TC:envir table* 0 1
%TC:envir tabular [ignore] word
%TC:envir displaymath 0 word
%TC:envir math 0 word
%TC:envir comment 0 0
%%
%%
%% The first command in your LaTeX source must be the \documentclass
%% command.
%%
%% For submission and review of your manuscript please change the
%% command to \documentclass[manuscript, screen, review]{acmart}.
%%
%% When submitting camera ready or to TAPS, please change the command
%% to \documentclass[sigconf]{acmart} or whichever template is required
%% for your publication.
%%
%%
%\documentclass[sigconf,review,anonymous]{acmart}
\documentclass[10pt,conference]{IEEEtran}
%%
%% \BibTeX command to typeset BibTeX logo in the docs
% \AtBeginDocument{%
%   \providecommand\BibTeX{{%
%     \normalfont B\kern-0.5em{\scshape i\kern-0.25em b}\kern-0.8em\TeX}}}
\def\BibTeX{{\rm B\kern-.05em{\sc i\kern-.025em b}\kern-.08em
    T\kern-.1667em\lower.7ex\hbox{E}\kern-.125emX}}

\usepackage{hyperref}

\usepackage{cite}
\usepackage{amsmath,amssymb,amsfonts}
\usepackage{xcolor}
\usepackage{color}
\usepackage{booktabs}
\usepackage{todonotes}
\usepackage{multirow} 
% \usepackage{times}
% \usepackage{tikz}
% \usetikzlibrary{arrows,automata}
\usepackage{url}
\usepackage{textcomp}

\usepackage{balance}

\usepackage{longtable}
\usepackage{fancybox,framed}
\usepackage{tabu}
\usepackage{xspace}
\usepackage{bm}
\usepackage{bbm}      % 提供 \mathbbm 命令
\usepackage{amsthm}  % 提供 \qedsymbol 命令

\newtheorem{theorem}{Theorem}

\theoremstyle{definition}\newtheorem{definition}{Definition}
\theoremstyle{remark}\newtheorem{example}{Example}
\newcommand{\ours}{\textsc{PatchPro}\xspace}

\usepackage{algorithm}  
\usepackage{algorithmicx}
\usepackage[noend]{algpseudocode}

\algrenewcommand{\algorithmiccomment}[1]{\hfill\(\triangleright\) \textcolor[gray]{0.5}{#1}}

\newcommand{\elmax}{\mathrm{elmax}}
\newcommand{\elmin}{\mathrm{elmin}}

\usepackage{algpseudocode}

\IEEEoverridecommandlockouts
\begin{document}
\pagestyle{plain}
%%
%% The "title" command has an optional parameter,
%% allowing the author to define a "short title" to be used in page headers.
% \title{\ours: Provable Repair of Adversarial Attack}
\title{Patch Synthesis for Property Repair of Deep Neural Networks}
\DeclareRobustCommand*{\IEEEauthorrefmarkNum}[1]{%
  \raisebox{0pt}[0pt][0pt]{\textsuperscript{\footnotesize #1}}%
}

\author{
    \IEEEauthorblockN{Zhiming Chi\IEEEauthorrefmarkNum{1}\,\IEEEauthorrefmarkNum{2}, Jianan Ma\IEEEauthorrefmarkNum{3}\,\IEEEauthorrefmarkNum{4}, Pengfei Yang\IEEEauthorrefmarkNum{5}\IEEEauthorrefmark{1}\thanks{\IEEEauthorrefmark{1}is the corresponding author.}, Cheng-Chao Huang\IEEEauthorrefmarkNum{6}, Renjue Li\IEEEauthorrefmarkNum{1}\,\IEEEauthorrefmarkNum{2},\\ Jingyi Wang\IEEEauthorrefmarkNum{4}, Xiaowei Huang\IEEEauthorrefmarkNum{7} and Lijun Zhang\IEEEauthorrefmarkNum{1}}
    \IEEEauthorblockA{\IEEEauthorrefmarkNum{1}Key Laboratory of System Software (Chinese Academy of Sciences) and State Key\\
     Laboratory of Computer Science, Institute of Software, Chinese Academy of Sciences, Beijing, China
   % \\Email: \{chizm, lirj19, zhanglj\}@ios.ac.cn
   }
    \IEEEauthorblockA{\IEEEauthorrefmarkNum{2}University of Chinese Academy of Sciences, Beijing, China
    }
    \IEEEauthorblockA{\IEEEauthorrefmarkNum{3}School of cyberspace, Hangzhou Dianzi University, Hangzhou, China 
    % \\Email: majianannn@gmail.com
    }
    \IEEEauthorblockA{\IEEEauthorrefmarkNum{4} Zhejiang University, Hangzhou, China
    % \\Email: majianannn@gmail.com, wangjyee@zju.edu.cn
    }
    \IEEEauthorblockA{\IEEEauthorrefmarkNum{5}College of Computer and Information Science, Software College, Southwest University, Chongqing, China
    % \\Email: ypfbest001@swu.edu.cn
    }
    \IEEEauthorblockA{\IEEEauthorrefmarkNum{6}Nanjing Institute of Software Technology, Chinese Academy of Sciences, Nanjing, China
    % \\Email: chengchao@njis.ac.cn
    }
    \IEEEauthorblockA{\IEEEauthorrefmarkNum{7}University of Liverpool, Liverpool, United Kingdom
    % \\Email: xiaowei.huang@liverpool.ac.uk
    \\Email: chizm@ios.ac.cn, majianannn@gmail.com, ypfbest001@swu.edu.cn, 
    chengchao@njis.ac.cn, \\lirj19@ios.ac.cn, wangjyee@zju.edu.cn, xiaowei.huang@liverpool.ac.uk, zhanglj@ios.ac.cn
    }
}

\maketitle
\begin{abstract}

Deep neural networks (DNNs) are prone to various dependability issues, such as adversarial attacks, which hinder their adoption in safety-critical domains. Recently, NN repair techniques have been proposed to address these issues while preserving original performance by locating and modifying guilty neurons and their parameters. However, existing repair approaches are often limited to specific data sets and do not provide theoretical guarantees for the effectiveness of the repairs.
To address these limitations, we introduce \ours, a novel patch-based approach for property-level repair of DNNs, focusing on local robustness. The key idea behind \ours is to construct patch modules that, when integrated with the original network, provide specialized repairs for all samples within the robustness neighborhood while maintaining the network's original performance. Our method incorporates formal verification and a heuristic mechanism for allocating patch modules, enabling it to defend against adversarial attacks and generalize to other inputs.
\ours demonstrates superior efficiency, scalability, and repair success rates compared to existing DNN repair methods, i.e., realizing provable property-level repair for 100\% cases across multiple high-dimensional datasets. 
% It not only facilitates property-level repairs for the robustness of high-dimensional data but also enhances the global robustness of the neural network, providing a theoretical guarantee for the generalization capability of DNN repair techniques.

\end{abstract}

\section{Introduction}
% In the last few years, deep neural networks (DNNs) have made significant advancements in various domains, including computer vision~\cite{badar2020application}, natural language processing~\cite{devlin2018bert}, and speech recognition~\cite{wang2017residual}. However, the adoption of DNNs in safety-critical domains has been slow due to concerns regarding their dependability. One major concern is the vulnerability of DNNs to adversarial attacks~\cite{pgd,autoattack}, where adversaries can manipulate input data in a way that is imperceptible to humans but can cause the model to make incorrect decisions. This vulnerability poses serious safety risks in applications such as autonomous vehicles~\cite{bojarski2016end} and medical diagnosis~\cite{vieira2017using}. Therefore, it is crucial to develop and deploy DNN models with safety and reliability requirements in mind to fully realize their potential in safety-critical domains.

In recent years, deep neural networks (DNNs) have achieved significant advancements in various domains, including computer vision~\cite{badar2020application}, natural language processing~\cite{devlin2018bert}, and speech recognition~\cite{wang2017residual}. Despite these advancements, the adoption of DNNs in safety-critical domains has been slow due to concerns regarding their dependability. A major concern is the vulnerability of DNNs to adversarial attacks~\cite{pgd,autoattack}, where adversaries can manipulate input data in ways that are imperceptible to humans but can cause the model to make incorrect decisions. This vulnerability poses serious safety risks in applications such as autonomous vehicles~\cite{bojarski2016end} and medical diagnosis~\cite{vieira2017using}. 
Therefore, it is crucial to regularly update the DNN
% during DNN deployment is crucial 
to mitigate the risks associated with these errors and ensure the reliability of the network in practical applications.

% develop and deploy DNN models with built-in safety and reliability to fully realize their potential in safety-critical domains.

DNN repair techniques~\cite{care,sohn2022arachne,vere_paper} have been proposed to address the problem
% fix such vulnerabilities while preserving original network performance. 
% by locating and modifying guilty neurons and their parameters.
% The repair of DNNs 
involving rectifying errors in the network by modifying its architecture or parameters. Compared to traditional methods such as adversarial training~\cite{gehr2018ai2,ganin2016domain,tramer2019adversarial}, input sanitization, fine-tuning, transfer learning~\cite{dai2009eigentransfer,ying2018transfer}, and data augmentation~\cite{yu2021deeprepair,ren2020few,ma2018mode},
% aim to enhance robustness but often require large datasets and may perform poorly with limited or low-quality data. 
neuron-level fault localization and repair methods~\cite{care} offer a more targeted approach by identifying and correcting errors at the individual neuron level while not affecting the overall network performance.

Despite significant advancements in this field, several limitations remain. 
First, neuron-level repair techniques, which rely on a limited number of samples, often struggle to provide robust defense against adversarial attacks due to the inherent complexity of these attacks compared to simpler threats like backdoor attacks. 
Adversarial attacks involve intricate mixtures of features~\cite{feature-purification}, making it difficult to generalize parameter adjustments from a small dataset. 
Second, existing repair methods typically focus on specific data for repair and fail to generalize to the property level (e.g., local robustness), which limits their effectiveness in addressing a broad range of adversarial scenarios. This motivates us to explore property repair of DNNs, specifically to fix the safety properties that neural networks violate in certain input regions. Third, while provable repair methods such as PRDNN~\cite{prdnn}, REASSURE~\cite{reassure}, and APRNN~\cite{aprnn} can conduct property-based error correction, they achieve this by ensuring that outputs meet constraints on the vertices of the input region as a polyhedron. However, the number of vertices increases exponentially with data dimensionality, which makes these methods inefficient for high-dimensional data.
% , particularly concerning local robustness.

In this work, we aim to 
% Our proposed methodology aims to 
address these limitations by proposing a novel patch-based method to achieve provable repair on the property level.
% issue of adversarial attacks 
% by leveraging a limited set of adversarial samples.
Specifically, we 
% our approach 
focus on correcting potential adversarial samples in the infinite set of high-dimensional points within a certain neighborhood of these error samples, i.e., satisfying local robustness property. To achieve this, our key idea is to use formal verification to help construct a separate patch module (in the form of a fully connected neural network structure) for each neighborhood outside of the original network.
% for repair purposes. 
Such a patch-based method allows us not only to ensure that the infinite set of points within the error sample's neighborhood repaired but also to maintain the performance of the original network unaffected. 
% NoteEach patch module constitutes a fully connected neural network structure. 
% Besides, the core concept of 
To construct such a patch, 
% lies in 
we utilize
reachability analysis through linear relaxation for the
provable training of these patch modules. Specifically, we use the
verification method DeepPoly~\cite{deeppoly} to create a linear relaxation of the output neurons. This approximation is employed to
determine the distance between the targeted behavior and the current
behavior, serving as the loss function. The patch modules are then
trained to minimize this distance. Once the loss function reaches zero, 
the patch modules will offer provable repairs for adversarial
attacks within the perturbation region.
% Note that by employing an external
% indicator to identify faulty inputs, adversarial attacks within the
% same perturbation region can be repaired using the same patch
% module.

% To ensure that each patch module targets specific neighborhoods, our approach incorporates an external indicator. This indicator identifies inputs within a particular sample's local neighborhood and assigns appropriate patch modules for repair. Through the use of the external indicator to identify faulty inputs,
% adversarial attacks within the same perturbation region can be repaired using the same patch module. Adversarial samples outside the repaired property are allocated by the indicator, utilizing a heuristic allocation mechanism, thus ensuring the generalization of the repair. Through this 
% % comprehensive 
% framework, we effectively enhance the network's resilience against adversarial attacks while maintaining accuracy.
% By leveraging reachability analysis and patch modules, we provide provable repairs for adversarial samples, thereby improving the robustness of the network. Additionally, to generalize this local robustness across the entire dataset, we integrate the external indicator with the original network. For samples outside the repair property, we apply a heuristic patch module allocation mechanism, leveraging existing patch modules for repair. Through our method's comprehensive framework, we effectively enhance the network's resilience against adversarial attacks while maintaining accuracy.

To ensure that each patch module addresses specific neighborhoods, our approach employs an external indicator that identifies inputs within a particular sample’s local neighborhood and assigns the appropriate patch modules for repair. This indicator allows the same patch module to effectively repair adversarial attacks within the same perturbation region. For adversarial samples outside the repaired property, the indicator utilizes a heuristic allocation mechanism to assign suitable patch modules, thereby enhancing the generalization of the repair. This comprehensive framework not only improves the network's resilience against adversarial attacks but also maintains its accuracy. By leveraging reachability analysis and dedicated patch modules, we provide provable repairs for adversarial samples, significantly boosting the network's robustness. 
Additionally, to extend this local robustness across the entire dataset, we integrate the external indicator with the original network, ensuring that all samples, including those beyond the initially repaired property, benefit from heuristic patch module allocation. This approach effectively combines local and global repair strategies to enhance overall network robustness.
% performance and security.

We summarize our contributions 
% of this paper can be summarized 
as follows:

% \begin{itemize}
% \item We propose a new approach named \ours that, by combining a loss function derived from formal methods, trains patch modules integrated into the original network to repair high-dimensional infinite point sets within the polyhedron neighborhood of adversarial samples while preserving network performance. Additionally, through a heuristic allocation mechanism, our approach successfully achieves the generalization of local robustness repair. Our method introduces patch modules that are seamlessly integrated into the original network architecture.
% \item To address the efficiency challenges associated with formal 
% verification in large-scale DNNs, we leverage patch
% modules to perform repairs at the feature space of the networks. 
% This strategy enables our method to scale effectively
% across various network architectures, ensuring efficient and
% practical implementation.
% \item We conduct a comprehensive evaluation of \ours
% on three diverse datasets and multiple DNN architectures.
% Through extensive comparisons with state-of-the-art repair 
% and adversarial training approaches, our method consistently 
% demonstrates superior efficiency and scalability.
% Moreover, it exhibits remarkable generalization capabilities
% to handle general inputs, thereby significantly enhancing
% the overall robustness of the network. This advancement
% in generalization capabilities represents a significant contribution of our method.
% \end{itemize}

\begin{itemize}
\item We introduce \ours, a novel approach that integrates a loss function derived from formal verification to train patch modules within the original network. This method effectively repairs high-dimensional infinite point sets within the polyhedron neighborhood of adversarial samples while preserving the network’s performance. The approach also includes a heuristic allocation mechanism, which ensures the generalization of local robustness repair by seamlessly integrating patch modules into the original network architecture.
\item To tackle the efficiency challenges of formal verification in large-scale DNNs, we utilize patch modules to perform repairs in the feature space of the networks. This strategy enables our method to scale effectively across various network architectures, ensuring both efficient and practical implementation.
\item We thoroughly evaluate \ours on three diverse datasets and multiple DNN architectures. Through extensive comparisons with state-of-the-art repair and adversarial training techniques, our method consistently demonstrates superior efficiency, scalability, and generalization capabilities. It shows significant improvement in handling general inputs, thereby greatly enhancing the overall robustness of the network. 
% This advancement in generalization represents a key contribution of our method.
\end{itemize}

\section{Preliminary}
In this section we recall some basic notations of DNN repair. A deep neural network is a function ${N}\colon \mathbb{R}^{n_0} \rightarrow \mathbb{R}^{n_L}$ that maps an input $x \in \mathbb{R}^{n_0}$ to an output $y \in \mathbb{R}^{n_L}$. 
We usually visualize a DNN $N$ as a sequence of $L$ layers, where the $i$th layer contains $n_i$ neurons each representing a real variable.
Between two adjacent layers is typically a composition of an affine function and a non-linear activation function, and the DNN $N$ is the composition of the functions between layers.
In many applications, DNNs are serving for classification tasks. In such a classification DNN ${N}\colon \mathbb{R}^{n_0} \rightarrow \mathbb{R}^{n_L}$, every output dimension corresponds to a classification label, and the one with the maximum output value is the classification result that the DNN $N$ gives, i.e., $C_N(x)=\arg \max_{1 \le i \le n_L} N(x)_i$, where $N(x)_i$ is the $i$th entry of the vector $N(x)$.
% ${N}={L}_{1} \circ {L}_{2} \circ \ldots \circ {L}_{l}$, 
% where each layer ${L}_{i}$ is a function that maps an input 
% $x_{i} \in \mathbb{R}^{d_{i}}$ to an output $x_{i+1} \in \mathbb{R}^{d_{i+1}}$. ${L}_{i}$ is a composed function: 
% , 
% where $\sigma_{i}$ is the activation function, 
% $W_{i} \in \mathbb{R}^{d_{i+1} \times d_{i}}$ and $b_{i} \in \mathbb{R}^{d_{i+1}}$ are the weight matrix and bias vector parameters of the layer ${L}_{i}$, respectively.  
%In particular, We denote $N_i$ to represent the subnetwork extending from the $i$th layer (as the input layer of subnetwork) to the output layer within the neural network denoted as ${N}$.
%\sout{Formally, $f_i$ is expressed as $f_i = L_i \circ \cdots \circ L_n$, signifying the composition of layers from $L_i$ to $L_n$.} 

The notion of safety properties pertains to assertions that 
guarantee the absence of undesirable behavior.  
Within the context of DNNs, a safety property demands 
that a DNN operates 
correctly within a specified input range. 
\begin{definition}
    A safety property is a triple $({N}, {X}, {Q})$, 
    where $N$ is a DNN, ${X} \subseteq \mathbb{R}^{d_{\text {in }}}$ 
    and ${Q} \subseteq \mathbb{R}^{d_{\text {out }}}$ 
    are the subset of input and output spaces of the neural network ${N}$.
    % where $c_{\text {in }} \in \mathbb{R}^{d_{\text {in }}}$, $d_{\text {in }} \in \mathbb{R}$, $c_{\text {out }} \in \mathbb{R}^{d_{\text {out }}}$, 
    % and $d_{\text {out }} \in \mathbb{R}$ are vectors and scalars, respectively. 
    The property $({N}, {X}, {Q})$ is satisfied if and only if ${N}(x) \in {Q}$ for all $x \in {X}$.
\end{definition}

% The safety properties strictly satisfy this definition. 
%We consider specifications that require that for all inputs in some set ${X}$ \sout{around $x_0$}, the network output satisfies a linear relationship
% \begin{equation}
%    \psi: c^{T} {N}(x)+d \leq 0 \quad \forall x \in {X}
%     \label{eq:outconstraint}
% \end{equation}
%where $c$ and $d$ are a vector and a scalar that may depend on the nominal input $x_0$ and label $y_{\text {true }}$. And the input set $X$ is a bounding box $\{ x | l \leq x \leq u, l,u \in \mathbb{R}^{d_{\text {in }}} \}$, where $l$ and $u$ are the lower bound and upper bound of the input $x$.
% As shown by Dvijotham et al. [22], many useful verification problems fit this definition. 
% In this paper, we focus on the robustness to adversarial perturbations within some $\ell_{\infty}$ norm-bounded ball around the nominal input $x_0$.
% \begin{definition}
%     A safety property is a three tuple $(\mathcal{N}, \mathcal{X}, \mathcal{P})$, where $\mathcal{N} \subseteq \mathbb{R}^{d_{\text {in }}} $ and $\mathcal{P} \subseteq \mathbb{R}^{d_{\text {out }}}$ are the set of input and output spaces of the neural network $\mathcal{N}$, respectively.
%     The property $(\mathcal{N}, \mathcal{X}, \mathcal{P})$ is satisfied if and only if $\mathcal{N}(x) \in \mathcal{P}$ for all $x \in \mathcal{X}$.
% \end{definition}
% The local robustness of a classification DNN refers to its ability to maintain 
% stable and consistent predictions in the vicinity of its 
% in-distribution data points, even in the presence of 
% small perturbations or variations in the input. 
A local robustness property of a classification DNN $N$ requires that for any input $x$ in a given neighborhood 
$B(x_0,r)$ of an input $x_0$, its classification should always be 
consistent with $x_0$, where a neighborhood of an input $x_0$ is usually defined as 
a closed ball $B(x_0,r):=\{x \in \mathbb R^{n_0} \mid \|x-x_0\|_\infty \le r\}$, 
$\|\cdot\|_\infty$ is the $L_\infty$-norm, and $r>0$ is the 
radius. Formally, it can be defined as follows:
% \begin{definition}
%     A local robustness property is a triple $({N}, {X}, {Q})$, 
%     where $N$ is a DNN, ${X} =  B(x_0,r), x_0 \in \mathbb{R}^{d_{\text {in }}}$ and $r>0$ is the radius,
%     and ${Q} = \{x \in \mathbb{R}^{d_{\text {out }}} \mid N(x)_i - N(x)_{C_N(x_0)} <= 0, \forall i \in \{1, 2, \dots \mathbb{R}^{n_L}\}\}$
% \end{definition}

The local robustness of a DNN refers to its ability to maintain stable and consistent predictions in the vicinity of 
its in-distribution data points, even in the presence of small perturbations or variations in the input. 
Here a neighborhood of an input $x_0$ is usually defined as a closed ball $B(x_0,r):=\{x \in \mathbb R^{n_0} \mid \|x-x_0\|_\infty \le r\}$,  
where $\|\cdot\|_\infty$ is the $L_\infty$-norm, and $r>0$ is the radius.
Formally, a local robustness property of a classification DNN $N$ requires that for any input $x$ in a 
given neighborhood $B(x_0,r)$ of an input $x_0$, its classification should always be consistent with $x_0$, 
i.e., $\forall x \in B(x_0,r), C_N(x)=C_N(x_0)$. We denote this local robustness property as $(N,B(x_0,r))$, i.e.
$(N,B(x_0,r)) = (N, B(x_0,r), \{y \in \mathbb{R}^{n_L} \mid y_i < y_{C_N(x_0)} , i=1, 2, \dots, n_L\})$,
and thus it is a safety property.

In this work, we focus on the problem of repairing adversarial attacks with limited adversarial samples. Different from repairing backdoor attacks, not only does the buggy behavior of the given adversarial samples need fixing, but we also require that there should be no adversarial attacks around given adversarial samples, and that this enhancement of local robustness should generalize to samples across the whole dataset. Now we formally state the problem of repairing adversarial attacks as follows:

\begin{quote}
    Given a DNN $N$ and a set of adversarial samples $\{x_i^*\}_{i=1}^n$, where $n$ can be significantly smaller than the size of the training set, and each $x_i^*$ is obtained by an adversarial attack on an input $x_i$ with a given radius $r$, we need to construct a DNN $F$ which is locally robust on every $B(x_i,r)$ while the accuracy is maintained and local robustness of other inputs is potentially improved.
\end{quote}

% \begin{definition}
%     Let $\phi=({N}, {X}, {Q})$ be the desired safety property, the \emph{Neural Network Repair problem} is a procedure to construct 
%     a neural network $N'$ such that $N'$ satisfies $\phi$ and the semenatic distance between $N$ and $N'$ is minimized, which should follow:
%     $$
%     N'(x) = \begin{cases}
%             N(x), & \text{if } x \notin X \\
%             f(x), & \text{if } x \in X
%             \end{cases}
%     $$
%     where $f(x) \in Q$
% \end{definition}

\section{Methodology}

We focus on fixing adversarial attacks with limited data in this work, and the aims of the repair are at least threefold: For a given radius $r>0$,
\begin{itemize}
    \item the buggy behaviors of $\{x_i^*\}_{i=1}^n$ are fixed,
    \item the DNN is locally robust in $B(x_i,r)$ for $1 \le i \le n$,
    \item the accuracy of the DNN is maintained, and for as many input $x$ in the dataset as possible, the DNN is locally robust in $B(x,r)$.
\end{itemize}
% Here the first objective is the fundamental requirement showing that the repair is successful on the given adversarial samples $\{x_i^*\}_{i=1}^n$. The second objective aims to ensure the local generalizability of the repair, whereby the absence of adversarial examples is guaranteed within the vicinity of the generated adversarial examples, while the third objective focuses on the global generalizability of the repair, wherein it simultaneously improves the robustness against all potential inputs in the dataset without significantly compromising the accuracy. 

% The current neural network repair methods under limited data conditions primarily involve parameter adjustments at the neuron level. While these methods demonstrate high efficacy against straightforward backdoor attacks, they face significant challenges in effectively addressing complex adversarial attacks. Even if the erroneous behaviors of $\{x_i^*\}_{i=1}^n$ are successfully repaired, such repairs lack the ability to generalize 
% %both locally and globally. 
% either locally or globally. 
% Inspired by Reassure~\cite{reassure}, 
In this work, we propose a patch-based repair method. A patch refers to a specific modification or alteration made to a software system or codebase; it is a discrete set of changes applied to fix a bug, enhance functionality, or address security vulnerabilities. Patch-based DNN repair involves the integration of an external indicator designed to identify buggy inputs, followed by the application of specialized patch modules to repair input sets which have similar behaviors. Each indicator here corresponds precisely to a robustness neighborhood $B(x_i,r)$ that requires repair, and we can leverage verification tools to ensure that the repair within each robustness neighborhood is provable. Furthermore, for other inputs in the dataset, we can heuristically match patches that maximize their robustness, thereby endowing this repair approach with global generalization. In this section, we propose a patch-based repair method named \ours to defend from adversarial attacks with limited data.

%In order to repair the buggy behaviors of the original neural network while fix the parameters of it, a novel approach, \textbf{\ours} is proposed. This approach involves the integration of an external indicator  designed to identify buggy inputs, followed by the application of specialized patch modules to repair input sets which have similar behaviors. 

%The indicator will assign the patch modules to guarantee the output satisify the safety property. Then the choosen patch modules will simultaneously accept the input $x$, then add their result to the output of original network $N$. In the following section, we will expound upon the detailed functionalities of each componets
\subsection{Structure of the repaired DNN}

The main produce of the repaired network is shown in Fig.~\ref{fig:operation logic}. For an input $x \in \mathbb R^{n_0}$, the role of the indicator is to select the appropriate patches that, when applied, yields the sum of the outputs of the patch modules and the output of the original DNN for $x$. 
This sum represents the output obtained after the repair process. 
The indicator function is defined as:
\begin{figure}[h]
    \centering
    \includegraphics[width=0.98\linewidth]{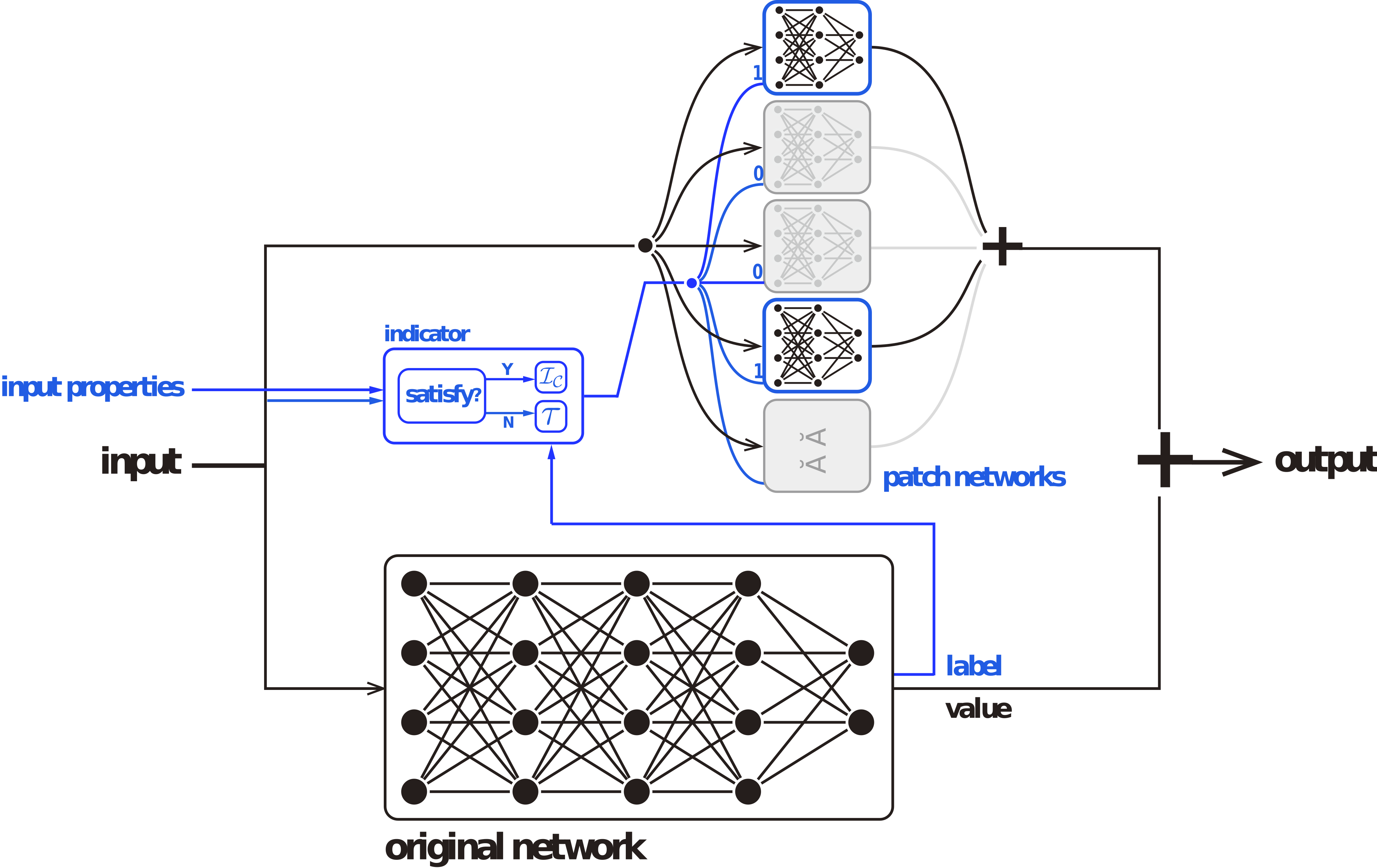}
    % \caption{Framework of \ours. }
    \caption{The architecture of a DNN repaired by \ours. 
    It contains multiple patch networks for each input properties. Each of them is enabled or disabled according to the allocation signal ``1'' or ``0'' determined by the indicator. The blue lines highlights the indicator's workflow.
    The final output is the sum of the outputs of all the enabled patches and the original network.
%    black lines and modules denote the computational flow of input data through the neural network. The blue lines and modules indicate that a preliminary assessment is conducted to determine whether the input satisfies specific input attributes, followed by the allocation of corresponding patch networks accordingly. Specifically, a blue digit "1" signifies that the input will leverage the associated patch network for repair operations, whereas a "0" indicates that the particular patch network will not be employed. Finally, the outputs from all the utilized patch networks are combined with the output from the original network to generate the ultimate output of the repair network.
    }
    \label{fig:operation logic}
\end{figure}

% \begin{figure}[h]
%   \centering
%   \includegraphics[width=\linewidth]{sample-franklin}
%   \caption{1907 Franklin Model D roadster. Photograph by Harris \&
%     Ewing, Inc. [Public domain], via Wikimedia
%     Commons. (\url{https://goo.gl/VLCRBB}).}
%   \Description{A woman and a girl in white dresses sit in an open car.}
% \end{figure}

\begin{definition} \label{def:Indicator}
    Let $\mathcal C=(X_1,\ldots,X_m)$ be a finite sequence of input properties. 
    The indicator function 
    $\mathcal{I}_{\mathcal{C}}\colon \mathbb{R}^{n_0} \to \{0,1\}^{m}$ outputs in the $j$th entry,  where $j \in \{1,\dots ,m\}$,  as
    % It outputs a 0-1 vector $y$ with the same dimension as $x$, when $x$ satisfies the $I_j$, $y_j = 1$, else $y_j = 0$.
    $$
    \mathcal{I}_{\mathcal{C}}(x)_j = \left\{\begin{array}{ll}
    1, & \text { if } x \models X_j, \\
    0, & \text { otherwise}.
    \end{array}\right.
    $$
Typically an input property $X_i$ is a subset of $\mathbb R^{n_0}$, and we define $x \models X_i$ iff $x \in X_i$.
\end{definition}

Upon classification by the indicator, a set of patch modules is deployed to perform the repair. Here each patch module is specifically tailored 
to address an input set with the same local robustness property. The implementation of a patch module is a fully connected neural network in this work. For an input $x$ that
%is not within the input set of any indicator, 
does not satisfy any input properties, 
i.e., $x \notin \bigcup_{i=1}^n B(x_i,r)$, 
%it cannot be identified by the indicators, 
there is no existing specific patch module,
but it may still suffer from adversarial attacks within its neighborhood.
%but there is still potential adversarial attack in $B(x,r)$. 
In this situation, we 
%need to 
heuristically allocate some patch modules to this input to defend from adversarial attacks. Formally, the structure of the repaired DNN is as follows:

\begin{definition} \label{def of F}
  A repaired DNN is a tuple $F=(N,\mathcal C,\mathcal P,\tau)$, where
  \begin{itemize}
      \item $N \colon \mathbb R^{n_0} \to \mathbb R^{n_L}$ is the original DNN,
      \item $\mathcal C=(X_1,\ldots,X_m)$ be a finite sequence of input properties,
      \item $\mathcal P = (P_1, \ldots , P_{m})^\mathrm{T}$ is a finite sequence of patch modules, each of which is a fully connected neural network,
      % \item and $\tau \colon \mathbb R^{n_0} \setminus \bigcup_{i=1}^n B(x_i,r) \to 2^{\{1,\ldots,m\}}$ is a patch allocation function.
      \item and $\tau \colon \mathbb R^{n_0} \setminus \bigcup_{i=1}^n X_i \to 2^{\{1,\ldots,m\}}$ is a patch allocation function.
  \end{itemize}
The semantics of the repaired DNN $F$ is a function:
% The semantics of the repaired DNN $F=(N,\mathcal C,\mathcal P,\tau)$ is a function 
% \begin{align*}
%    F\colon\mathbb R^{n_0} &\to \mathbb R^{n_L}, \\
%            x    &\mapsto 
%            \left\{\begin{array}{ll}
%     \mathcal I_{\mathcal C}(x)^\mathrm{T} \mathcal P(x) + N(x), & \text { if } x \in \bigcup_{i=1}^n B(x_i,r), \\
%     \sum_{j \in \tau(x)}P_j(x)+N(x), & \text { otherwise},
%     \end{array}\right.
% \end{align*}
\begin{align*}
   F\colon\mathbb R^{n_0} &\to \mathbb R^{n_L}, \\
           x    &\mapsto 
           \left\{\begin{array}{ll}
    N(x) + \mathcal I_{\mathcal C}(x)^\mathrm{T} \mathcal P(x), & \text { if } x \in \bigcup_{i=1}^n X_i, \\
    N(x) + \sum_{j \in \tau(x)}P_j(x), & \text { otherwise},
    \end{array}\right.
\end{align*}
where  $\mathcal P(x) = (P_1(x), \ldots , P_{m}(x))^\mathrm{T}$.
\end{definition}
In this work, we set $\mathcal C = \{B(x_i,r) \mid i=1,\ldots,n\}$, i.e., the indicator judges which robustness region the input belongs to, and the corresponding patch module defends against adversarial attacks in this specific region.

\subsection{Training the patch modules}

The first challenge is to train the patch modules, which are each implemented as a fully connected neural network, to repair each robustness property $(F,B(x_i,r))$ with a provable guarantee. To achieve provability, we employ formal verification in training these patch modules. The result of verification on a robustness property can be transformed into a loss function; once this loss reaches zero, the property is verified to be true, and the patch module in this stage can defend against all possible adversarial attacks in this robustness region. This idea has been widely adopted in DNN repair~\cite{dl2,art}. 

In this work, we employ DeepPoly~\cite{deeppoly} as the verification engine\footnote{Also there are several other verification tools based on abstract interpretation, 
such as CROWN~\cite{zhang2018efficient} and DeepZ~\cite{deepz}, which vary in precision, affecting loss function evaluations. 
We choose DeepPoly since it provides a good balance between efficiency and precision.}. 
DeepPoly uses abstract interpretation to give every neuron an upper/lower bound in the form of an affine function, where only variables in previous layers occur, and the numerical bound can be derived by propagating backwards these affine upper/lower bounds to the input layer. 
% \zhiming{differentiable?: In our study, we utilize the differentiable variant of DeepPoly~\cite{deeppoly} as the underlying verification framework. DeepPoly, through the application of abstract interpretation, assigns each neuron upper and lower bounds represented by affine functions, with dependencies solely on variables from preceding layers. These affine upper and lower bounds are propagated backward to the input layer to derive the numerical constraints. The differentiable version of DeepPoly, on the other hand, employs these affine upper and lower bounds as weight matrices between consecutive layers.}
Recall that a local robustness property $(F,B(x_i,r))$ holds iff for any $x \in B(x_i,r)$ and $\ell \ne \ell_0 =C_F(x_i)$, $F(x)_\ell<F(x)_{\ell_0}$, so DeepPoly calculates the abstraction of $F$ on $B(x_i,r)$ for every neuron and the expressions $F(x)_\ell-F(x)_{\ell_0}$ for $\ell \ne \ell_0$. From the abstraction, we can obtain an affine function in which there are only input variables as a sound upper bound of $F(x)_\ell-F(x)_{\ell_0}$ on $B(x_i,r)$, i.e.,
\begin{align} \label{eq:deeppoly}
    \forall x \in B(x_i,r), F(x)_\ell-F(x)_{\ell_0} \le \alpha_\ell^\mathrm{T} x + \beta_\ell,
\end{align}
where $\alpha_\ell \in \mathbb R^{n_0}$ and $\beta_\ell \in \mathbb R$ are constants. It is easy to obtain the numerical upper bound of $\alpha_\ell^\mathrm{T} x + \beta_\ell$ on the box region $B(x_i,r)$. If this upper bound is negative for every $\ell \ne \ell_0$, then the local robustness property $(F,B(x_i,r))$ is verified to be true by DeepPoly. Therefore, it is natural to use this upper bound as the loss function to train the corresponding patch module. 

% Also there are several other verification tools based on abstract interpretation, 
% such as CROWN~\cite{zhang2018efficient} and DeepZ~\cite{deepz}, which vary in precision, affecting loss function evaluations. 
% The reason we choose DeepPoly is that it provides a good balance between efficiency and precision, performing well in our methodology. 

\begin{definition}  \label{def:loss}
    For a local robustness property $\varphi=(F,B(x_i,r))$, we define the safety violated loss function as
    % of $\varphi$ as
    \begin{align*}
        \mathcal L(\varphi)(x) = \sum_{\ell \ne \ell_0} \max (\alpha_\ell^\mathrm{T} x + \beta_\ell,0),
    \end{align*}
    where $\alpha_\ell^\mathrm{T} x + \beta_\ell$ is the upper bound of $F(x)_\ell-F(x)_{\ell_0}$ on $B(x_i,r)$ given by DeepPoly, as shown in Eq.~\eqref{eq:deeppoly}. For local robustness properties $\varphi_1,\ldots,\varphi_k$ which share the same ground truth label $\ell_0$, we define  $     \mathcal L(\bigwedge_{i=1}^k\varphi_i) = \sum_{i=1}^k \mathcal L(\varphi_i) $.
\end{definition}

For a local robustness property $\varphi=(F,B(x_i,r))$, its safety violated loss function $\mathcal L(\varphi)$ being $0$ implies that $\varphi$ is verified to be true by DeepPoly.
\begin{theorem}\label{thm:main}
    Let $\varphi=(F,B(x_i,r))$ be a local robustness property.
    If~$\mathcal L(\varphi) = 0$ on $B(x_i,r)$, i.e.,
    % \[
    %  \mathcal L^*(\varphi):=\sum_{\ell \ne \ell_0} \max (\elmax(\alpha_{\ell}^\mathrm{T},\bm 0) \cdot (x_i+r\cdot\bm 1)+ \elmin(\alpha_{\ell}^\mathrm{T},\bm 0) \cdot (x_i-r\cdot \bm 1) + \beta_\ell,0)=0,
    % \]
    \begin{flalign*}
        \mathcal L^*(\varphi):=&\max (\elmax(\alpha_{\ell}^\mathrm{T},\bm 0) \cdot (x_i+r\cdot\bm 1)+ \elmin(\alpha_{\ell}^\mathrm{T},\bm 0) \cdot&\\
        &  (x_i-r\cdot \bm 1) + \beta_\ell,0) = 0,&        
    \end{flalign*}
    where $\elmax$ and $\elmin$ are the element-wise $\max$ and $\min$ operation, $\bm 0$ and $\bm 1$ are the vector in $\mathbb R^{n_0}$ with all the entries $0$ and $1$, respectively, then the property $\varphi$ holds. 
\end{theorem}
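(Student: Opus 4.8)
The plan is to prove the statement in two parts. First I would justify the displayed reformulation (the ``i.e.''), namely that $\mathcal L(\varphi)\equiv 0$ on $B(x_i,r)$ is equivalent to $\mathcal L^*(\varphi)=0$. Second I would show that either of these equivalent conditions forces the local robustness property $\varphi=(F,B(x_i,r))$ to hold. The only non-trivial external ingredient needed is the soundness of the DeepPoly abstraction recorded in Eq.~\eqref{eq:deeppoly}; the rest is elementary optimisation of an affine function over a box.

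For the reformulation: the loss $\mathcal L(\varphi)(x)=\sum_{\ell\ne\ell_0}\max(\alpha_\ell^\mathrm T x+\beta_\ell,0)$ is a finite sum of non-negative terms, so it vanishes at a point $x$ exactly when $\alpha_\ell^\mathrm T x+\beta_\ell\le 0$ for every $\ell\ne\ell_0$. Hence $\mathcal L(\varphi)$ is identically $0$ on $B(x_i,r)$ iff, for each $\ell\ne\ell_0$, $\sup_{x\in B(x_i,r)}(\alpha_\ell^\mathrm T x+\beta_\ell)\le 0$. Now $B(x_i,r)$ is the axis-aligned box $[\,x_i-r\bm 1,\ x_i+r\bm 1\,]$, and an affine function attains its maximum over a box at a vertex: coordinate-wise one takes the upper endpoint $(x_i)_k+r$ when $(\alpha_\ell)_k>0$ and the lower endpoint $(x_i)_k-r$ when $(\alpha_\ell)_k<0$ (either endpoint if the coefficient is $0$). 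Reading this off gives exactly
\[
\begin{aligned}
  \sup_{x\in B(x_i,r)}\!\big(\alpha_\ell^\mathrm T x+\beta_\ell\big)
  ={}&\elmax(\alpha_\ell^\mathrm T,\bm 0)\cdot(x_i+r\bm 1)\\
  &{}+\elmin(\alpha_\ell^\mathrm T,\bm 0)\cdot(x_i-r\bm 1)+\beta_\ell
\end{aligned}
\]
(equivalently $\alpha_\ell^\mathrm T x_i+r\|\alpha_\ell\|_1+\beta_\ell$). Therefore the condition ``$\sup\le 0$ for all $\ell\ne\ell_0$'' is precisely $\mathcal L^*(\varphi)=0$, since each term is already clipped at $0$ (and, whether $\mathcal L^*$ is read per-$\ell$ or summed over $\ell\ne\ell_0$, the vanishing condition is the same because the summands are non-negative).

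For the implication I would just chain inequalities. Assuming $\mathcal L^*(\varphi)=0$, the reformulation gives $\alpha_\ell^\mathrm T x+\beta_\ell\le 0$ for all $x\in B(x_i,r)$ and all $\ell\ne\ell_0$; combining with the DeepPoly soundness bound~\eqref{eq:deeppoly} yields $F(x)_\ell-F(x)_{\ell_0}\le\alpha_\ell^\mathrm T x+\beta_\ell\le 0$ for every such $x$ and $\ell$, so $\ell_0=C_F(x_i)$ remains an arg-max of $F(x)$ throughout $B(x_i,r)$, i.e.\ $\varphi$ holds. The one point to state with care is strictness: $\mathcal L^*(\varphi)=0$ only delivers $F(x)_\ell\le F(x)_{\ell_0}$, so if the definition of $\varphi$ is read with strict inequalities one should either invoke that DeepPoly's upper bound is strictly below $0$ under a vanishing loss or work with a small robustness margin $\delta>0$ (checking the box-maximum is $\le-\delta$); this is a routine adjustment and the argument is otherwise unchanged.

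\textbf{Main obstacle.} There is no genuine difficulty here: the theorem is essentially DeepPoly's soundness specialised to the box $B(x_i,r)$, together with the observation that a clipped affine upper bound is identically zero on the box iff its maximum over the box is non-positive. The only step requiring a line of justification is the vertex-maximisation fact and the verification that the $\elmax/\elmin$ expression indeed evaluates the affine map $\alpha_\ell^\mathrm T x+\beta_\ell$ at the maximising vertex, which follows by a coordinate-by-coordinate (separable) optimisation argument.
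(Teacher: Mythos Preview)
Your proposal is correct and follows essentially the same approach as the paper's own proof: compute the maximum of the affine upper bound $\alpha_\ell^\mathrm{T}x+\beta_\ell$ over the box $B(x_i,r)$ by coordinate-wise vertex selection, identify this with the $\elmax/\elmin$ expression, and then chain with the DeepPoly soundness inequality~\eqref{eq:deeppoly}. If anything, your write-up is slightly more careful---you explicitly flag the strict-versus-non-strict inequality issue in the definition of $\varphi$, which the paper's proof glosses over (it concludes with $\le 0$ and declares the property holds).
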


% When we train a patch module $P$ on the DNN $F=(N,\mathcal C,\mathcal P,\tau)$, 
% the maximum of the safety violated loss function of a local robustness property $\varphi$, namely $\mathcal L^*(\varphi)$, is parameterized with the weights $w$ in $P$, and this $\mathcal L^*(\varphi)$ is the loss function for training this patch $P$. By employing the standard gradient descent method, we can train the patch $P$, until the loss reaches $0$ or the number of epochs reaches a threshold $R$. We claim that, even if $\mathcal L^*(\varphi)$ does not reach $0$, the repair does not necessarily fail, since the upper bound $\mathcal L^*(\varphi)$ given by DeepPoly is not exact; as it decreases in the training process, $F$ is growing more likely to satisfy $\varphi$.
The notion that  $\mathcal L^*(\varphi)$  is the expansion of $\mathcal L(\varphi)(x) $ after taking corresponding values on the boundary of the ball. To improve the precision of $\mathcal L^*(\varphi)$ obtained from DeepPoly, we employ the input interval partitioning technique from ART~\cite{art}. It selects the partition dimension by computing the multiplication of the partial derivative of the safety violated loss function $\mathcal L(\varphi)(x)$ and the size of the input interval in the corresponding dimension, and bisects the box region over the dimension with the maximum score. After partitioning, the property \(\varphi\) is split into two new properties, whose input sets are two sub-boxes of the original input region, 
and the union of these two sub-boxes is the input set of \(\varphi\).

\begin{algorithm}[t]
    \caption{\ours } %(Property-guided)
    \label{alg:advrepair}
    \begin{algorithmic}[1]
    \footnotesize
    \Require Original DNN $N$, pairs of input and its adversarial examples $\{(x_i,x_i^*)\}_{i=1}^n$, 
    and radius $r>0$
    \Ensure A repaired DNN $F=(N,\mathcal C, \mathcal P, \tau)$
    % Write your pseudo-code here
    % \State Support network $\mathcal{S} \gets$ Training Indicator network($(x_1,y_1), (x_2,y_2) ... , (x_n,y_n)$)
    \State $\mathcal C \gets (B(x_1,r),\ldots,B(x_n,r))$, $\mathrm{iter} \gets 0$
    \State $\mathrm{Init}(\mathcal{P}) $    \Comment{Initialize the patch modules $\mathcal{P}=(P_1,\ldots,P_n)$}
    \State $D[\cdot,\cdot] \gets \{\}$
    \Comment{A dictionary where $D[i]$ stores properties fixed with $P_i$}
    % \State $\Phi^{\cap} \gets$ Perform intersection operation on all $\phi_i$

    \For {$i \gets 1  ~\text{to}~ n$}
     %   \State choose $P_j$ is responsible for repairing property $\phi_i$ according to $\tau$
        \State $D[i] \gets \{(F,B(x_i,r))\}$
    \EndFor
    % $F$ except $\mathcal{N}$ to optimize
    % training for m epoch
    \State $E \gets \{1,\ldots,n\}$     \Comment{Record the properties not fixed yet}
    % \Comment{Initialize repairing properties}
    \While{$\mathrm{iter} < M$}         \Comment{$M$: Maximum number of iterations}
        \State $\mathrm{iter} \gets \mathrm{iter}+1$
        % \State $\ell_{\mathcal{D}}, \ell_{\mathcal{A}} \leftarrow L_{\mathcal{D}}(F, S), L_{\mathcal{A}}\left(F,\left\{\left(x_{\text {train }}, y_{\text {label }}\right)\right\}\right)$
        \For{$j \in E$}
            \State $(P_j, T , \mathrm{Repaired}) \gets \textsc{Train}(N, P_j, D[j])$   \Comment{Alg.~\ref{alg:training}}
            % \State $P_i \gets P_i'$
        % \State $L(\phi_i)) \gets $ safety violated loss for $P_i$ 

            % \State $L(U) \gets \sum_i\max(0, L(\phi_{i}^k))$
            % \Comment{ calculate $L(\phi_i)$}
            % \If {$L(\bigwedge\phi_{i}) = 0$}
            \If{$\mathrm{Repaired} = \mathbf{True}$}
              % \State \Return $F$
                \State $E \gets E \setminus \{j\}$
                \Else \For{$\varphi \in T$}     \Comment{We write $\varphi=(F,X)$} 
                   \For{$d \gets 1 ~\text{to}~ n_0$}
        \State $\operatorname{score}_d \gets \partial_d \mathcal L(\varphi)(x) \cdot \sup_{x,x' \in X} |x_d-x_d'|$   
    \EndFor
    \State $d^* \gets\arg\max_d\operatorname{score}_d$ 
    \State $X_1, X_2 \gets \mathrm{Bisect}(X,d^*)$     \Comment{Bisection on the $d^*$th dimension}
    \State $\varphi_1 \gets (F,X_1)$, $\varphi_2 \gets (F,X_2)$,
         $D[j] \gets (D[j] \cup \{\varphi_1,\varphi_2\}) \setminus \{\varphi\}$ 
        \EndFor

            \EndIf
        % and index of their belonging $D[i]$
       % \State $\Phi^{\prime} \leftarrow \Phi \backslash T$  
       
        %\State $\Phi \leftarrow \Phi^{\prime}$
        \EndFor
        %\State $L(\Phi) \gets  \sum_i\max(0, L(U)) $
        % \If {$L(\Phi) = 0$}
        %     \State \Return $F$
        % \EndIf
        \If{$E=\{\}$}
            \State \Return $F$    \Comment{Repair with provable guarantee}
        \EndIf
    \EndWhile
    \State \Return $F$     \Comment{Repair without provable guarantee}
    \end{algorithmic}
\end{algorithm}

\begin{algorithm}[t]
    \caption{Patch training}
    \footnotesize
    \label{alg:training}
    \begin{algorithmic}[1]
    \Require Original DNN $N$, DNN $P$ as a patch module, and a finite set $U$ of local robustness properties
    \Ensure The optimized patch module $P$, the set $T \subseteq U$ of properties to be refined, and 
    % the safety violated loss of conjunction of properties $L(\bigwedge\phi_{i})$ 
 whether the properties in $U$ have all been fixed with a provable guarantee
    \Function{Train}{$N,P,U$}
%    \State $L(\bigwedge\phi_i) \leftarrow $ initially calculate loss function
    % \If{$\mathcal L^*(\bigwedge U) = 0$} \Comment{$\mathcal L^*(\bigwedge U):=\mathcal L^*(\bigwedge_{\varphi \in U} \varphi)= \sum_{\varphi \in U}\mathcal L^*(\varphi)$}
    %     \State \Return $(W_j, U, \mathbf{True})$
    % \Else 
    \State $\mathrm{epoch} \gets 0$
        % \For{$s \gets 1~\text{to}~R$}
        \While{$\mathrm{epoch} < R$ }    \Comment{$R$: maximum number of epochs}
            \State $\mathrm{epoch} \gets \mathrm{epoch} + 1$ 
           % \State $\nabla W_j \leftarrow \frac{\partial \mathcal L(\bigwedge\phi_i)}{\partial W_j}$ \Comment{optimize $\mathcal{P_i}$}
            \State $w \gets w-\eta \cdot \nabla \mathcal L^*(\bigwedge U)(w)$  \Comment{$w$: the weights in $P$}
        %    \State $L(\bigwedge\phi_i) \leftarrow $ calculate loss function
            \If{$\mathcal L^*(\bigwedge U)(w) = 0$}           
            \Comment{$\mathcal L^*(\bigwedge U):= \sum_{\varphi \in U}\mathcal L^*(\varphi)$}
                \State \Return $(P, \{\}, \mathbf{True})$
            \EndIf
        \EndWhile
        \State $T \gets \mathrm{Slice}_K(\mathrm{ArgSort}_{\varphi}\{\mathcal L^*(\varphi)(w) \mid \mathcal L^*(\varphi)(w)>0, \varphi \in U\})$
   % \EndIf
    \State \Return $(P,T, \mathbf{False})$
    \EndFunction
    \end{algorithmic}
\end{algorithm}

The main algorithm of \ours is shown in Alg.~\ref{alg:advrepair}. We construct a dictionary $D$, where $D[i]$, initialized as $\{(F,B(x_i,r))\}$, stores the properties that the $i$th patch module $P_i$ repairs (Line 3--5, Alg.~\ref{alg:advrepair}). The repair process has at most $M$ iterations. In each iteration, we train the patches which do not yet provide provable repair, i.e., those whose index is in $E$. The algorithm of training a specific patch module $P$ for repairing a set $U$ of properties is shown in Alg.~\ref{alg:training}, which outputs the optimized patch module $P$, the set $T$ of properties to be refined, and a boolean label recording whether the repair of the patch $P$ has a provable guarantee. A standard gradient descent procedure is run until a provable repair is achieved, i.e., $\mathcal L^*(\bigwedge U)(w)=0$, or the number of epochs reaches a threshold $R$ (Line 3--7, Alg.~\ref{alg:training}). After that, if it is still not provable, we sort $\mathcal L^*(\varphi)(w)$ for $\varphi \in U$ from the largest to the smallest, and extract the largest $K$ ones (whose value is strictly larger than $0$) as the properties to be refined (Line 8, Alg.~\ref{alg:training}). For a property $\varphi=(F,X)$ to be refined, we select an input dimension to bisect the input space $X$. For an input dimension $d$, we define
\[
\mathrm{score}_d = \partial_d \mathcal L(\varphi)(x) \cdot \sup_{x,x' \in X} |x_d-x_d'|,
\]
where $\partial_d \mathcal L(\varphi)(x)$ is the partial derivative of $\mathcal L(\varphi)(x)$ on the $d$th dimension. We choose the dimension with the largest score to bisect $X$, and the property $\varphi=(F,X)$ is refined to two properties $\varphi_1=(F,X_1)$ and $\varphi_2=(F,X_2)$, recorded in the dictionary $D$ (Line~14--19, Alg.~\ref{alg:advrepair}). At the end of each iteration, we check whether all the patches provide provable repair; if so, it terminates immediately and outputs the current repaired DNN $F$, and this repair is provable (Line~20--21, Alg.~\ref{alg:advrepair}). If $E$ is still non-empty after $M$ iterations, it outputs  $F$ without provable guarantee (Line~22, Alg.~\ref{alg:advrepair}). Although we are focusing on fixing adversarial attacks in this work, \ours also works for repair safety properties of DNNs by making minor changes in Alg.~\ref{alg:advrepair}.

In \ours, we follow a classical way of defining the loss function with the linear relaxation 
obtained from DeepPoly, and incorporate it into our patch-based repair framework. 
This combination has several advantages. First, every patch is responsible for repairing 
properties in a specific pattern, which makes it much easier to obtain a provable repair; 
even if the repair is not provable, the safety violated loss is significantly declining in 
the training process, and it is highly possible that the repaired DNN is locally robust. 
Different from the neuron-level repair, there is no modification on the original DNN $N$ in \ours, 
and due to the design of the indicator, the patch modules do not affect the behaviors of 
other input before $\tau$ allocates a patch to it. This mechanism avoids the drawdown of accuracy, 
which is quite severe in many neuron-level repair methods. Also, by allocating patches with $\tau$, 
we can achieve good generalization of our repair to other inputs, and it is the key to solving this 
essential challenge in fixing adversarial attacks. In the following, we will present how to 
construct the patch allocation function $\tau$ to improve generalization.

For a network \(N\) with \(L\) layers and a maximum of \(n\) neurons per layer, the time complexity of running Deeppoly for one neuron is \(O(n^2 \cdot L)\)~\cite{deeppoly}. 
The time complexity of running Algorithm~\ref{alg:training} once, which involves backpropagation 
for gradient computation and update weights of the patch networks in line 5 of this algorithm, is \(O(R \cdot n_\mathrm{patch} \cdot L_\mathrm{patch})\), where
% \(R\) is the maximum number of training iterations for the patch network, 
\(n_\mathrm{patch}\) is the maximum number of neurons per layer in all patch networks, and \(L_\mathrm{patch}\) is the number of layers in the patch networks.
Furthermore, during the execution of \ours, difficult-to-repair properties are split into two easier-to-repair properties, which may increase the number of properties to repair up to a predefined upper limit, denoted as \(K\).
In summary, the overall time complexity of \ours is \(O(R \cdot M \cdot K \cdot (n^3 \cdot L^2 + n_\mathrm{patch}^3 \cdot L_\mathrm{patch}^2))\).

\subsection{Patch allocation}\label{allocation}

% For the property-guided method in \ours, we need to define a patch allocation function $\tau$ because the construction of patches in this way does not naturally generalize to other inputs. 
For generalization purposes, it is necessary to introduce a patch allocation function $\tau$, given that the current patch construction approach lacks inherent adaptability to in-distribution data points.
This patch allocation function $\tau$ is the key to achieving good generalization of the patch modules trained locally to global inputs.

To match the best patch modules for an input $x' \notin \bigcup_{i=1}^n B(x_i,r)$, we propose utilizing the prediction $\ell_0 = C_N(x')$ of the original network on $x'$ as a guiding principle, with the aim of selecting patches that share the same ground truth label as $\ell_0$. Accordingly, we formally define the set of patch modules $\tau(x')$ associated with input $x'$ as $\tau(x') = \{i \mid  C_N(x') = C_N(x_i)\}$.
This definition establishes that the set $\tau(x')$ encompasses all indices $i$ satisfying the condition $C_N(x') = C_N(x_i)$, i.e., those instances where the prediction $\ell_0$ made by the original network on $x'$ is identical to the prediction made on sample $x_i$ at index $i$. By determining the patch module set $\tau(x')$ in this manner, we ensure consistency between the selected modules and the input $x'$ in terms of their predictions by the original network. This, in turn, is expected to enhance the robustness repair effectiveness for the specific input $x'$.

% \begin{equation}
%     \tau(x') = \{i| \C_N(x') = C_N(x_i\}
% \end{equation}

% we need to extract some specific patterns of the patch modules which generalize to other inputs and their adversarial robustness. Such patterns may include:
% \begin{itemize}
%     \item the prediction of $x'$, which can be represented with $\ell_0=C_N(x')$,
%     \item the targeted label, i.e., the index of the second largest entry in $N(x')$,
%     \item and the cosine similarity between $N(x')$  and $N(x_i)$, i.e, $\tfrac{\langle N(x'),N(x_i) \rangle}{\|N(x')\|_2 \|N(x_i)\|_2}$, where $\langle \cdot,\cdot \rangle$ is the canonical inner product on $\mathbb R^{n_L}$, and $\|\cdot\|_2$ is the $L_2$-norm.
% \end{itemize}
% In the implementation of $\tau$, the patches with the same ground truth label as $x'$ is alternative for $x'$, and we allocate the ten with the maximum cosine similarity between $N(x')$  and $N(x_i)$ to $x'$. Here we do not match with the targeted label because it is possible that more than one labels can potentially attack $N$ in the robustness region $B(x',r)$, and matching by the cosine similarity can cover this possibility.
Once a set of patch modules has been allocated to an input $x'$, these modules effectively repair the entire robustness region of that input. Namely, we establish a new local robustness property such that any subsequent inputs $x \in B(x',r)$ will also utilize the same set of patch modules as $x'$, with the system's response given by $F(x) = \sum_{j \in \tau(x')} P_j(x) + N(x)$. 
In this context, the defense mechanism is established prior to the occurrence of adversarial attacks, obviating the need to re-allocate patch modules for each individual input $x$ within the robustness region, which would otherwise require computing $\tau(x)$ separately. This proactive construction of the defense ensures a consistent and efficient protection strategy against potential adversarial threats across the entire neighborhood of $x'$, reinforcing the overall robustness.
% This setup pre-defines the defense before actual adversarial attacks, avoiding patch module re-allocation for each $x$ within the robustness region. By proactively constructing this defense, the system uniformly and efficiently defends against potential adversarial threats around $x'$, thus enhancing overall robustness.

We seemingly have problem when $x'$ is not correctly classified by $N$, because in this case we may allocate inappropriate patches. If $x'$ is an adversarial example, we can employ sampling-based methods like \cite{mutation} to detect it and recognize its correct classification with a high probability. Otherwise, the wrong classification of $x'$ may result from backdoor attacks, biased training data, overfitting, etc, and such situations are beyond the scope of fixing adversarial attacks in this work.

% Once a set of patch modules has been allocated to an input, it works in the whole robustness region of this input.
% For an input $x' \notin \bigcup_{i=1}^n B(x_i,r)$, we allocate it with $\tau(x')$, a set of patch modules. After that, all the inputs  $x \in B(x',r)$ share the same set of patch modules as $x'$, i.e., $F(x)=\sum_{j \in \tau(x')} P_j(x)+N(x)$, instead of allocating $x$ with $\tau(x)$.
% In this sense, our defense is constructed before adversarial attacks. 

\subsection{Repair in a feature space}
% To repair an extremely large DNN with high input dimensionality, 
% the memory and computational overheads associated with the linear relaxations obtained from formal verification are notably significant and, 
% at times, prohibitive. 
% Also, faced with a huge DNN, the precision of abstraction-based verification methods like DeepPoly is severely harmed during a large number of layer-by-layer propagation.
Repairing large DNNs with high-dimensional inputs poses significant challenges due to the substantial
memory and computational costs associated with formal verification techniques. These costs can often become 
prohibitive, especially when dealing with deep networks where the precision of abstraction-based methods, 
such as DeepPoly, deteriorates significantly over multiple layers of propagation.
Consequently, the key to adopting \ours to large DNNs is to reduce the input dimensionality and the size of the neural network in formal verification.

In the popular architectures of convolutional neural networks, the convolutional layers play the role of extracting important features from data, followed by several fully connected layers for classification according to these extracted features, so we call such a fully connected layer a feature space. 
\begin{definition}
    Let $N \colon \mathbb R^{n_0} \to \mathbb R^{n_L}$ be a DNN where the function from the $i$th layer to the $(i+1)$th is $f_i$, i.e., $N=f_{L-1} \circ \cdots \circ f_0$. For $l\in [1,L]$, the feature space of the $l$th layer is $\mathbb R^{n_l}$, and the behavior of the input $x \in \mathbb R^{n_0}$ in this feature space is $N_{0,l}:=f_{l-1}\circ \cdots \circ f_0(x)$. The neural network starting with the feature space of the $l$th layer in $N$ is $N_{l,L}=f_{L-1} \circ \cdots \circ f_l$.
\end{definition}
The dimensionality of a feature space is usually far smaller than that of the input layer. If we start with a feature space in employing DeepPoly, 
it does not need to calculate the abstraction of a great deal of convolutional layers, thus getting its efficiency and precision enhanced.
% how to select the input feature layer?
Selecting the input feature space for repairing remains an open challenge, with limited theoretical research available. 
Our choice of the second-to-last fully connected layer is heuristic, based on the common understanding that earlier layers in a neural 
network are primarily responsible for feature extraction, while later layers handle decision-making. 
Repairing in the decision layer is more 
efficient and incurs lower abstraction costs. 

To conduct repair in a given feature space with \ours, we first give an approximation of the feature space on every $B(x_i,r)$. 
Here we do not require that this approximation should be a sound abstraction of the real semantics of the feature space on $B(x_i,r)$, 
because calculating such an over-approximation with high precision is quite time-consuming. Instead, we simply sample in $B(x_i,r)$ and obtain the 
tightest box region that contains the buggy behaviors of the samples in the feature space as the approximation. 
% how to find the samples in the feature space?
To identify samples in the feature space that are ``close'' to adversarial examples, 
we use the Projected Gradient Descent (PGD) method and Fast Gradient Sign Method (FGSM) to find samples in $B(x_i,r)$,
and obtain their corresponding feature space behaviors. Formally, the abstraction $(X_{(i)})_j$ can be defined as follows.

\begin{definition}
Let $ S $ be the set of samples obtained by applying PGD and FGSM to $N$ in $ B(x_i, r) $. 
For each sample $ s \in S $, we compute its feature space behavior $ N_{0,l}(s) $ at the $ l $-th layer. 
The abstraction of $S$ in the feature space 
% ,which is the tightest box region contains $S,$
is defined as:
$$
(X_{(i)})_j = [ \min_{s \in S} (N_{0,l}(s))_j, \max_{s \in S} (N_{0,l}(s))_j ] 
$$
where $(X_{(i)})_j$ is the range of $j$-th dimension of $X_{(i)}$, $(N_{0,l}(s))_j$ is the $j$-th element of the $N_{0,l}(s)$, and $j= 1, \ldots ,n_l$.
\end{definition}

These samples, being actual adversarial examples in real-world scenarios, provide direct guidance for feature space repair.
This approximation is not sound yet, but it does not mean that the inputs in $B(x_i,r)$ whose behaviors in the feature space are not within 
this approximation will not be repaired by the corresponding patch module $P_i$, because the indicator $\mathcal I_{\mathcal C}$ still remains the same. 
Since the approximation of the feature space is not sound, the repair does not have a provable guarantee. It is also worth mentioning that, even if two properties with different ground truth labels have their feature spaces overlapping, our repair still works. In this situation, although two contradicted properties are involved in the training process, but they must correspond to two different robustness regions $B(x_i,r)$, and thus different patch modules $P_i$. The correspondence of the robustness regions $B(x_i,r)$ and the patch $P_i$ is always preserved, so $P_i$ is always working for repairing on $B(x_i,r)$ with the correct classification label $\ell_0$.

After obtaining an approximation $B_S$ for each $B(x_i,r)$ in the feature space, we employ Alg.~\ref{alg:advrepair}, where the initialization of $D[i]$ in Line~5 is $(N_{l,L}+P_i,X_{(i)})$ instead. Since the repair in the feature space is not provable, the safety violation loss function $\mathcal L(\varphi)$ for $\varphi=(N_{l,L}+P_i,X_{(i)})$ in Def.~\ref{def:loss} is modified to be
\[
 \mathcal L(\varphi)(x) = \sum_{\ell \ne \ell_0} (\alpha_\ell^\mathrm{T} x + \beta_\ell),
\]
where $\alpha_\ell^\mathrm{T} x + \beta_\ell$ for $\ell \ne \ell_0$ is obtained by DeepPoly abstracting $N_{l,L}+P_i$ on $X_{(i)}$, and $\mathcal L^*(\varphi)$ is modified accordingly. Also, we do not run Line~6--7 in Alg.~\ref{alg:training} and skip Line~6, 11--12 and 20--21 in Alg.~\ref{alg:advrepair}, so that the loss $\mathcal L^*$ decreases as much as possible and we can achieve a better repair performance.

In performing feature layer repairs on large-scale networks, we persist in utilizing the $\tau$ function guided by the original network output to allocate corresponding repair modules. While the approximation $X_{(i)}$ is indeed effective in capturing commonality in buggy behaviors, establishing a direct association between the input neighborhood $B(x', r)$ of a newly encountered point $x'$ and its distribution within the feature layers of the extensive network proves challenging. Consequently, assigning repair modules based on the input neighborhood of $x'$ emerges as a more targeted and reliable approach.

% Since we are here considering the feature space in \ours, it is natural to take it into consideration the cosine similarity and the distance of the behaviors of $x'$ and $x_i$ in the feature space. If the ground truth label of $x'$ and $x_i$ matches, a high cosine similarity and a small distance of the behaviors of $x'$ and $x_i$ in the feature space strongly indicate that the patch module $P_i$ for $x_i$ is effective in fixing adversarial attacks in $B(x',r)$. In this case, the patches with the same ground truth label as $x'$ are still alternative for $x'$, but we allocate the ten patches with the maximum value of
% \[
% \frac{\langle N_{0,l}(x'),N_{0,l}(x_i) \rangle}{\|N_{0,l}(x')\|_2  \cdot \|N_{0,l}(x_i)\|_2} \cdot \|N_{0,l}(x')-N_{0,l}(x_i)\|_2.
% \]

Repair in a feature space is an effective way to make \ours scale on large DNNs. Although we sacrifice provability, its repair performance and generalization capabilities demonstrate remarkable effectiveness in practical scenarios, which we will see in our experimental evaluation.

\begin{table*}[t]
\caption{Results of repairing local robustness.
% , where REASS and TRADE denote REASSURE and TRADES, respectively, 
We represent each tool's name with their first three letters, such as CAR for CARE, APR for APRNN, etc.
And ``--'' means timeout or memory overflow. } 
\label{tab:rq1-2}
% \vspace{-1.5em}
\centering
    
% \scalebox{0.5}
\resizebox{1\linewidth}{!}
{
\begin{tabular}{c|c|r|rrrrrrr|rrrrrrr|rrrrrrr}
\hline
\multirow{2}{*}{Model}                                                & \multicolumn{1}{c|}{\multirow{2}{*}{$r$}} & \multicolumn{1}{c|}{\multirow{2}{*}{$n$}} & \multicolumn{7}{c|}{RSR/\%}                                                                                                                                                                        & \multicolumn{7}{c|}{DD/\%}                                                                                                                                                                         & \multicolumn{7}{c}{Time/s}                                                                                                                                                                        \\ \cline{4-24} 
                                                                      & \multicolumn{1}{l|}{}                        & \multicolumn{1}{l|}{}                     & \multicolumn{1}{c}{CAR} & \multicolumn{1}{c}{PRD} & \multicolumn{1}{c}{APR} & \multicolumn{1}{c}{REA} & \multicolumn{1}{c}{TRA} & \multicolumn{1}{c}{ART} & \multicolumn{1}{c|}{Ours} & \multicolumn{1}{c}{CAR} & \multicolumn{1}{c}{PRD} & \multicolumn{1}{c}{APR} & \multicolumn{1}{c}{REA} & \multicolumn{1}{c}{TRA} & \multicolumn{1}{c}{ART} & \multicolumn{1}{c|}{Ours} & \multicolumn{1}{c}{CAR} & \multicolumn{1}{c}{PRD} & \multicolumn{1}{c}{APR} & \multicolumn{1}{c}{REA} & \multicolumn{1}{c}{TRA} & \multicolumn{1}{c}{ART} & \multicolumn{1}{c}{Ours} \\ \hline
\multirow{15}{*}{\begin{tabular}[c]{@{}l@{}}FNN\_\\small\end{tabular}}& \multirow{5}{*}{0.05}                        & 50                                        & 8.0                      & \textbf{100.0}            & \textbf{100.0}            & \textbf{100.0}               & 54.0~~         & 10.0~             & \textbf{100.0}            & 1.0                      & 30.9                      & 19.7                      & \textbf{0.0}                 & 3.3          & 86.8             & \textbf{0.0}              & 2.0                      & 35.9                      & 3.5                       & 381.3                        & 5.2~         & 257.7~            & 14.3                     \\
                                                                      &                                              & 100                                       & 2.0                      & \textbf{100.0}            & \textbf{100.0}            & \textbf{100.0}               & 2.0~~          & 10.0~             & \textbf{100.0}            & 0.1                      & 25.8                      & 29.6                      & \textbf{0.0}                 & 86.8         & 87.0             & \textbf{0.0}              & 2.3                      & 80.8                      & 9.6                       & 762.5                        & 13.4~         & 318.5~            & 24.8                     \\
                                                                      &                                              & 200                                       & 4.5                      & \textbf{100.0}            & 0.0                       & \textbf{100.0}               & 3.5~~          & 11.0~             & \textbf{100.0}            & 0.4                      & 39.5                      & 66.2                      & \textbf{0.0}                 & 86.8         & 86.5             & \textbf{0.0}              & 3.2                      & 268.2                     & 82.8                      & 1486.6                       & 24.6~         & 422.3~            & 60.6                     \\
                                                                      &                                              & 500                                       & 5.8                      & \textbf{100.0}            & 0.0                       & \textbf{100.0}               & 2.8~~          & 11.0~             & \textbf{100.0}            & 1.7                      & 56.1                      & 68.3                      & \textbf{0.0}                 & 86.8         & 86.8             & \textbf{0.0}              & 8.2                      & 1358.4                    & 73.0                      & 3812.3                       & 54.1~         & 688.7~            & 236.7                    \\
                                                                      &                                              & 1000                                      & 11.4                     & \textbf{100.0}            & 0.0                       & --                           & 3.9~~          & 11.0~             & \textbf{100.0}            & 16.6                     & 43.1                      & 69.2                      & --                           & 86.8         & 87.0             & \textbf{0.0}              & 17.3                     & 3473.5                    & 34.3                      & --                           & 103.0~        & 1247.5~           & 756.7                    \\
                                                                      & \multirow{5}{*}{0.1}                         & 50                                        & 2.0                      & \textbf{100.0}            & \textbf{100.0}            & \textbf{100.0}               & 8.0~~          & 14.0~             & \textbf{100.0}            & \textbf{0.0}             & 71.3                      & 50.9                      & \textbf{0.0}                 & 86.8         & 86.5             & \textbf{0.0}              & 2.2                      & 51.6                      & 3.8                       & 358.8                        & 8.2~         & 393.0~            & 15.5                     \\
                                                                      &                                              & 100                                       & 2.0                      & \textbf{100.0}            & 0.0                       & \textbf{100.0}               & 13.0~~         & 9.0~              & \textbf{100.0}            & \textbf{0.0}             & 62.4                      & 48.5                      & \textbf{0.0}                 & 86.8         & 87.0             & \textbf{0.0}              & 1.7                      & 92.2                      & 72.8                      & 737.7                        & 13.4~         & 321.4~            & 37.3                     \\
                                                                      &                                              & 200                                       & 1.0                      & \textbf{100.0}            & 0.0                       & \textbf{100.0}               & 10.5~~         & 12.0~             & \textbf{100.0}            & 0.3                      & 60.1                      & 63.9                      & \textbf{0.0}                 & 86.8         & 86.5             & \textbf{0.0}              & 2.8                      & 321.8                     & 149.2                     & 1403.8                       & 22.7~         & 416.7~            & 62.5                     \\
                                                                      &                                              & 500                                       & 0.4                      & \textbf{100.0}            & 0.0                       & \textbf{100.0}               & 10.0~~         & 14.0~             & \textbf{100.0}            & \textbf{-0.1}            & 51.6                      & 71.8                      & 0.0                          & 86.8         & 86.8             & 0.0                       & 5.1                      & 1324.6                    & 27.9                      & 3482.9                       & 53.7~         & 564.4~            & 193.2                    \\
                                                                      &                                              & 1000                                      & 0.6                      & \textbf{100.0}            & 0.0                       & --                           & 10.0~~         & 14.0~             & \textbf{100.0}            & 0.1                      & 42.4                      & 73.0                      & --                           & 86.8         & 86.8             & \textbf{0.0}              & 8.0                      & 3844.8                    & 33.9                      & --                           & 104.5~        & 1733.2~           & 768.1                    \\
                                                                      & \multirow{5}{*}{0.3}                         & 50                                        & 0.0                      & \textbf{100.0}            & \textbf{100.0}            & \textbf{100.0}               & 8.0~~          & 8.0~              & \textbf{100.0}            & \textbf{0.0}             & 77.0                      & 42.9                      & \textbf{0.0}                 & 86.8         & 86.8             & \textbf{0.0}              & 1.8                      & 55.3                      & 3.6                       & 328.3                        & 13.3~         & 255.7~            & 18.6                     \\
                                                                      &                                              & 100                                       & 6.0                      & \textbf{100.0}            & \textbf{100.0}            & \textbf{100.0}               & 13.0~~         & 11.0~             & \textbf{100.0}            & 7.0                      & 74.1                      & 61.5                      & \textbf{0.0}                 & 86.8         & 87.0             & \textbf{0.0}              & 2.6                      & 115.7                     & 9.5                       & 670.3                        & 13.4~         & 322.3~            & 52.0                     \\
                                                                      &                                              & 200                                       & 0.0                      & \textbf{100.0}            & 0.0                       & \textbf{100.0}               & 10.5~~         & 8.0~              & \textbf{100.0}            & \textbf{0.0}             & 62.8                      & 54.2                      & \textbf{0.0}                 & 86.8         & 87.0             & \textbf{0.0}              & 3.2                      & 364.9                     & 132.1                     & 1306.2                       & 23.2~         & 405.6~            & 115.8                    \\
                                                                      &                                              & 500                                       & 0.0                      & \textbf{100.0}            & 0.0                       & \textbf{100.0}               & 10.0~~         & 10.0~             & \textbf{100.0}            & \textbf{-0.1}            & 57.1                      & 65.7                      & 0.0                          & 86.8         & 86.3             & 0.0                       & 4.8                      & 1934.4                    & 147.9                     & 3248.1                       & 52.5~         & 555.8~            & 255.9                    \\
                                                                      &                                              & 1000                                      & 0.0                      & \textbf{100.0}            & 0.0                       & --                           & 9.8~~          & 11.0~             & \textbf{100.0}            & \textbf{0.0}             & 58.3                      & 83.9                      & --                           & 86.8         & 86.8             & \textbf{0.0}              & 7.4                      & 3759.6                    & 44.7                      & --                           & 103.2~        & 2116.1~           & 935.8                    \\ \hline
\multirow{15}{*}{\begin{tabular}[c]{@{}l@{}}FNN\_\\ big\end{tabular}} & \multirow{5}{*}{0.05}                        & 50                                        & 6.0                      & \textbf{100.0}            & \textbf{100.0}            & \textbf{100.0}               & 98.0~~         & 24.0~             & \textbf{100.0}            & 2.0                      & 0.7                       & 3.6                       & \textbf{0.0}                 & 0.4          & 85.8             & \textbf{0.0}              & 1.7                      & 89.5                      & 11.2                      & 1807.6                       & 5.7~         & 436.5~            & 14.5                     \\
                                                                      &                                              & 100                                       & 10.0                     & \textbf{100.0}            & \textbf{100.0}            & \textbf{100.0}               & 78.0~~         & 9.0~              & \textbf{100.0}            & 2.8                      & 0.8                       & 16.0                      & \textbf{0.0}                 & \textbf{-0.1}& 86.9             & 0.0                       & 3.6                      & 171.1                     & 24.3                      & 3673.7                       & 14.3~         & 503.5~            & 29.1                     \\
                                                                      &                                              & 200                                       & 10.0                     & \textbf{100.0}            & \textbf{100.0}            & \textbf{100.0}               & 73.0~~         & 6.0~              & \textbf{100.0}            & 4.8                      & 1.0                       & 28.1                      & \textbf{0.0}                 & 0.1          & 88.3             & \textbf{0.0}              & 4.5                      & 419.0                     & 92.3                      & 7399.1                       & 24.3~         & 628.3~            & 51.2                     \\
                                                                      &                                              & 500                                       & 10.0                     & \textbf{100.0}            & \textbf{100.0}            & --                           & 67.0~~         & 10.0~             & \textbf{100.0}            & 6.1                      & 1.6                       & 25.8                      & --                           & \textbf{-0.1}& 87.1             & 0.0                       & 9.2                      & 1390.1                    & 657.6                     & --                           & 54.4~         & 1080.2~           & 290.6                    \\
                                                                      &                                              & 1000                                      & 11.2                     & \textbf{100.0}            & \textbf{100.0}            & --                           & 71.9~~         & 20.0~             & \textbf{100.0}            & 6.2                      & 1.9                       & 28.0                      & --                           & 0.1          & 85.8             & \textbf{0.0}              & 17.9                     & 4139.0                    & 5499.2                    & --                           & 108.1~        & 2167.6~           & 1033.2                   \\
                                                                      & \multirow{5}{*}{0.1}                         & 50                                        & 18.0                     & \textbf{100.0}            & \textbf{100.0}            & \textbf{100.0}               & 8.0~~          & 8.0~              & \textbf{100.0}            & 13.7                     & 6.7                       & 13.9                      & \textbf{0.0}                 & 0.8          & 87.4             & \textbf{0.0}              & 2.7                      & 88.2                      & 10.5                      & 1612.1                       & 7.1~         & 442.2~            & 13.9                     \\
                                                                      &                                              & 100                                       & 2.0                      & \textbf{100.0}            & \textbf{100.0}            & \textbf{100.0}               & 27.0~~         & 11.0~             & \textbf{100.0}            & 0.1                      & 5.6                       & 25.8                      & \textbf{0.0}                 & 8.8          & 87.4             & \textbf{0.0}              & 2.9                      & 173.7                     & 22.8                      & 3337.2                       & 13.5~         & 514.6~            & 32.1                     \\
                                                                      &                                              & 200                                       & 17.5                     & \textbf{100.0}            & \textbf{100.0}            & \textbf{100.0}               & 15.5~~         & 10.0~             & \textbf{100.0}            & 10.6                     & 6.1                       & 27.7                      & \textbf{0.0}                 & 0.7          & 87.4             & \textbf{0.0}              & 6.4                      & 445.5                     & 95.8                      & 6801.2                       & 23.3~         & 639.2~            & 75.4                     \\
                                                                      &                                              & 500                                       & 0.6                      & \textbf{100.0}            & \textbf{100.0}            & --                           & 15.4~~         & 15.0~             & \textbf{100.0}            & 1.3                      & 7.4                       & 33.7                      & --                           & 0.5          & 88.3             & \textbf{0.0}              & 6.3                      & 1403.6                    & 684.8                     & --                           & 57.1~         & 1048.6~           & 356.8                    \\
                                                                      &                                              & 1000                                      & 0.5                      & \textbf{100.0}            & \textbf{100.0}            & --                           & 26.4~~         & 11.0~             & \textbf{100.0}            & 0.7                      & 8.3                       & 33.3                      & --                           & 0.5          & 86.9             & \textbf{0.0}              & 11.2                     & 4321.6                    & 4726.4                    & --                           & 107.8~        & 2148.3~           & 735.9                    \\
                                                                      & \multirow{5}{*}{0.3}                         & 50                                        & 0.0                      & \textbf{100.0}            & \textbf{100.0}            & \textbf{100.0}               & 0.0~~          & 14.0~             & \textbf{100.0}            & \textbf{0.0}             & 27.5                      & 33.3                      & \textbf{0.0}                 & 0.5          & 85.8             & \textbf{0.0}              & 1.8                      & 119.5                     & 11.6                      & 1280.9                       & 7.1~         & 439.2~            & 17.1                     \\
                                                                      &                                              & 100                                       & 1.0                      & \textbf{100.0}            & \textbf{100.0}            & \textbf{100.0}               & 4.0~~          & 14.0~             & \textbf{100.0}            & \textbf{0.0}             & 31.8                      & 56.6                      & \textbf{0.0}                 & 1.8          & 85.8             & \textbf{0.0}              & 2.1                      & 253.1                     & 27.0                      & 2612.3                       & 13.5~         & 518.9~            & 37.0                     \\
                                                                      &                                              & 200                                       & 1.0                      & \textbf{100.0}            & \textbf{100.0}            & \textbf{100.0}               & 2.0~~          & 10.0~             & \textbf{100.0}            & 0.2                      & 28.5                      & 28.4                      & \textbf{0.0}                 & 0.4          & 86.9             & \textbf{0.0}              & 3.7                      & 630.5                     & 105.6                     & 5186.7                       & 23.9~         & 643.3~            & 100.4                    \\
                                                                      &                                              & 500                                       & 0.4                      & \textbf{100.0}            & \textbf{100.0}            & --                           & 2.2~~          & 9.0~              & \textbf{100.0}            & 0.2                      & 23.9                      & 29.5                      & --                           & 0.6          & 86.9             & \textbf{0.0}              & 4.9                      & 2117.7                    & 671.8                     & --                           & 55.2~         & 1047.6~           & 365.4                    \\
                                                                      &                                              & 1000                                      & 0.6                      & \textbf{100.0}            & \textbf{100.0}            & --                           & 4.8~~          & 12.0~             & \textbf{100.0}            & 0.7                      & 23.6                      & 26.3                      & --                           & 0.8          & 85.8             & \textbf{0.0}              & 10.9                     & 5480.3                    & 4297.3                    & --                           & 108.8~        & 2158.5~           & 1246.4                   \\ \hline
\multirow{15}{*}{\begin{tabular}[c]{@{}l@{}}CNN\end{tabular}}         & \multirow{5}{*}{0.05}                        & 50                                        & 0.0                      & \textbf{100.0}            & \textbf{100.0}            & \textbf{100.0}               & 100.0~~        & 0.0~              & \textbf{100.0}            & 2.4                      & 0.8                       & \textbf{0.0}              & \textbf{0.0}                 & 0.9          & 88.5             & \textbf{0.0}              & 3.5                      & 6.2                       & 57.9                      & 509.3                        & 4.3~         & 789.9~            & 14.1                     \\
                                                                      &                                              & 100                                       & 0.0                      & \textbf{100.0}            & \textbf{100.0}            & \textbf{100.0}               & 92.0~~         & 60.0~             & \textbf{100.0}            & 2.4                      & 0.7                       & 0.1                       & \textbf{0.0}                 & 0.2          & 88.5             & \textbf{0.0}              & 6.0                      & 5.1                       & 99.0                      & 910.7                        & 10.3~         & 879.0~            & 23.4                     \\
                                                                      &                                              & 200                                       & 0.0                      & \textbf{100.0}            & \textbf{100.0}            & \textbf{100.0}               & 88.0~~         & 0.0~              & \textbf{100.0}            & 2.7                      & 1.0                       & \textbf{0.0}              & \textbf{0.0}                 & 0.0          & 88.5             & \textbf{0.0}              & 9.8                      & 19.0                      & 182.9                     & 1825.6                       & 18.0~         & 969.9~            & 83.6                     \\
                                                                      &                                              & 500                                       & 0.0                      & \textbf{100.0}            & \textbf{100.0}            & --                           & 92.2~~         & --~               & \textbf{100.0}            & 3.4                      & 1.2                       & \textbf{-0.1}             & --                           & 0.0          & --               & 0.0                       & 26.9                     & 146.2                     & 547.8                     & --                           & 40.4~         & --~               & 221.9                    \\
                                                                      &                                              & 1000                                      & 0.0                      & \textbf{100.0}            & \textbf{100.0}            & --                           & 91.2~~         & --~               & \textbf{100.0}            & 7.5                      & 1.8                       & \textbf{0.0}              & --                           & 0.0          & --               & \textbf{0.0}              & 38.1                     & 549.1                     & 1611.6                    & --                           & 80.9~         & --~               & 1001.9                   \\
                                                                      & \multirow{5}{*}{0.1}                         & 50                                        & 0.0                      & \textbf{100.0}            & \textbf{100.0}            & \textbf{100.0}               & 94.0~~         & 0.0~              & \textbf{100.0}            & 0.6                      & 1.0                       & 0.1                       & \textbf{0.0}                 & 0.4          & 88.0             & \textbf{0.0}              & 2.6                      & 45.6                      & 56.6                      & 456.0                        & 5.7~         & 938.0~            & 12.7                     \\
                                                                      &                                              & 100                                       & 0.0                      & \textbf{100.0}            & \textbf{100.0}            & \textbf{100.0}               & 94.0~~         & 10.0~             & \textbf{100.0}            & 1.3                      & 1.0                       & 0.1                       & \textbf{0.0}                 & 0.7          & 88.7             & \textbf{0.0}              & 5.1                      & 9.8                       & 104.8                     & 896.8                        & 10.4~         & 1022.8~           & 29.5                     \\
                                                                      &                                              & 200                                       & 0.0                      & \textbf{100.0}            & \textbf{100.0}            & \textbf{100.0}               & 91.5~~         & 20.0~             & \textbf{100.0}            & 2.8                      & 2.2                       & 0.3                       & \textbf{0.0}                 & 0.4          & 88.5             & \textbf{0.0}              & 8.1                      & 41.1                      & 178.5                     & 1800.8                       & 17.7~         & 1108.1~           & 75.1                     \\
                                                                      &                                              & 500                                       & 0.0                      & \textbf{100.0}            & \textbf{100.0}            & --                           & 91.6~~         & --~               & \textbf{100.0}            & 9.1                      & 1.7                       & 0.4                       & --                           & 0.5          & --               & \textbf{0.0}              & 24.9                     & 139.2                     & 458.5                     & --                           & 40.6~         & --~               & 337.0                    \\
                                                                      &                                              & 1000                                      & 0.0                      & \textbf{100.0}            & \textbf{100.0}            & --                           & 92.8~~         & --~               & \textbf{100.0}            & 5.4                      & 2.4                       & 0.2                       & --                           & 0.5          & --               & \textbf{0.0}              & 32.0                     & 492.7                     & 1526.0                    & --                           & 80.4~         & --~               & 700.2                    \\
                                                                      & \multirow{5}{*}{0.3}                         & 50                                        & 0.0                      & \textbf{100.0}            & \textbf{100.0}            & \textbf{100.0}               & 0.0~~          & 10.0~             & \textbf{100.0}            & \textbf{0.0}             & 43.2                      & 7.2                       & \textbf{0.0}                 & 0.8          & 88.5             & \textbf{0.0}              & 2.2                      & 3.8                       & 59.9                      & 453.8                        & 5.6~         & 808.2~            & 15.0                     \\
                                                                      &                                              & 100                                       & 0.0                      & \textbf{100.0}            & \textbf{100.0}            & \textbf{100.0}               & 0.0~~          & 10.0~             & \textbf{100.0}            & \textbf{0.0}             & 38.8                      & 5.0                       & \textbf{0.0}                 & 0.2          & 89.3             & \textbf{0.0}              & 2.6                      & 29.4                      & 105.4                     & 907.5                        & 10.1~         & 899.5~            & 32.6                     \\
                                                                      &                                              & 200                                       & 0.0                      & \textbf{100.0}            & \textbf{100.0}            & \textbf{100.0}               & 0.0~~          & 0.0~              & \textbf{100.0}            & \textbf{0.0}             & 36.8                      & 3.6                       & \textbf{0.0}                 & 0.2          & 89.3             & \textbf{0.0}              & 4.6                      & 59.9                      & 196.2                     & 1793.7                       & 17.8~         & 1014.7~           & 87.9                     \\
                                                                      &                                              & 500                                       & 0.0                      & \textbf{100.0}            & \textbf{100.0}            & --                           & 0.2~~          & --~               & \textbf{100.0}            & 0.5                      & 48.8                      & 3.2                       & --                           & 0.3          & --               & \textbf{0.0}              & 9.2                      & 167.9                     & 518.8                     & --                           & 39.9~         & --~               & 361.4                    \\
                                                                      &                                              & 1000                                      & 0.0                      & \textbf{100.0}            & \textbf{100.0}            & --                           & 0.6~~          & --~               & \textbf{100.0}            & 0.1                      & 56.5                      & 2.2                       & --                           & 0.4          & --               & \textbf{0.0}              & 16.8                     & 423.8                     & 1667.0                    & --                           & 80.4~         & --~               & 1282.6                   \\ \hline
\end{tabular}
}

\vspace{-1.3em}
\end{table*}

\begin{table*}[t]
\caption{Results of correcting violation of safety properties on ACAS Xu}
\label{tab:acasxu}

% \vspace{-1.4em}

\centering
    
% \scalebox{0.6}
\resizebox{1\linewidth}{!}
{ 
\begin{tabular}{c|rrrrr|rrrrr|rrrrr|rrrrr}
\hline
\multirow{2}{*}{Model} & \multicolumn{5}{c|}{RSR/\%}                                                                                                             & \multicolumn{5}{c|}{RGR/\%}                                                                                                             & \multicolumn{5}{c|}{FDD/\%}                                                                                                            & \multicolumn{5}{c}{Time/s}                                                                                                            \\ \cline{2-21} 
                       & \multicolumn{1}{c}{CAR} & \multicolumn{1}{c}{PRD} & \multicolumn{1}{c}{REA}  & \multicolumn{1}{c}{ART} & \multicolumn{1}{c|}{Ours} & \multicolumn{1}{c}{CAR} & \multicolumn{1}{c}{PRD} & \multicolumn{1}{c}{REA}  & \multicolumn{1}{c}{ART} & \multicolumn{1}{c|}{Ours} & \multicolumn{1}{c}{CAR} & \multicolumn{1}{c}{PRD} & \multicolumn{1}{c}{REA} & \multicolumn{1}{c}{ART} & \multicolumn{1}{c|}{Ours} & \multicolumn{1}{c}{CAR} & \multicolumn{1}{c}{PRD} & \multicolumn{1}{c}{REA} & \multicolumn{1}{c}{ART} & \multicolumn{1}{c}{Ours} \\ \hline
$N_{2,1}$              & 59.2                     & \textbf{100.0}     & \textbf{100.0} & \textbf{100.0}    & \textbf{100.0}            & 60.1                     & 97.5                & \textbf{100.0} & \textbf{100.0}    & \textbf{100.0}            & 39.1                     & 50.3                & 64.2          & 9.2               & \textbf{0.0}              & 28.1                     & 4.3     & 572.3         & 18.3              & 25.3                     \\
$N_{2,2}$              & 67.0                     & \textbf{100.0}     & \textbf{100.0} & \textbf{100.0}    & \textbf{100.0}            & 65.6                     & 99.0                & 99.7           & \textbf{100.0}    & \textbf{100.0}            & 56.5                     & 87.6                & 96.5          & 9.9               & \textbf{0.0}              & 23.6                     & 3.8     & 545.6         & 18.6              & 18.8                     \\
$N_{2,3}$              & \textbf{100.0}           & \textbf{100.0}     & \textbf{100.0} & \textbf{100.0}    & \textbf{100.0}            & 99.9                     & 99.6                & \textbf{100.0} & \textbf{100.0}    & \textbf{100.0}            & 11.1                     & 92.0                & 80.1          & 9.3               & \textbf{0.0}              & 14.2                     & 2.9     & 522.8         & 17.8              & 18.2                     \\
$N_{2,4}$              & \textbf{100.0}           & \textbf{100.0}     & \textbf{100.0} & \textbf{100.0}    & \textbf{100.0}            & 98.8                     & 99.9                & \textbf{100.0} & \textbf{100.0}    & \textbf{100.0}            & 10.5                     & 97.6                & 67.6          & 8.8               & \textbf{0.0}              & 13.9                     & 3.1     & 515.4         & 17.9              & 18.5                     \\
$N_{2,5}$              & 98.4                     & \textbf{100.0}     & \textbf{100.0} & \textbf{100.0}    & \textbf{100.0}            & 97.8                     & 98.9                & 64.1           & \textbf{100.0}    & \textbf{100.0}            & 9.6                      & 90.9                & 91.9          & 5.4               & \textbf{0.0}              & 28.8                     & 3.7     & 522.0         & 16.8              & 19.0                     \\
$N_{2,6}$              & 59.4                     & \textbf{100.0}     & \textbf{100.0} & \textbf{100.0}    & \textbf{100.0}            & 54.7                     & 98.5                & 87.3           & \textbf{100.0}    & \textbf{100.0}            & 5.1                      & 92.8                & 94.6          & 2.7               & \textbf{0.0}              & 29.2                     & 4.4     & 518.1         & 17.2              & 24.4                     \\
$N_{2,7}$              & \textbf{100.0}           & \textbf{100.0}     & \textbf{100.0} & \textbf{100.0}    & \textbf{100.0}            & \textbf{100.0}           & 99.3                & 92.5           & \textbf{100.0}    & \textbf{100.0}            & 5.9                      & 97.0                & 88.0          & 1.7               & \textbf{0.0}              & 11.7                     & 5.0     & 520.6         & 16.1              & 30.4                     \\
$N_{2,8}$              & \textbf{100.0}           & \textbf{100.0}     & \textbf{100.0} & \textbf{100.0}    & \textbf{100.0}            & \textbf{100.0}           & 97.8                & 93.7           & \textbf{100.0}    & \textbf{100.0}            & 6.3                      & 95.7                & 86.1          & 2.1               & \textbf{0.0}              & 12.7                     & 4.4     & 526.1         & 15.8              & 18.1                     \\
$N_{2,9}$              & 97.8                     & \textbf{100.0}     & \textbf{100.0} & \textbf{100.0}    & \textbf{100.0}            & 98.1                     & 90.5                & 92.6           & \textbf{100.0}    & \textbf{100.0}            & 13.5                     & 94.9                & 88.9          & 1.5               & \textbf{0.0}              & 24.4                     & 5.1     & 512.2         & 15.7              & 24.2                     \\
$N_{3,1}$              & \textbf{100.0}           & \textbf{100.0}     & \textbf{100.0} & \textbf{100.0}    & \textbf{100.0}            & \textbf{100.0}           & 97.8                & 99.2           & \textbf{100.0}    & \textbf{100.0}            & 8.7                      & 90.9                & 79.2          & 7.3               & \textbf{0.0}              & 12.7                     & 6.3     & 521.1         & 17.2              & 18.5                     \\
$N_{3,2}$              & \textbf{100.0}           & \textbf{100.0}     & \textbf{100.0} & \textbf{100.0}    & \textbf{100.0}            & \textbf{100.0}           & 99.3                & 95.3           & \textbf{100.0}    & \textbf{100.0}            & 8.9                      & 12.2                & 48.6          & 8.6               & \textbf{0.0}              & 12.2                     & 3.0     & 510.9         & 16.7              & 18.3                     \\
$N_{3,3}$              & \textbf{100.0}           & \textbf{100.0}     & \textbf{100.0} & \textbf{100.0}    & \textbf{100.0}            & 99.6                     & 94.3                & 98.8           & \textbf{100.0}    & \textbf{100.0}            & 35.7                     & 66.4                & 68.6          & 8.3               & \textbf{0.0}              & 24.6                     & 4.3     & 516.6         & 17.4              & 18.5                     \\
$N_{3,4}$              & \textbf{100.0}           & \textbf{100.0}     & \textbf{100.0} & \textbf{100.0}    & \textbf{100.0}            & \textbf{100.0}           & 98.1                & 97.7           & \textbf{100.0}    & \textbf{100.0}            & 18.9                     & 97.5                & 96.0          & 9.6               & \textbf{0.0}              & 23.3                     & 3.1     & 513.9         & 15.6              & 23.5                     \\
$N_{3,5}$              & 99.6                     & \textbf{100.0}     & \textbf{100.0} & \textbf{100.0}    & \textbf{100.0}            & 99.4                     & 97.7                & 80.7           & \textbf{100.0}    & \textbf{100.0}            & 13.0                     & 98.8                & 95.4          & 5.8               & \textbf{0.0}              & 33.8                     & 3.6     & 528.0         & 17.5              & 17.8                     \\
$N_{3,6}$              & \textbf{100.0}           & \textbf{100.0}     & \textbf{100.0} & \textbf{100.0}    & \textbf{100.0}            & 99.9                     & 98.7                & 97.2           & \textbf{100.0}    & \textbf{100.0}            & 9.8                      & 94.3                & 96.6          & 2.7               & \textbf{0.0}              & 16.1                     & 4.6     & 639.5         & 16.8              & 17.5                     \\
$N_{3,7}$              & \textbf{100.0}           & \textbf{100.0}     & \textbf{100.0} & \textbf{100.0}    & \textbf{100.0}            & \textbf{100.0}           & 91.6                & 81.8           & \textbf{100.0}    & \textbf{100.0}            & 5.5                      & 96.4                & 80.0          & 2.2               & \textbf{0.0}              & 20.6                     & 5.0     & 515.4         & 16.9              & 18.6                     \\
$N_{3,8}$              & \textbf{100.0}           & \textbf{100.0}     & \textbf{100.0} & \textbf{100.0}    & \textbf{100.0}            & \textbf{100.0}           & 98.8                & 98.3           & \textbf{100.0}    & \textbf{100.0}            & 5.0                      & 97.5                & 88.8          & 1.6               & \textbf{0.0}              & 15.2                     & 4.1     & 579.9         & 17.9              & 24.8                     \\
$N_{3,9}$              & \textbf{100.0}           & \textbf{100.0}     & \textbf{100.0} & \textbf{100.0}    & \textbf{100.0}            & 99.6                     & 96.0                & 49.2           & \textbf{100.0}    & \textbf{100.0}            & 9.2                      & 97.0                & 86.1          & 1.9               & \textbf{0.0}              & 16.2                     & 4.9     & 774.4         & 16.8              & 19.0                     \\
$N_{4,1}$              & \textbf{100.0}           & \textbf{100.0}     & \textbf{100.0} & \textbf{100.0}    & \textbf{100.0}            & 99.9                     & 98.3                & 98.8           & \textbf{100.0}    & \textbf{100.0}            & 40.1                     & 96.2                & 93.2          & 7.8               & \textbf{0.0}              & 14.5                     & 3.5     & 549.4         & 17.6              & 24.5                     \\
$N_{4,3}$              & 98.4                     & \textbf{100.0}     & \textbf{100.0} & \textbf{100.0}    & \textbf{100.0}            & 97.5                     & 99.4                & 99.8           & \textbf{100.0}    & \textbf{100.0}            & 20.1                     & 91.1                & 87.5          & 7.0               & \textbf{0.0}              & 28.0                     & 2.9     & 506.7         & 17.1              & 17.7                     \\
$N_{4,4}$              & \textbf{100.0}           & \textbf{100.0}     & \textbf{100.0} & \textbf{100.0}    & \textbf{100.0}            & \textbf{100.0}           & 98.7                & 99.9           & \textbf{100.0}    & \textbf{100.0}            & 4.3                      & 82.0                & 85.2          & 8.4               & \textbf{0.0}              & 9.5                      & 3.1     & 535.8         & 16.9              & 17.9                     \\
$N_{4,5}$              & \textbf{100.0}           & \textbf{100.0}     & \textbf{100.0} & \textbf{100.0}    & \textbf{100.0}            & \textbf{100.0}           & 99.8                & 94.6           & \textbf{100.0}    & \textbf{100.0}            & 33.6                     & 46.4                & 94.8          & 6.0               & \textbf{0.0}              & 9.8                      & 5.0     & 544.8         & 18.2              & 17.7                     \\
$N_{4,6}$              & \textbf{100.0}           & \textbf{100.0}     & \textbf{100.0} & \textbf{100.0}    & \textbf{100.0}            & \textbf{100.0}           & 98.2                & 97.9           & \textbf{100.0}    & \textbf{100.0}            & 11.2                     & 89.6                & 86.0          & 2.6               & \textbf{0.0}              & 19.9                     & 5.4     & 548.2         & 15.9              & 24.1                     \\
$N_{4,7}$              & 77.6                     & \textbf{100.0}     & \textbf{100.0} & \textbf{100.0}    & \textbf{100.0}            & 78.1                     & 98.8                & 85.6           & \textbf{100.0}    & \textbf{100.0}            & 3.9                      & 96.3                & 96.2          & 1.6               & \textbf{0.0}              & 38.4                     & 4.5     & 544.0         & 16.0              & 24.5                     \\
$N_{4,8}$              & 99.8                     & \textbf{100.0}     & \textbf{100.0} & \textbf{100.0}    & \textbf{100.0}            & 99.7                     & 97.4                & 77.1           & \textbf{100.0}    & \textbf{100.0}            & 4.7                      & 97.9                & 88.2          & 1.6               & \textbf{0.0}              & 36.0                     & 4.9     & 542.7         & 14.8              & 30.2                     \\
$N_{4,9}$              & 99.8                     & \textbf{100.0}     & \textbf{100.0} & \textbf{100.0}    & \textbf{100.0}            & 99.8                     & 95.0                & 64.0           & \textbf{100.0}    & \textbf{100.0}            & 7.7                      & 98.4                & 90.1          & 3.3               & \textbf{0.0}              & 21.2                     & 5.0     & 545.8         & 16.4              & 23.8                     \\
$N_{5,1}$              & \textbf{100.0}           & \textbf{100.0}     & \textbf{100.0} & \textbf{100.0}    & \textbf{100.0}            & \textbf{100.0}           & 90.5                & 97.6           & \textbf{100.0}    & \textbf{100.0}            & 14.0                     & 98.4                & 96.9          & 6.9               & \textbf{0.0}              & 19.8                     & 3.5     & 541.3         & 17.1              & 18.6                     \\
$N_{5,2}$              & \textbf{100.0}           & \textbf{100.0}     & \textbf{100.0} & \textbf{100.0}    & \textbf{100.0}            & 99.9                     & 99.3                & \textbf{100.0} & \textbf{100.0}    & \textbf{100.0}            & 4.1                      & 91.8                & 94.1          & 7.2               & \textbf{0.0}              & 13.3                     & 4.0     & 543.4         & 17.2              & 18.3                     \\
$N_{5,3}$              & \textbf{100.0}           & \textbf{100.0}     & \textbf{100.0} & \textbf{100.0}    & \textbf{100.0}            & 99.9                     & 88.6                & 99.7           & \textbf{100.0}    & \textbf{100.0}            & 8.8                      & 88.3                & 94.8          & 6.0               & \textbf{0.0}              & 17.4                     & 3.5     & 542.9         & 16.8              & 18.7                     \\
$N_{5,4}$              & 99.6                     & \textbf{100.0}     & \textbf{100.0} & \textbf{100.0}    & \textbf{100.0}            & 98.9                     & 95.9                & 99.5           & \textbf{100.0}    & \textbf{100.0}            & 17.7                     & 98.0                & 91.9          & 7.1               & \textbf{0.0}              & 16.9                     & 2.9     & 540.7         & 16.1              & 18.2                     \\
$N_{5,5}$              & \textbf{100.0}           & \textbf{100.0}     & \textbf{100.0} & \textbf{100.0}    & \textbf{100.0}            & \textbf{100.0}           & 97.0                & 99.7           & \textbf{100.0}    & \textbf{100.0}            & 5.4                      & 94.8                & 94.4          & 3.9               & \textbf{0.0}              & 8.6                      & 3.4     & 518.9         & 15.9              & 18.3                     \\
$N_{5,6}$              & \textbf{100.0}           & \textbf{100.0}     & \textbf{100.0} & \textbf{100.0}    & \textbf{100.0}            & \textbf{100.0}           & 99.2                & 98.8           & \textbf{100.0}    & \textbf{100.0}            & 6.0                      & 95.7                & 93.7          & 9.6               & \textbf{0.0}              & 13.1                     & 4.7     & 519.5         & 15.9              & 30.0                     \\
$N_{5,7}$              & 97.2                     & \textbf{100.0}     & \textbf{100.0} & \textbf{100.0}    & \textbf{100.0}            & 97.7                     & 99.1                & 98.6           & \textbf{100.0}    & \textbf{100.0}            & 4.9                      & 95.3                & 87.7          & 1.8               & \textbf{0.0}              & 21.2                     & 5.0     & 517.3         & 16.3              & 24.2                     \\
$N_{5,8}$              & 99.2                     & \textbf{100.0}     & \textbf{100.0} & \textbf{100.0}    & \textbf{100.0}            & 98.9                     & 99.7                & 95.5           & \textbf{100.0}    & \textbf{100.0}            & 5.2                      & 99.2                & 80.0          & 1.8               & \textbf{0.0}              & 28.3                     & 4.3     & 517.9         & 15.9              & 24.1                     \\
$N_{5,9}$              & \textbf{100.0}           & \textbf{100.0}     & \textbf{100.0} & \textbf{100.0}    & \textbf{100.0}            & \textbf{100.0}           & 96.9                & 91.7           & \textbf{100.0}    & \textbf{100.0}            & 5.4                      & 97.2                & 27.0          & 2.0               & \textbf{0.0}              & 22.6                     & 4.8     & 515.8         & 15.4              & 24.3                     \\ \hline
Avg                    & 95.8                     & \textbf{100.0}     & \textbf{100.0} & \textbf{100.0}    & \textbf{100.0}            & 95.5                     & 97.3                & 92.2           & \textbf{100.0}    & \textbf{100.0}            & 13.4                     & 88.8                & 85.1          & 5.2               & \textbf{0.0}              & 20.0                     & 4.2     & 540.9         & 16.8              & 21.4                     \\ \hline
\end{tabular}
}
\vspace{-1em}

\end{table*}

\section{Experimental Evaluation}

In this section, we evaluate \ours by answering the following research questions:

\noindent \textbf{RQ1}: What is the overall performance of \ours in repairing local robustness and correcting safety property violations?

\noindent \textbf{RQ2}: Does the repaired DNN exhibit the capability to defend against new adversarial attacks?
% \noindent \textbf{RQ1}:What is the overall performance of \ours in repairing local robustness and correcting safety property violations, and does the repaired DNN demonstrate an enhanced capability to defend against new adversarial attacks?

% \noindent \textbf{3}: Is \ours scalable to repair large networks?

\noindent \textbf{RQ3}: Does \ours have scalability to repair large networks?
% adapt to scenarios involving complex network structures and a larger number of network parameters?

\noindent \textbf{RQ4}: In the context of \ours, how does the size and quantity of patch modules influence the efficacy of the repair?

% \noindent \textbf{RQ5}: 
% How does the effectiveness of \ours change with the methods used to obtain repairing feature spaces?

\subsection{Setup}
All the experiments are conducted on a machine with an AMD EPYC 7763 64-Core Processor, 256 GB of memory, and an NVIDIA GeForce RTX 3090 with 24 GB of GPU memory. Each experiment has a timeout set to 10\,000 seconds.
% \footnote{\ours is available at an anonymous link:
% \url{https://github.com/Lawepwasd/ADVREPAIR}
% }

\paragraph{Dataset}
We conduct evaluations on four common datasets: 
MNIST~\cite{MNIST}, CIFAR-10~\cite{cifar10}, Tiny ImageNet\cite{tinyimagenet}, and ACAS Xu~\cite{acasxu14,acasxu15}. 
The first three are widely recognized benchmarks in the field for studying neural network robustness~\cite{adversarialtrain,pgd}, and 
ACAS Xu is a commonly used benchmark in research on neural network verification and repair~\cite{reluplex,aprnn,prdnn}.
On MNIST, we use two fully connected networks and one convolutional network, while on CIFAR-10, we train a VGG19~\cite{vgg} and a ResNet18~\cite{resnet}.
On Tiny ImageNet, we train a Resnet152 and a WRN101-2~\cite{WRN}.
We assess our approach on 35 ACAS Xu DNNs, which, as documented in \cite{reluplex}, are expected to satisfy Property-2 but exhibit violations.
% Detailed model structures and hyperparameters are in Appendix \ref{appendix:exp}.
% For the local robustness repair task, we subject the DNNs trained on MNIST and CIFAR-10 to PGD~\cite{pgd}, generating 50, 100, 200, 500 and 1\,000 adversarial samples.
% % denoted as $D_r$, 
% For MNIST, the radius $r$ is set to be 0.05, 0.1 and 0.3, while for CIFAR-10 it is set to be $\frac{4}{255}$ and $\frac{8}{255}$. 
For the local robustness repair task, we generate adversarial samples using PGD~\cite{pgd} for DNNs trained on MNIST and CIFAR-10 (50, 100, 200, 500, and 1,000 samples), as well as Tiny ImageNet (500 and 1,000 samples). The radius $r$ is set to 0.05, 0.1, and 0.3 for MNIST, $\frac{4}{255}$ and $\frac{8}{255}$ for CIFAR-10, and $\frac{2}{255}$ and $\frac{4}{255}$ for Tiny ImageNet.
%To assess generalization, we sample 10 adversarial examples on each $B(x_i,r)$ different from $x_i^*$.
%For overall performance evaluation, we compare the drawdown in accuracy between DNNs and the repaired DNNs. 
On ACAS Xu, we aim to repair the violation of Property-2 using one patch module while preserving the original performance. 
Although \ours repairs safety properties without requiring sample information, due to the needs of other tools such as CARE, we sample 500 counterexamples as the faulty inputs for repair.
% We sample 500 counterexamples as the buggy inputs for repair.
% The network architectures and accuracies of the DNN trained on MNIST are detailed in Table~\ref{tab:mnist-arc}. 
For CIFAR-10, the accuracies of VGG19 and ResNet18 are 93.4\% and 88.3\%, respectively. For Tiny ImageNet, the accuracies of WRN101-2 and ResNet152 are 64.4\% and 68.2\%, respectively.
The maximum number of iterations $M$ in Alg.~\ref{alg:advrepair} is set to 25. 
In Alg.~\ref{alg:training}, the maximum number of epochs $R$, learning rate $\eta$, and selection number $K$ are set to 10, 10, and 800, respectively. All patch modules consist of a single linear layer, unless otherwise specified. On MNIST, the patch module takes the sample itself as input, while on CIFAR-10 and Tiny ImageNet, it takes the output from the network's penultimate layer as input.
% The attack methods in AutoAttack have a maximum of 10 iterations, and the number of restarts is set to 1.
% For MNIST, the patch module comprises three hidden layers, with 64 neurons in each of the first two layers and 16 neurons in the final layer.
% For CIFAR-10 and Tiny ImageNet, we mend the networks in feature space, specifically at the last second layer. This is achieved by employing a fully connected network that contains a single linear layer to repair the network. The input to this linear layer is the output from the penultimate layer of the original network, and its output is added to the original network's output.
% 对于特征空间的采样，我们以步长为$\frac{2}{255}$使用PGD方法采样10轮，每轮采样50步，收集其所产生的所有对抗样本。同时我们还用
% FGSM~\cite{adversarialtrain}方法采样50步，将生成的样本与之前的PGD样本合并，作为特征空间的采样。
For sampling the feature space, we use PGD attack with a step size of $\frac{2}{255}$ for 10 rounds, 
with each round consisting of 50 steps. We collect all adversarial examples generated during this process. 
Additionally, we use FGSM attack to generate 50 adversarial examples, which 
are then combined with the samples obtained from PGD. This combined set of samples is used for sampling the feature space.

\paragraph{Baselines} We compare \ours with the state-of-the-art repair methods including
CARE~\cite{care}, APRNN~\cite{aprnn}, PRDNN~\cite{prdnn}, REASSURE~\cite{reassure} and ART~\cite{art}. 
Since APRNN and PRDNN require selecting a layer to repair on, 
we traverse all eligible layers across each baseline, 
and report the layer that exhibit the best performance. Additionally, for the local robustness repair task, we compare our approach with an adversarial training method 
TRADES~\cite{trades}, where we select the trained model with the best performance in 200 epochs with their default parameters.
% To make the table clearer, we represent each tool's name with their first three letters, such as CAR for CARE, APR for APRNN, etc.
% In this section, we present our experimental evaluation.
% We conduct evaluations of our approach on four datasets: 
% MNIST~\cite{MNIST}, CIFAR-10~\cite{cifar10}, Tiny ImageNet\cite{tinyimagenet}, and ACAS Xu~\cite{acasxu14,acasxu15}. 
% We compare \ours with the state-of-the-art repair methods including
% CARE~\cite{care}, APRNN~\cite{aprnn}, PRDNN~\cite{prdnn}, REASSURE~\cite{reassure} and ART~\cite{art}. 
% Readers can see the detailed setup in Appendix B.

\paragraph{Metrics} 
To assess generalization, we sample 10 adversarial examples on each $B(x_i,r)$ different from $x_i^*$ to form a generalization set $D_\mathrm{g}$ for MNIST, CIFAR-10 and Tiny ImageNet, and use an independent set of 5\,000  counterexamples for ACAS Xu.
Besides these, we have an independent test set $D_\mathrm{t}$ of size 10\,000, 10\,000, 10\,000 and 5\,000 for MNIST, CIFAR-10, Tiny ImageNet and ACAS Xu, respectively. We employ 
AutoAttack~\cite{autoattack} to attack the repaired DNN $F$ on $\{x_i\}_{i=1}^n$ and $D_\mathrm{t}$ with the same radius $r$.
The metrics we use include repair success rate ($\mathrm{RSR}$), repair generalization rate ($\mathrm{RGR}$), drawdown ($\mathrm{DD}$), defense success rate ($\mathrm{DSR}$), and defense generalization success rate ($\mathrm{DGSR}$), defined as:
%
% \begin{equation}
%     \begin{aligned}
%         \text { RSR } &=\frac{\sum_{x \in D_{\mathrm{r}}}|C_{F}(x) = l_x|}{\left|D_{\mathrm{r}}\right|} \\
%         \text { RGR } &=\frac{\sum_{x \in D_{\mathrm{g}}}|C_{F}(x) = l_x|}{\left|D_{\mathrm{g}}\right|} \\
%         \text { DD } &=\frac{\sum_{x \in D_{\mathrm{t}}}|C_{F}(x) = l_x| - \sum_{x \in D_{\mathrm{t}}}|C_{N}(x) = l_x|}{\left|D_{\mathrm{t}}\right|} \\
%         \text { DSR } &=\frac{\sum_{x \in AA(D_{\mathrm{o}})}|C_{F}(x) \neq l_x|}{\left|D_{\mathrm{o}}\right|} \\
%         \text { DGSR } &=\frac{\sum_{x \in AA(D_{\mathrm{t}})}|C_{F}(x) \neq l_x|}{\left|D_{\mathrm{t}}\right|} \\
%     \end{aligned}
% \end{equation}

\footnotesize
\vspace{-8pt}
    \begin{flalign*}
    \mathrm{RSR}&= \frac{ |\{\,i \mid C_{F}(x_i^*) = C_N(x_i)\,\}|}{n},& \\
   %\:
    \mathrm{RGR}&= \frac{|\{{\,x \in D_{\mathrm{g}}} \mid C_{F}(x) = \ell_x\,\}|}{\left|D_{\mathrm{g}}\right|},& \\
\mathrm{DD} &= \frac{|\{\,{x \in D_{\mathrm{t}}} \mid C_{N}(x) = \ell_x\,\}| - |\{\,{x \in D_{\mathrm{t}}} \mid C_{F}(x) = \ell_x\,\}|}{\left|D_{\mathrm{t}}\right|}, &\\
\mathrm{DSR} &= \frac{|\{\,i \mid \forall x \in \mathrm{AA}(F,B(x_i,r)), C_{F}(x) = C_N(x_i)\,\}|}{n},& \\
 \mathrm{DGSR}&= \frac{|\{\,x \in D_{\mathrm{t}} \mid \forall x' \in \mathrm{AA}(F,B(x,r)), C_{F}(x') = C_N(x)\,\}|}{\left|D_{\mathrm{t}}\right|}, &
\end{flalign*}
\normalsize
% 
% \begin{align*}
%    \mathrm{RSR}&= \frac{ |\{\,i \mid C_{F}(x_i^*) = C_N(x_i)\,\}|}{n}, 
%     \mathrm{RGR}= \frac{|\{{\,x \in D_{\mathrm{g}}} \mid C_{F}(x) = \ell_x\,\}|}{\left|D_{\mathrm{g}}\right|}, \\
% \mathrm{DD} &= \frac{|\{\,{x \in D_{\mathrm{t}}} \mid C_{N}(x) = \ell_x\,\}| - |\{\,{x \in D_{\mathrm{t}}} \mid C_{F}(x) = \ell_x\,\}|}{\left|D_{\mathrm{t}}\right|}, \\
% \mathrm{DSR} &= \frac{|\{\,i \mid \exists x \in \mathrm{AA}(F,B(x_i,r)), C_{F}(x) \neq C_N(x_i)\,\}|}{n}, \\
% \mathrm{DGSR} &= \frac{|\{\,x \in D_{\mathrm{t}} \mid \exists x' \in \mathrm{AA}(F,B(x,r)), C_{F}(x') \neq C_N(x)\,\}|}{\left|D_{\mathrm{t}}\right|},
% \end{align*}
% 
% 
% \mathrm{RSR}= \frac{ |\{i \mid C_{F}(x_i^*) = C_N(x_i)\}|}{n},
% \text{\quad} 
% \mathrm{RGR}= \frac{|\{{x \in D_{\mathrm{g}}} \mid C_{F}(x) = \ell_x\}|}{\left|D_{\mathrm{g}}\right|},\]
% \[\mathrm{DD} = \frac{\sum_{x \in D_{\mathrm{t}}}|C_{N}(x) = \ell_x| - \sum_{x \in D_{\mathrm{t}}}|C_{F}(x) = \ell_x|}{\left|D_{\mathrm{t}}\right|},\]
% \[ \mathrm{DSR} = \frac{|\{i \mid \exists x \in \mathrm{AA}(F,B(x_i,r))~C_{F}(x) \neq C_N(x_i)\}|}{n},
% \text{\quad}
% \mathrm{DGSR} = \frac{|\{x \in D_{\mathrm{t}} \mid \exists x' \in \mathrm{AA}(F,B(x,r))~C_{F}(x) \neq C_N(x_i)\}|}{\left|D_{\mathrm{t}}\right|},\]
where $\ell_x$ denotes the ground truth of $x$, and $\mathrm{AA}(\varphi)$ represents the set of potential adversarial samples obtained by attacking the local robustness property $\varphi$ with AutoAttack.
On ACAS Xu, the metric Drawdown is replaced with Fidelity Drawdown ($\mathrm{FDD}$) because there is no labelled ground truth for inputs on ACAS Xu:
% \begin{equation}
%     \begin{aligned}
%         \text { RSR } &=\frac{\sum_{x \in D_{\mathrm{o}}}|C_{F}(x) \in Q|}{\left|D_{\mathrm{o}}\right|} \\
%         \text { RGR } &=\frac{\sum_{x \in D_{\mathrm{t}}}|C_{F}(x) \in Q|}{\left|D_{\mathrm{t}}\right|} \\
%         \text { FDD } &=\frac{\sum_{x \in D_{\mathrm{o}}}|C_{F}(x) \in Q| - \sum_{x \in D_{\mathrm{o}}}|C_{N}(x) \in Q|}{\left|D_{\mathrm{o}}\right|} \\
%     \end{aligned}
% \end{equation}

\footnotesize
\[
 \mathrm{FDD} = \frac{|\{\,{x \in D_{\mathrm{t}}} \mid C_{F}(x) \neq C_{N}(x)\,\}|}{|D_{\mathrm{t}}|}.
\]
\normalsize
The metrics $\mathrm{RSR}$ and $\mathrm{DD}$ (or $\mathrm{FDD}$) evaluates the overall performance of DNN repair by measuring the percentage of buggy inputs successfully repaired and how much the accuracy decreases, while $\mathrm{RGR}$, $\mathrm{DSR}$ and $\mathrm{DGSR}$ reflects how the repair generalizes to the robustness regions $B(x_i,r)$ and to other inputs.

\begin{table*}[t]
\caption{Results of generalization and defense against new adversarial attacks}
\label{tab:rq3}
% \vspace{-2em}
\centering

\resizebox{\linewidth}{!}
{ 
\begin{tabular}{c|c|r|rrrrrrr|rrrrrrr|rrrrrrr}
\hline
\multirow{2}{*}{Model}                                                & \multirow{2}{*}{$r$} & \multicolumn{1}{c|}{\multirow{2}{*}{$n$}} & \multicolumn{7}{c|}{RGR/\%}                                                                                                                                                                     & \multicolumn{7}{c|}{DSR/\%}                                                                                                                                                                    & \multicolumn{7}{c}{DGSR/\%}                                                                                                                                                                   \\ \cline{4-24} 
                                                                      &                         & \multicolumn{1}{c|}{}                     & \multicolumn{1}{c}{CAR} & \multicolumn{1}{c}{PRD} & \multicolumn{1}{c}{APR} & \multicolumn{1}{c}{REA} & \multicolumn{1}{c}{TRA}  & \multicolumn{1}{c}{ART} & \multicolumn{1}{c|}{Ours} & \multicolumn{1}{c}{CAR} & \multicolumn{1}{c}{PRD} & \multicolumn{1}{c}{APR} & \multicolumn{1}{c}{REA} & \multicolumn{1}{c}{TRA} & \multicolumn{1}{c}{ART} & \multicolumn{1}{c|}{Ours} & \multicolumn{1}{c}{CAR} & \multicolumn{1}{c}{PRD} & \multicolumn{1}{c}{APR} & \multicolumn{1}{c}{REA} & \multicolumn{1}{c}{TRA} & \multicolumn{1}{c}{ART} & \multicolumn{1}{c}{Ours} \\ \hline
\multirow{15}{*}{\begin{tabular}[c]{@{}c@{}}FNN\_\\ small\end{tabular}} & \multirow{5}{*}{0.05}   & 50                                        & 7.6                      & 44.4                      & 47.2                      & 10.8                      & 80.0~          & 10.0~             & \textbf{100.0}            & 0.0                      & 0.0                       & 2.0                       & 0.0                     & 26.0~         & 6.0~              & \textbf{100.0}            & 8.9                      & 3.2                       & 2.3                       & 0.0                       & 9.4~          & 8.9~              & \textbf{91.3}            \\
                                                                      &                         & 100                                       & 2.2                      & 48.7                      & 46.7                      & 9.1                       & 0.0~           & 10.0~             & \textbf{100.0}            & 1.0                      & 0.0                       & 0.0                       & 0.0                       & 2.0~          & 20.0~             & \textbf{100.0}            & 9.3                      & 5.1                       & 0.8                       & 0.0                       & 9.8~          & 11.3~             & \textbf{96.6}            \\
                                                                      &                         & 200                                       & 5.0                      & 42.6                      & 16.1                      & 8.5                       & 0.0~           & 11.0~             & \textbf{100.0}            & 2.0                      & 0.0                       & 0.5                       & 0.0                       & 3.5~          & 10.0~             & \textbf{100.0}            & 8.6                      & 1.1                       & 1.3                       & 0.0                       & 9.8~          & 10.1~             & \textbf{96.6}            \\
                                                                      &                         & 500                                       & 6.6                      & 39.0                      & 10.3                      & 7.2                       & 0.0~           & 11.0~             & \textbf{100.0}            & 1.4                      & 0.0                       & 3.8                       & 0.0                       & 2.8~          & 9.6~              & \textbf{100.0}            & 8.3                      & 0.5                       & 5.3                       & 0.0                       & 9.8~          & 8.9~              & \textbf{96.6}            \\
                                                                      &                         & 1000                                      & 12.7                     & 41.3                      & 10.9                      & --                        & 2.0~           & 11.0~             & \textbf{100.0}            & 2.3                      & 0.0                       & 0.4                       & --                        & 3.9~          & 14.4~             & \textbf{100.0}            & 6.1                      & 0.2                       & 0.9                       & --                        & 9.8~          & 11.3~             & \textbf{96.6}            \\
                                                                      & \multirow{5}{*}{0.1}    & 50                                        & 0.8                      & 24.2                      & 32.0                      & 2.4                       & 20.0~          & 14.0~             & \textbf{100.0}            & 0.0                      & 0.0                       & 0.0                       & 0.0                       & 0.0~          & 8.0~              & \textbf{100.0}            & 0.0                      & 0.0                       & 0.0                       & 0.0                       & 9.8~          & 9.6~              & \textbf{96.3}            \\
                                                                      &                         & 100                                       & 1.7                      & 29.8                      & 21.9                      & 1.8                       & 10.0~          & 9.0~              & \textbf{100.0}            & 0.0                      & 0.0                       & 0.0                       & 0.0                       & 13.0~         & 9.0~              & \textbf{100.0}            & 0.0                      & 0.0                       & 0.0                       & 0.0                       & 9.8~          & 9.6~              & \textbf{95.0}            \\
                                                                      &                         & 200                                       & 1.1                      & 32.7                      & 12.9                      & 1.4                       & 5.0~           & 12.0~             & \textbf{100.0}            & 0.0                      & 0.0                       & 0.0                       & 0.0                       & 10.5~         & 11.5~             & \textbf{100.0}            & 0.0                      & 0.0                       & 0.0                       & 0.0                       & 9.8~          & 10.1~             & \textbf{95.2}            \\
                                                                      &                         & 500                                       & 1.7                      & 37.5                      & 17.3                      & 1.4                       & 8.0~           & 14.0~             & \textbf{100.0}            & 0.0                      & 0.0                       & 0.0                       & 0.0                       & 10.0~         & 10.0~             & \textbf{100.0}            & 0.0                      & 0.0                       & 0.1                       & 0.0                       & 9.8~          & 9.8~              & \textbf{95.3}            \\
                                                                      &                         & 1000                                      & 2.3                      & 42.5                      & 15.2                      & --                        & 13.0~          & 14.0~             & \textbf{100.0}            & 0.0                      & 0.0                       & 1.2                       & --                        & 10.0~         & 10.0~             & \textbf{100.0}            & 0.0                      & 0.0                       & 0.2                       & --                        & 9.8~          & 9.8~              & \textbf{96.5}            \\
                                                                      & \multirow{5}{*}{0.3}    & 50                                        & 0.0                      & 24.2                      & 37.4                      & 0.8                       & 20.0~          & 8.0~              & \textbf{100.0}             & 0.0                      & 0.0                       & 0.0                       & 0.0                      & 8.0~          & 8.0~              & \textbf{100.0}             & 0.0                      & 0.0                       & 0.0                       & 0.0                      & 9.8~          & 9.7~              & \textbf{96.5}            \\
                                                                      &                         & 100                                       & 7.7                      & 22.0                      & 18.4                      & 0.4                       & 10.0~          & 11.0~             & \textbf{100.0}             & 0.0                      & 0.0                       & 0.0                       & 0.0                      & 13.0~         & 11.0~             & \textbf{100.0}            & 0.0                      & 0.0                       & 0.0                       & 0.0                       & 9.8~          & 9.6~              & \textbf{96.5}            \\
                                                                      &                         & 200                                       & 0.0                      & 26.1                      & 8.8                       & 0.5                       & 5.0~           & 8.0~              & \textbf{100.0}             & 0.0                      & 0.0                       & 0.0                       & 0.0                      & 10.5~         & 9.5~              & \textbf{100.0}            & 0.0                      & 0.0                       & 0.0                       & 0.0                       & 9.8~          & 9.6~              & \textbf{96.6}            \\
                                                                      &                         & 500                                       & 0.4                      & 30.4                      & 12.8                      & 0.3                       & 8.0~           & 10.0~             & \textbf{100.0}             & 0.0                      & 0.0                       & 0.0                       & 0.0                      & 10.0~         & 10.4~             & \textbf{100.0}             & 0.0                      & 0.0                       & 0.0                       & 0.0                      & 9.8~          & 10.3~             & \textbf{96.6}            \\
                                                                      &                         & 1000                                      & 0.6                      & 31.9                      & 12.5                      & --                        & 13.0~          & 11.0~             & \textbf{100.0}             & 0.0                      & 0.0                       & 0.0                       & --                       & 9.8~          & 9.8~              & \textbf{100.0}             & 0.0                      & 0.0                       & 0.0                       & --                       & 9.8~          & 9.8~              & \textbf{96.6}            \\ \hline
\multirow{15}{*}{\begin{tabular}[c]{@{}c@{}}FNN\_\\ big\end{tabular}}   & \multirow{5}{*}{0.05}   & 50                                        & 6.6                      & 63.0                      & 64.8                      & 8.4                       & \textbf{100.0}~~ & 24.0~             & \textbf{100.0}            & 2.0                      & 2.0                       & 8.0                       & 0.0                   & 82.0~         & 24.0~             & \textbf{100.0}            & 37.7                     & 38.3                      & 31.7                      & 0.0                       & 53.5~         & 11.3~             & \textbf{97.2}            \\
                                                                      &                         & 100                                       & 10.9                     & 63.3                      & 67.4                      & 9.6                       & 90.0~          & 9.0~              & \textbf{100.0}            & 7.0                      & 1.0                       & 11.0                      & 0.0                       & 60.0~         & 9.0~              & \textbf{100.0}            & 35.7                     & 39.8                      & 19.0                      & 0.0                       & 54.9~         & 10.3~             & \textbf{97.2}            \\
                                                                      &                         & 200                                       & 10.7                     & 58.9                      & 64.6                      & 7.3                       & 69.5~          & 6.0~              & \textbf{100.0}            & 4.5                      & 2.5                       & 2.5                       & 0.0                       & 54.5~         & 5.0~              & \textbf{100.0}            & 37.9                     & 33.2                      & 12.8                      & 0.0                       & 57.0~         & 8.9~              & \textbf{97.2}            \\
                                                                      &                         & 500                                       & 10.6                     & 54.8                      & 66.2                      & --                        & 78.8~          & 10.0~             & \textbf{100.0}            & 4.8                      & 0.6                       & 0.0                       & --                        & 52.8~         & 7.4~              & \textbf{100.0}            & 35.4                     & 29.7                      & 1.7                       & --                        & 61.2~         & 10.1~             & \textbf{97.2}            \\
                                                                      &                         & 1000                                      & 11.8                     & 54.6                      & 68.4                      & --                        & 79.2~          & 20.0~             & \textbf{100.0}            & 7.1                      & 0.4                       & 0.0                       & --                        & 56.1~         & 19.2~             & \textbf{100.0}            & 36.8                     & 25.5                      & 1.8                       & --                        & 66.2~         & 11.3~             & \textbf{97.2}            \\
                                                                      & \multirow{5}{*}{0.1}    & 50                                        & 20.2                     & 52.6                      & 44.2                      & 3.4                       & 0.0~~           & 8.0~              & \textbf{100.0}            & 2.0                      & 0.0                       & 0.0                       & 0.0                      & 0.0~          & 8.0~              & \textbf{100.0}            & 2.6                      & 0.7                       & 1.1                       & 0.0                       & 1.8~          & 9.7~              & \textbf{97.2}            \\
                                                                      &                         & 100                                       & 2.0                      & 47.5                      & 46.3                      & 3.2                       & 20.0~          & 11.0~             & \textbf{100.0}            & 0.0                      & 0.0                       & 0.0                       & 0.0                       & 7.0~          & 11.0~             & \textbf{100.0}            & 1.8                      & 0.9                       & 0.0                       & 0.0                       & 4.0~          & 9.8~              & \textbf{97.2}            \\
                                                                      &                         & 200                                       & 19.2                     & 49.6                      & 46.8                      & 2.5                       & 22.5~          & 10.0~             & \textbf{100.0}            & 1.5                      & 0.0                       & 0.0                       & 0.0                       & 6.0~          & 10.5~             & \textbf{100.0}            & 2.6                      & 0.8                       & 0.1                       & 0.0                       & 4.9~          & 9.8~              & \textbf{97.2}            \\
                                                                      &                         & 500                                       & 1.2                      & 50.6                      & 49.3                      & --                        & 23.6~          & 15.0~             & \textbf{100.0}            & 0.0                      & 0.2                       & 0.0                       & --                        & 2.8~          & 9.0~              & \textbf{100.0}            & 1.2                      & 0.2                       & 0.0                       & --                        & 5.4~          & 8.9~              & \textbf{97.2}            \\
                                                                      &                         & 1000                                      & 1.0                      & 51.1                      & 49.9                      & --                        & 42.1~          & 11.0~             & \textbf{100.0}            & 0.0                      & 0.0                       & 0.0                       & --                        & 7.2~          & 10.5~             & \textbf{100.0}            & 1.4                      & 0.1                       & 0.0                       & --                        & 9.2~          & 10.3~             & \textbf{97.2}            \\
                                                                      & \multirow{5}{*}{0.3}    & 50                                        & 0.0                      & 42.2                      & 20.4                      & 2.8                       & 2.0~           & 14.0~             & \textbf{100.0}            & 0.0                      & 0.0                       & 0.0                       & 0.0                       & 0.0~          & 14.0~             & \textbf{100.0}            & 0.0                      & 0.0                       & 0.0                       & 0.0                       & 0.0~          & 11.3~             & \textbf{96.5}            \\
                                                                      &                         & 100                                       & 0.1                      & 43.4                      & 27.8                      & 1.4                       & 2.0~           & 14.0~             & \textbf{100.0}            & 0.0                      & 0.0                       & 0.0                       & 0.0                       & 0.0~          & 14.0~             & \textbf{100.0}            & 0.0                      & 0.0                       & 0.0                       & 0.0                       & 0.0~          & 11.3~             & \textbf{96.5}            \\
                                                                      &                         & 200                                       & 0.1                      & 40.8                      & 31.0                      & 1.6                       & 11.0~          & 10.0~             & \textbf{100.0}            & 0.0                      & 0.0                       & 0.0                       & 0.0                       & 0.0~          & 10.0~             & \textbf{100.0}            & 0.0                      & 0.0                       & 0.0                       & 0.0                       & 0.0~          & 10.3~             & \textbf{96.6}            \\
                                                                      &                         & 500                                       & 0.3                      & 36.9                      & 36.0                      & --                        & 9.2~           & 9.0~              & \textbf{100.0}            & 0.0                      & 0.0                       & 0.0                       & --                        & 0.0~          & 10.2~             & \textbf{100.0}            & 0.0                      & 0.0                       & 0.0                       & --                        & 0.0~          & 10.3~             & \textbf{97.2}            \\
                                                                      &                         & 1000                                      & 0.4                      & 36.5                      & 37.1                      & --                        & 15.0~          & 12.0~             & \textbf{100.0}            & 0.0                      & 0.0                       & 0.0                       & --                        & 0.0~          & 11.8~             & \textbf{100.0}             & 0.0                      & 0.0                       & 0.0                       & --                       & 0.0~          & 11.3~             & \textbf{97.2}            \\ \hline
\multirow{15}{*}{\begin{tabular}[c]{@{}c@{}}CNN\end{tabular}} & \multirow{5}{*}{0.05}   & 50                                        & 27.0                     & 69.6                      & 67.6                      & 47.2                      & \textbf{100.0}~ & 0.0~              & \textbf{100.0}            & 0.0                      & 0.0                       & 12.0                      & 0.0                              & 100.~        & 18.0~             & \textbf{100.0}            & 70.6                     & 37.5                      & 70.4                      & 0.0                        & 87.0~         & 9.8~              & \textbf{96.6}            \\
                                                                      &                         & 100                                       & 29.4                     & 74.5                      & 65.6                      & 45.7                      & \textbf{100.0}~ & 60.0~             & \textbf{100.0}            & 1.0                      & 0.0                       & 10.0                      & 0.0                      & 86.0~         & 27.0~             & \textbf{100.0}            & 70.6                     & 29.4                      & 69.2                      & 0.0                       & 88.2~         & 9.7~              & \textbf{98.3}            \\
                                                                      &                         & 200                                       & 27.6                     & 79.8                      & 62.5                      & 50.9                      & 97.0~          & 0.0~              & \textbf{100.0}            & 1.0                      & 0.0                       & 3.0                       & 0.0                       & 84.0~         & 20.5~             & \textbf{100.0}            & 70.6                     & 10.4                      & 70.6                      & 0.0                       & 89.9~         & 9.7~              & \textbf{98.3}            \\
                                                                      &                         & 500                                       & 30.8                     & 89.0                      & 62.3                      & --                        & 92.4~          & --~               & \textbf{100.0}            & 1.4                      & 0.0                       & 0.8                       & --                        & 87.0~         & --~               & \textbf{100.0}            & 70.6                       & 4.3                        & 71.4                      & --                     & 91.2~         & --~               & \textbf{98.3}            \\
                                                                      &                         & 1000                                      & 32.9                     & 92.2                      & 63.8                      & --                        & 90.8~          & --~               & \textbf{100.0}            & 1.5                      & 0.0                       & 0.2                       & --                        & 86.8~         & --~               & \textbf{100.0}            & 70.7                       & 0.9                        & 65.7                      & --                     & 92.1~         & --~               & \textbf{98.3}            \\
                                                                      & \multirow{5}{*}{0.1}    & 50                                        & 12.2                     & 53.0                      & 41.0                      & 10.4                      & 80.0~          & 0.0~              & \textbf{100.0}            & 2.0                      & 0.0                       & 2.0                       & 0.0                       & 50.0~         & 8.0~              & \textbf{100.0}            & 8.3                      & 0.0                       & 8.7                       & 0.0                       & 44.9~         & 10.3~             & \textbf{98.2}            \\
                                                                      &                         & 100                                       & 22.2                     & 69.1                      & 46.0                      & 14.7                      & 88.0~          & 10.0~             & \textbf{100.0}            & 1.0                      & 0.0                       & 0.0                       & 0.0                       & 57.0~         & 11.0~             & \textbf{100.0}            & 8.2                      & 0.0                       & 8.6                       & 0.0                       & 48.5~         & 9.6~              & \textbf{98.2}            \\
                                                                      &                         & 200                                       & 21.8                     & 73.2                      & 46.1                      & 13.0                      & 96.5          & 20.0~             & \textbf{100.0}            & 0.5                      & 0.0                       & 0.5                       & 0.0                        & 62.0~         & 11.0~             & \textbf{100.0}            & 8.2                      & 0.0                       & 9.2                       & 0.0                       & 53.3~         & 9.8~              & \textbf{98.2}            \\
                                                                      &                         & 500                                       & 30.4                     & 84.5                      & 47.0                      & --                        & 93.2~          & --~               & \textbf{100.0}            & 0.2                      & 0.0                       & 0.0                       & --                        & 66.6~         & --~               & \textbf{100.0}             & 8.2                       & 0.0                        & 8.4                       & --                     & 62.4~         & --~               & \textbf{98.2}            \\
                                                                      &                         & 1000                                      & 24.6                     & 90.3                      & 48.7                      & --                        & 94.7~          & --~               & \textbf{100.0}            & 0.6                      & 0.0                       & 0.0                       & --                        & 70.1~         & --~               & \textbf{100.0}             & 8.3                       & 0.0                        & 4.6                       & --                     & 67.8~         & --~               & \textbf{98.3}            \\
                                                                      & \multirow{5}{*}{0.3}    & 50                                        & 0.0                      & 44.8                      & 70.0                      & 4.0                       & 0.0~~           & 10.0~             & \textbf{100.0}            & 0.0                      & 0.0                       & 0.0                       & 0.0                      & 0.0~          & 12.0~             & \textbf{100.0}            & 0.0                      & 0.0                       & 0.0                       & 0.0                       & 0.0~          & 9.8~              & \textbf{98.3}            \\
                                                                      &                         & 100                                       & 0.0                      & 59.9                      & 75.5                      & 2.7                       & 13.0~          & 10.0~             & \textbf{100.0}            & 0.0                      & 0.0                       & 0.0                       & 0.0                       & 0.0~          & 7.0~              & \textbf{100.0}            & 0.0                      & 0.0                       & 0.0                       & 0.0                       & 0.0~          & 8.9~              & \textbf{98.3}            \\
                                                                      &                         & 200                                       & 0.0                      & 69.4                      & 78.2                      & 3.6                       & 12.5~          & 0.0~              & \textbf{100.0}            & 0.0                      & 0.0                       & 0.0                       & 0.0                       & 0.0~          & 7.0~              & \textbf{100.0}            & 0.0                      & 0.0                       & 0.0                       & 0.0                       & 0.0~          & 8.9~              & \textbf{98.3}            \\
                                                                      &                         & 500                                       & 1.3                      & 79.6                      & 73.3                      & --                        & 12.6~          & --~               & \textbf{100.0}            & 0.0                      & 0.0                       & 0.0                       & --                        & 0.0~          & --~               & \textbf{100.0}            & 0.0                      & 0.0                        & 0.0                       & --                       & 0.0~          & --~               & \textbf{98.3}            \\
                                                                      &                         & 1000                                      & 1.1                      & 85.9                      & 70.6                      & --                        & 16.0~          & --~               & \textbf{100.0}            & 0.0                      & 0.0                       & 0.0                       & --                        & 0.0~          & --~               & \textbf{100.0}            & 0.0                       & 0.0                        & 0.0                       & --                      & 0.0~          & --~               & \textbf{98.2}            \\ \hline
\end{tabular}
}

\vspace{-1em}
\end{table*}

\subsection{Repair performance }
% For the overall repair performance, we focus on $\mathrm{RSR}$, $\mathrm{DD}$ and running time, which indicate whether the buggy behaviors of $\{x_i^*\}_{i=1}^n$ can be fixed, whether the accuracy is maintained, and the efficiency, respectively.

% In the experiments of repairing local robustness, PRDNN, REASSURE and \ours all achieve a perfect 100\% $\mathrm{RSR}$, attributed to their provable constructions. APRNN, while provable, exhibits limitations in effectively repairing specific instances within the FNN\_small model. Despite being a provable method, ART is less effective in repairing tasks involving high-dimensional inputs. CARE and TRADES, which do not have provable guarantee, demonstrate lower $\mathrm{RSR}$ compared to the provable approaches. CARE exhibits suboptimal repair performance for all the three models, possibly due to the intricate relationship between local robustness repair and the numerous parameters in the DNNs. The challenge lies in resolving such errors through fault localization and modifying part of parameters. While TRADES exhibits better repair results on the convolutional neural network compared to the fully connected ones, its efficacy diminishes when addressing local robustness errors with a larger radius. Concerning $\mathrm{DD}$, \ours and REASSURE outperform the other methods on all the models.
The results of the local robustness repair task on MNIST are summarized in Table~\ref{tab:rq1-2}.
We evaluate overall repair performance with $\mathrm{RSR}$, $\mathrm{DD}$, and runtime, representing fix success, accuracy retention, and speed, respectively. 
% For MNIST, each patch module comprises three hidden layers, with 64 neurons in each of the first two layers and 16 neurons in the final layer.
PRDNN, REASSURE, and \ours reach 100\% $\mathrm{RSR}$ due to their provable designs. APRNN, despite being provable, underperforms on some FNN\_small cases, while ART struggles with high-dimensional inputs. CARE and TRADES, lacking provable guarantees, show inferior $\mathrm{RSR}$ compared to provable methods. CARE's suboptimal repair across all models may stem from the complex interplay between local robustness repair and DNNs' numerous parameters, making it difficult to pinpoint and modify relevant parts. Although TRADES performs better on CNNs, its effectiveness wanes with larger-radius local robustness errors. In terms of $\mathrm{DD}$, \ours and REASSURE consistently outperform alternative methods across all models.

The results of correcting safety property violations on ACAS Xu are presented in Table~\ref{tab:acasxu}. Each patch module here is of the same size as the original network.
% For the task of correcting safety property violation, \ours, PRDNN, REASSURE, and ART successfully construct a provable repair, resulting in a 100\% $\mathrm{RSR}$. In contrast, CARE attains an average $\mathrm{RSR}$ of 95.8\%. Regarding $\mathrm{RGR}$, all baseline methods exhibit notable performance, with ART and our approach showcasing the highest performance at 100\%. In terms of $\mathrm{FDD}$, our approach achieves the most favorable performance with zero, surpassing CARE, ART, PRDNN, and REASSURE. The results highlight the effectiveness of our approach in achieving provable repairs and maintaining high fidelity.
\ours, PRDNN, REASSURE, and ART achieve a perfect 100\% $\mathrm{RSR}$ with provable repairs. Both ART and our method showcase remarkable $\mathrm{RGR}$ scores, achieving a perfect 100\% among all the evaluated tools. \ours also excels with a $\mathrm{FDD}$ of zero, outperforming CARE, ART, PRDNN, and REASSURE. These results emphasize our method's strength in delivering provable repairs and preserving high-fidelity functionality.

% answer RQ1: \ours achieves more effective repair results 
% than the baselines on the local robustness repairing and safety property violation correction,
% with high repairing success rate and lower drawdown.

% original rq1
% \noindent\doublebox{
%     \begin{minipage}{0.95\linewidth}
%     {\bf Answer to RQ1:} 
%      \ours consistently outperforms the baselines in both local robustness repair and safety property violation correction tasks. 
%      It achieves 100\% repair success rate coupled with 0\% drawdown with acceptable efficiency.
%     \end{minipage}
% }

\begin{table*}[t]
\caption{Results of repairing local robustness on CIFAR-10}
\label{tab:cifar10}
% \vspace{-1em}
\centering

\resizebox{\linewidth}{!}
{
\begin{tabular}{cc|rrrrrrrrrr|rrrrrrrrrr}
\hline
\multicolumn{2}{l|}{\multirow{3}{*}{}}                 & \multicolumn{10}{c|}{VGG19}                                                                                                                                                                                                                                                   & \multicolumn{10}{c}{ResNet-18}                                                                                                                                                                                                                                               \\ \cline{3-22} 
\multicolumn{2}{l|}{}                                  & \multicolumn{5}{c|}{$r = 4 /  255$}                                                                                                                     & \multicolumn{5}{c|}{$r = 8/255$}                                                                                                           & \multicolumn{5}{c|}{$r = 4/255$}                                                                                                                     & \multicolumn{5}{c}{$r = 8/255$}                                                                                                           \\ \cline{3-22} 
\multicolumn{2}{l|}{}                                  & \multicolumn{1}{c}{50} & \multicolumn{1}{c}{100} & \multicolumn{1}{c}{200} & \multicolumn{1}{c}{500} & \multicolumn{1}{c|}{1000}           & \multicolumn{1}{c}{50} & \multicolumn{1}{c}{100} & \multicolumn{1}{c}{200} & \multicolumn{1}{c}{500} & \multicolumn{1}{c|}{1000} & \multicolumn{1}{c}{50} & \multicolumn{1}{c}{100} & \multicolumn{1}{c}{200} & \multicolumn{1}{c}{500} & \multicolumn{1}{c|}{1000}           & \multicolumn{1}{c}{50} & \multicolumn{1}{c}{100} & \multicolumn{1}{c}{200} & \multicolumn{1}{c}{500} & \multicolumn{1}{c}{1000} \\ \hline
\multicolumn{1}{c|}{\multirow{5}{*}{RSR/\%}}  & CARE   & 8.0                    & 5.0                     & 4.0                     & 11.0                    & \multicolumn{1}{r|}{10.9}           & 2.0                    & 4.0                     & 2.0                     & 2.2                     & 2.4                       & 0.0                    & 0.0                     & 0.0                     & 0.0                     & \multicolumn{1}{r|}{0.0}            & 0.0                    & 0.0                     & 0.0                     & 0.0                     & 0.0                      \\
\multicolumn{1}{c|}{}                         & PRDNN  & \textbf{100.0}         & \textbf{100.0}          & \textbf{100.0}          & --                      & \multicolumn{1}{r|}{--}             & \textbf{100.0}                  & \textbf{100.0}                   & \textbf{100.0}                   & --                      & --                        & \textbf{100.0}                  & \textbf{100.0}                   & \textbf{100.0}                   & --                      & \multicolumn{1}{r|}{--}             & \textbf{100.0}                  & \textbf{100.0}                   & \textbf{100.0}                   & --                      & --                       \\
\multicolumn{1}{c|}{}                         & TRADE & 98.0                   & 95.0                    & 97.0                    & 95.0                    & \multicolumn{1}{r|}{96.4}           & \textbf{100.0}                  & 97.0                    & 93.5                    & 95.6                    & 95.8                      & \textbf{100.0}                  & 99.0                    & \textbf{100.0}                   & 99.8                    & \multicolumn{1}{r|}{99.5}           & \textbf{100.0}                  & \textbf{100.0}                    & \textbf{100.0}                    & 99.8                    & 96.8                     \\
\multicolumn{1}{c|}{}                         & APRNN  & \textbf{100.0}         & \textbf{100.0}          & \textbf{100.0}          & \textbf{100.0}          & \multicolumn{1}{r|}{\textbf{100.0}} & \textbf{100.0}         & \textbf{100.0}          & \textbf{100.0}          & \textbf{100.0}          & \textbf{100.0}            & \textbf{100.0}         & \textbf{100.0}          & \textbf{100.0}          & \textbf{100.0}          & \multicolumn{1}{r|}{\textbf{100.0}} & \textbf{100.0}         & \textbf{100.0}          & \textbf{100.0}          & \textbf{100.0}          & \textbf{100.0}           \\
\multicolumn{1}{c|}{}                         & Ours   & \textbf{100.0}         & \textbf{100.0}          & \textbf{100.0}          & \textbf{100.0}          & \multicolumn{1}{r|}{\textbf{100.0}} & \textbf{100.0}         & \textbf{100.0}          & \textbf{100.0}          & \textbf{100.0}          & \textbf{100.0}            & \textbf{100.0}         & \textbf{100.0}          & \textbf{100.0}          & \textbf{100.0}          & \multicolumn{1}{r|}{\textbf{100.0}} & \textbf{100.0}         & \textbf{100.0}          & \textbf{100.0}          & \textbf{100.0}          & \textbf{100.0}           \\ \hline
\multicolumn{1}{c|}{\multirow{5}{*}{RGR/\%}}  & CARE   & 6.6                    & 4.2                     & 3.6                     & 11.7                    & \multicolumn{1}{r|}{11.8}           & 2.0                    & 3.6                     & 2.7                     & 2.4                     & 2.9                       & 2.6                    & 2.7                     & 9.1                     & 12.2                    & \multicolumn{1}{r|}{6.7}            & 3.8                    & 2.6                     & 2.3                     & 2.7                     & 2.3                      \\
\multicolumn{1}{c|}{}                         & PRDNN  & 69.8                   & 68.2                    & 68.5                    & --                      & \multicolumn{1}{r|}{--}             & 58.0                   & 66.6                    & 61.4                    & --                      & --                        & 68.0                   & 68.6                    & 67.2                    & --                      & \multicolumn{1}{r|}{--}             & 59.8                   & 57.8                    & 54.9                    & --                      & --                       \\
\multicolumn{1}{c|}{}                         & TRADE & \textbf{100.0}                  & 86.0                    & 90.0                    & 97.6                    & \multicolumn{1}{r|}{97.0}           & \textbf{100.0}                   & \textbf{100.0}                    & 95.0                   & 98.0                    & 98.9                      & \textbf{100.0}                  & \textbf{100.0}                    & \textbf{100.0}                   & \textbf{100.0}                   & \multicolumn{1}{r|}{99.1}           & \textbf{100.0}                  & \textbf{100.0}                    & \textbf{100.0}                   & \textbf{100.0}                   & 97.3                     \\
\multicolumn{1}{c|}{}                         & APRNN  & 69.2                   & 68.9                    & 70.9                    & 75.5                    & \multicolumn{1}{r|}{80.4}           & 63.2                   & 65.2                    & 68.2                    & 66.1                    & 70.7                      & 70.6                   & 65.1                    & 62.0                    & 62.7                    & \multicolumn{1}{r|}{68.2}           & 65.0                   & 55.6                    & 53.0                    & 49.6                    & 59.0                     \\
\multicolumn{1}{c|}{}                         & Ours   & \textbf{100.0}         & \textbf{100.0}          & \textbf{100.0}          & \textbf{100.0}          & \multicolumn{1}{r|}{\textbf{100.0}} & \textbf{100.0}         & \textbf{100.0}          & \textbf{100.0}          & \textbf{100.0}          & \textbf{100.0}            & \textbf{100.0}         & \textbf{100.0}          & \textbf{100.0}          & \textbf{100.0}          & \multicolumn{1}{r|}{\textbf{100.0}} & \textbf{100.0}         & \textbf{100.0}          & \textbf{100.0}          & \textbf{100.0}          & \textbf{100.0}           \\ \hline
\multicolumn{1}{c|}{\multirow{5}{*}{DD/\%}}   & CARE   & 0.1                    & 0.2                     & 0.2                     & 0.2                     & \multicolumn{1}{r|}{0.3}            & 0.1                    & 0.2                     & 0.2                     & 0.1                     & 0.3                       & 0.1                    & 0.1                     & 0.2                     & 0.3                     & \multicolumn{1}{r|}{0.2}            & 0.1                    & 0.0                     & 0.0                     & 0.1                     & 0.1                      \\
\multicolumn{1}{c|}{}                         & PRDNN  & 13.6                   & 24.8                    & 30.3                    & --                      & \multicolumn{1}{r|}{--}             & 19.8                   & 40.5                    & 50.2                    & --                      & --                        & 2.1                    & 2.1                     & 2.0                     & --                      & \multicolumn{1}{r|}{--}             & 14.0                   & 13.7                    & 12.2                    & --                      & --                       \\
\multicolumn{1}{c|}{}                         & TRADE & 18.6                   & 11.9                    & 7.7                    & 6.3                    & \multicolumn{1}{r|}{6.1}           & 25.8                   & 18.5                    & 11.5                    & 8.5                    & 8.3                     & 43.4                   & 29.2                    & 20.1                    & 14.4                    & \multicolumn{1}{r|}{10.1}          & 38.2                   & 34.9                    & 15.9                    & 11.9                    & 10.1                     \\
\multicolumn{1}{c|}{}                         & APRNN  & 14.2                   & 20.0                    & 29.3                    & 48.7                    & \multicolumn{1}{r|}{46.4}           & 24.0                   & 39.5                    & 61.9                    & 75.5                    & 82.1                      & 1.8                    & 1.8                     & 2.2                     & 3.1                     & \multicolumn{1}{r|}{6.1}            & 5.5                    & 4.2                     & 7.6                     & 12.7                    & 19.5                     \\
\multicolumn{1}{c|}{}                         & Ours   & \textbf{0.0}           & \textbf{0.0}            & \textbf{0.0}            & \textbf{0.0}            & \multicolumn{1}{r|}{\textbf{0.0}}   & \textbf{0.0}           & \textbf{0.0}            & \textbf{0.0}            & \textbf{0.0}            & \textbf{0.0}              & \textbf{0.0}           & \textbf{0.0}            & \textbf{0.0}            & \textbf{0.0}            & \multicolumn{1}{r|}{\textbf{0.0}}   & \textbf{0.0}           & \textbf{0.0}            & \textbf{0.0}            & \textbf{0.0}            & \textbf{0.0}            \\ \hline
\multicolumn{1}{c|}{\multirow{5}{*}{DSR/\%}}  & CARE   & 0.0                    & 0.0                     & 0.0                     & 0.0                     & \multicolumn{1}{r|}{0.0}            & 0.0                    & 0.0                     & 0.0                     & 0.0                     & 0.0                       & 0.0                    & 0.0                     & 0.0                     & 0.0                     & \multicolumn{1}{r|}{0.0}            & 0.0                    & 0.0                     & 0.0                     & 0.0                     & 0.0                      \\
\multicolumn{1}{c|}{}                         & PRDNN  & 0.0                    & 0.0                     & 0.0                     & 0.0                     & \multicolumn{1}{r|}{0.0}            & 0.0                    & 0.0                     & 0.0                     & 0.0                     & 0.0                       & 0.0                    & 0.0                     & 0.0                     & 0.0                     & \multicolumn{1}{r|}{0.0}            & 0.0                    & 0.0                     & 0.0                     & 0.0                     & 0.0                      \\
\multicolumn{1}{c|}{}                         & APRNN  & 0.0                    & 0.0                     & 0.0                     & 0.0                     & \multicolumn{1}{r|}{0.0}            & 0.0                    & 0.0                     & 0.0                     & 0.0                     & 0.0                       & 0.0                    & 0.0                     & 0.0                     & 0.0                     & \multicolumn{1}{r|}{0.0}            & 0.0                    & 0.0                     & 0.0                     & 0.0                     & 0.0                      \\
\multicolumn{1}{c|}{}                         & TRADE & 0.0                    & 0.0                     & 0.0                     & 0.0                     & \multicolumn{1}{r|}{0.0}            & 0.0                    & 0.0                     & 0.0                     & 0.0                     & 0.0                       & 0.0                    & 0.0                     & 0.0                     & 0.0                     & \multicolumn{1}{r|}{0.0}            & 0.0                    & 0.0                     & 0.0                     & 0.0                     & 0.0                      \\
\multicolumn{1}{c|}{}                         & Ours   & \textbf{100.0}         & \textbf{100.0}          & \textbf{100.0}          & \textbf{100.0}          & \multicolumn{1}{r|}{\textbf{100.0}} & \textbf{100.0}         & \textbf{100.0}          & \textbf{100.0}          & \textbf{100.0}          & \textbf{100.0}            & \textbf{100.0}         & \textbf{100.0}          & \textbf{100.0}          & \textbf{100.0}          & \multicolumn{1}{r|}{\textbf{100.0}} & \textbf{100.0}         & \textbf{100.0}          & \textbf{100.0}          & \textbf{100.0}          & \textbf{100.0}           \\ \hline
\multicolumn{1}{c|}{\multirow{5}{*}{DGSR/\%}} & CARE   & 0.0                    & 0.0                     & 0.0                     & 0.0                     & \multicolumn{1}{r|}{0.0}            & 0.0                    & 0.0                     & 0.0                     & 0.0                     & 0.0                       & 0.0                    & 0.0                     & 0.0                     & 0.0                     & \multicolumn{1}{r|}{0.0}            & 0.0                    & 0.0                     & 0.0                     & 0.0                     & 0.0                      \\
\multicolumn{1}{c|}{}                         & PRDNN  & 0.0                    & 0.0                     & 0.0                     & --                      & \multicolumn{1}{r|}{--}             & 0.0                    & 0.0                     & 0.0                     & --                      & --                        & 0.0                    & 0.0                     & 0.0                     & --                      & \multicolumn{1}{r|}{--}             & 0.0                    & 0.0                     & 0.0                     & --                      & --                       \\
\multicolumn{1}{c|}{}                         & APRNN  & 0.0                    & 0.0                     & 0.0                     & 0.0                     & \multicolumn{1}{r|}{0.0}            & 0.0                    & 0.0                     & 0.0                     & 0.0                     & 0.0                       & 0.0                    & 0.0                     & 0.0                     & 0.0                     & \multicolumn{1}{r|}{0.0}            & 0.0                    & 0.0                     & 0.0                     & 0.0                     & 0.0                      \\
\multicolumn{1}{c|}{}                         & TRADE & 3.6                    & 3.7                     & 2.9                     & 3.4                     & \multicolumn{1}{r|}{3.4}            & 4.1                    & 3.7                     & 3.2                     & 2.3                     & 2.6                       & 5.7                    & 6.7                     & 6.0                     & 5.4                     & \multicolumn{1}{r|}{5.2}            & 5.6                    & 5.0                     & 5.4                     & 5.8                     & 5.2                      \\
\multicolumn{1}{c|}{}                         & Ours   & \textbf{93.4}          & \textbf{93.4}           & \textbf{93.4}           & \textbf{93.4}           & \multicolumn{1}{r|}{\textbf{93.4}}  & \textbf{93.4}          & \textbf{93.4}           & \textbf{93.4}           & \textbf{93.4}           & \textbf{93.4}             & \textbf{88.3}          & \textbf{88.3}           & \textbf{88.3}           & \textbf{88.3}           & \multicolumn{1}{r|}{\textbf{88.3}}  & \textbf{88.3}          & \textbf{88.3}           & \textbf{88.3}           & \textbf{88.3}           & \textbf{88.3}            \\ \hline
\multicolumn{1}{c|}{\multirow{5}{*}{Time/s}}  & CARE   & 47.8                   & 64.4                    & 106.5                   & 380.4                   & \multicolumn{1}{r|}{928.2}          & 36.9                   & 48.2                    & 96.0                    & 234.1                   & 443.0                     & 45.5                   & 58.9                    & 105.5                   & 243.5                   & \multicolumn{1}{r|}{384.9}          & 37.1                   & 75.4                    & 101.6                   & 180.7                   & 285.7                    \\
\multicolumn{1}{c|}{}                         & PRDNN  & 1731.8                 & 2640.6                  & 6307.4                  & --                      & \multicolumn{1}{r|}{--}             & 1160.4                 & 2210.2                  & 5126.9                  & --                      & --                        & 731.8                  & 1794.9                  & 3307.5                  & --                      & \multicolumn{1}{r|}{--}             & 715.4                  & 1300.2                  & 5757.3                  & --                      & --                       \\
\multicolumn{1}{c|}{}                         & TRADE & 68.7                   & 114.8                   & 194.6                   & 415.4                   & \multicolumn{1}{r|}{814.6}          & 67.2                   & 111.1                   & 187.2                   & 417.6                   & 806.8                     & 57.1                   & 107.8                   & 186.7                   & 405.0                   & \multicolumn{1}{r|}{794.3}          & 57.9                   & 107.8                   & 184.1                   & 408.1                   & 790.6                    \\
\multicolumn{1}{c|}{}                         & APRNN  & 188.9                  & 288.2                   & 451.3                   & 1645.6                  & \multicolumn{1}{r|}{6211.4}         & 209.2                  & 318.6                   & 492.4                   & 1750.4                  & 7787.1                    & 307.7                  & 746.1                   & 812.2                   & 1338.1                  & \multicolumn{1}{r|}{1710.8}         & 580.9                  & 990.8                   & 863.2                   & 1556.0                  & 1988.4                   \\
\multicolumn{1}{c|}{}                         & Ours   & 238.7                  & 471.6                   & 963.0                   & 2516.9                  & \multicolumn{1}{r|}{5441.6}         & 229.9                  & 470.7                   & 950.1                   & 2485.0                  & 5436.0                    & 265.8                  & 525.1                   & 1050.3                  & 2586.4                  & \multicolumn{1}{r|}{5262.3}         & 264.7                  & 523.1                   & 1042.8                  & 2627.8                  & 5325.1                   \\ \hline
\end{tabular}
}
% \vspace{-0.5em}
\end{table*}

\subsection{Generalization}
% In this experiment, we evaluate the generalization of the repaired network and its defense 
% capabilities against new adversarial attacks,  as shown in Table~\ref{tab:rq3}. \ours outperforms the other baselines in terms of $\mathrm{RGR}$, 
% showcasing superior performance. Notably, PRDNN, APRNN and TRADES also demonstrate some level of generalization, especially on 
% the convolutional neural network, while the remaining tools exhibit relatively poor results. In terms of $\mathrm{DSR}$, \ours exhibits superior performance compared to the other tools. TRADES is the only alternative showing a certain defensive capability against adversarial attacks on FNN\_big and CNN, particularly when the radius is small. When evaluating $\mathrm{DGSR}$, we find that \ours once again achieves the best performance. CARE, PRDNN, APRNN and TRADES demonstrate limited defense capabilities against FNN\_big and CNN, especially when the radius is small. Notice that $\mathrm{DGSR}$ is the most significant metric for assessing generalization to global inputs. These results are sufficient to show that \ours achieves a breakthrough in repairing adversarial attacks.
In the experiment, we assess the repaired network's generalization and resistance to new adversarial attacks (Table~\ref{tab:rq3}). \ours leads in $\mathrm{RGR}$, demonstrating superior performance compared to other baselines. PRDNN, APRNN, and TRADES also show some degree of generalization, particularly on CNNs. For $\mathrm{DSR}$, \ours outperforms competitors, with TRADES showing moderate defense against small-radius attacks on FNN\_big and CNN. Evaluating $\mathrm{DGSR}$, \ours again tops the list, while CARE, PRDNN, APRNN, and TRADES show limited defense against FNN\_big and CNN. $\mathrm{DGSR}$ is key for gauging generalization to global inputs . Overall, these findings underscore \ours' breakthrough in repairing adversarial attacks.

\noindent\doublebox{
    \begin{minipage}{0.95\linewidth}
    {\bf Answer to RQ1 and RQ2:} 
    \ours consistently outperforms the baselines in both local robustness repair and safety property violation correction tasks. 
    It achieves 100\% repair success rate coupled with 0\% drawdown and acceptable efficiency.
    Additionally, it demonstrates significant generalization against unforeseen adversarial attacks.
    
    % achieving nearly the same level of accuracy。
    \end{minipage}
}

\subsection{Scalability Evaluation}
We examine \ours' scalability on VGG19 and ResNet18 for CIFAR-10 local robustness repair via feature-space repair. Results in Table~\ref{tab:cifar10} show \ours, PRDNN, and APRNN achieve 100\% $\mathrm{RSR}$, while TRADES follows closely. 
% CARE fails to address local robustness on CIFAR-10.
\ours leads in $\mathrm{RGR}$ and $\mathrm{DD}$ with TRADES following suit. 
\ours also significantly outperforms others in $\mathrm{DSR}$ and  matches \textit{accuracy levels}  in $\mathrm{DGSR}$, surpassing all tools. On larger datasets like Tiny ImageNet, \ours maintains superiority in repairing WRN101-2 and ResNet152, as seen in Table~\ref{tab:tinyimagenet}. \ours' running time is reasonable given its superior repair and generalization abilities. Although TRADES performs comparably, \ours consistently delivers higher performance across various metrics. 
For both datasets, patch modules are applied at the network's second-to-last layer, using a fully connected network with a single linear layer. The linear layer takes the output from the network's penultimate layer as input, and its output is added to the original network's output.

\noindent\doublebox{
    \begin{minipage}{0.95\linewidth}
    {\bf Answer to RQ3:} 
    \ours demonstrates good scalability in repairing local robustness on large-scale DNNs,
    which also achieves a 100\% repair success rate coupled with 0\% drawdown,
    outperforming the other tools.
    \end{minipage}
}

\begin{table*}[t]
\caption{Results of repairing local robustness on Tiny ImageNet}
\label{tab:tinyimagenet}
% \vspace{-1em}
\centering
\resizebox{\linewidth}{!}
{
% Please add the following required packages to your document preamble:
% \usepackage{multirow}
\begin{tabular}{c|c|r|rrr|rrr|rrr|rrr|rrr|rrr}
\hline
\multirow{2}{*}{Model}                                                & \multirow{2}{*}{$r$}     & \multicolumn{1}{c|}{\multirow{2}{*}{$n$}} & \multicolumn{3}{c|}{RSR/\%}                                                       & \multicolumn{3}{c|}{RGR/\%}                                                       & \multicolumn{3}{c|}{DD/\%}                                                       & \multicolumn{3}{c|}{DSR/\%}                                                       & \multicolumn{3}{c|}{DGSR/\%}                                                       & \multicolumn{3}{c}{Time/s}                                                      \\ \cline{4-21} 
                                                                      &                        & \multicolumn{1}{c|}{}                   & \multicolumn{1}{c}{CARE} & \multicolumn{1}{c}{TRADE}  & \multicolumn{1}{c|}{Ours} & \multicolumn{1}{c}{CARE} & \multicolumn{1}{c}{TRADE}  & \multicolumn{1}{c|}{Ours} & \multicolumn{1}{c}{CARE} & \multicolumn{1}{c}{TRADE} & \multicolumn{1}{c|}{Ours} & \multicolumn{1}{c}{CARE} & \multicolumn{1}{c}{TRADE}  & \multicolumn{1}{c|}{Ours} & \multicolumn{1}{c}{CARE} & \multicolumn{1}{c}{TRADE}   & \multicolumn{1}{c|}{Ours} & \multicolumn{1}{c}{CARE} & \multicolumn{1}{c}{TRADE} & \multicolumn{1}{c}{Ours} \\ \hline
\multirow{4}{*}{\begin{tabular}[c]{@{}c@{}}WRN\\ 101-2\end{tabular}}  & \multirow{2}{*}{$\frac{2}{255}$} & 500                                     & 0.0~~          & \textbf{100.0~~} & \textbf{100.0}            & 1.2~~          & \textbf{100.0~~} & \textbf{100.0}            & 0.1~~          & 46.8~~          & \textbf{0.0}              & 0.0~~          & \textbf{100.0~~} & \textbf{100.0}            & 0.0~~          & 6.6~~~~ & \textbf{64.4}             & 768.7~~        & 5040.8~              & 2029.6                   \\
                                                                      &                        & 1000                                    & 0.0~~          & \textbf{100.0~~} & \textbf{100.0}            & 0.5~~          & \textbf{100.0~~} & \textbf{100.0}            & \textbf{0.0~~} & 43.2~~          & \textbf{0.0}              & 0.0~~          & \textbf{100.0~~} & \textbf{100.0}            & 0.0~~          & 7.3~~~~ & \textbf{64.4}             & 1345.7~~       & 7088.6~              & 3260.7                   \\
                                                                      & \multirow{2}{*}{$\frac{4}{255}$} & 500                                     & 0.0~~          & \textbf{100.0~~} & \textbf{100.0}            & 0.0~~          & \textbf{100.0~~} & \textbf{100.0}            & \textbf{0.0~~} & 51.8~~          & \textbf{0.0}              & 0.0~~          & 97.6~~           & \textbf{100.0}            & 0.0~~          & 1.8~~~~ & \textbf{64.4}             & 721.9~~        & 4985.5~              & 1731.8                   \\
                                                                      &                        & 1000                                    & 0.0~~          & \textbf{100.0~~} & \textbf{100.0}            & 0.0~~          & \textbf{100.0~~} & \textbf{100.0}            & 0.1~~          & 49.5~~          & \textbf{0.0}              & 0.0~~          & 97.2~~           & \textbf{100.0}            & 0.0~~          & 2.1~~~~ & \textbf{64.4}             & 1440.8~~       & 6976.2~              & 3221.2                   \\ \hline
\multirow{4}{*}{\begin{tabular}[c]{@{}c@{}}ResNet\\ 152\end{tabular}} & \multirow{2}{*}{$\frac{2}{255}$} & 500                                     & 0.0~~          & \textbf{100.0~~} & \textbf{100.0}            & 0.0~~          & \textbf{100.0~~} & \textbf{100.0}            & \textbf{0.0~~} & 64.5~~          & \textbf{0.0}              & 0.0~~          & \textbf{100.0~~} & \textbf{100.0}            & 0.0~~          & 5.2~~~~ & \textbf{68.2}             & 545.3~~        & 4667.7~              & 2219.6                   \\
                                                                      &                        & 1000                                    & 0.0~~          & \textbf{100.0~~} & \textbf{100.0}            & 0.0~~          & \textbf{100.0~~} & \textbf{100.0}            & \textbf{0.0~~} & 62.6~~          & \textbf{0.0}              & 0.0~~          & 99.9~~           & \textbf{100.0}            & 0.0~~          & 6.1~~~~ & \textbf{68.2}             & 1092.9~~       & 7503.5~              & 4703.7                   \\
                                                                      & \multirow{2}{*}{$\frac{4}{255}$} & 500                                     & 0.0~~          & \textbf{100.0~~} & \textbf{100.0}            & 0.0~~          & \textbf{100.0~~} & \textbf{100.0}            & \textbf{0.0~~} & 64.5~~          & \textbf{0.0}              & 0.0~~          & 97.6~~           & \textbf{100.0}            & 0.0~~          & 1.3~~~~ & \textbf{68.2}             & 535.4~~        & 4624.2~              & 2336.1                   \\
                                                                      &                        & 1000                                    & 0.0~~          & \textbf{100.0~~} & \textbf{100.0}            & 0.0~~          & \textbf{100.0~~} & \textbf{100.0}            & \textbf{0.0~~} & 63.6~~          & \textbf{0.0}              & 0.0~~          & 98.5~~           & \textbf{100.0}            & 0.0~~          & 2.0~~~~ & \textbf{68.2}             & 1028.1~~       & 7499.9~              & 4366.8                   \\ \hline
\end{tabular}
}
% \vspace{-1em}
\end{table*}

\begin{table*}
\caption{The efficacy of patch module scale}
\label{tab:rq4}
\centering
\resizebox{\linewidth}{!}{
\begin{tabular}{cc|rrrrrrrrrr|rrrrrrrrrr}
\hline
\multicolumn{2}{l|}{\multirow{3}{*}{}}              & \multicolumn{10}{c|}{VGG19}                                                                                                                                                                                                                                                                                                                                      & \multicolumn{10}{c}{ResNet-18}                                                                                                                                                                                                                                                                                                                                  \\ \cline{3-22} 
\multicolumn{2}{l|}{}                               & \multicolumn{5}{c|}{$r = 4 /  255$}                                                                                                                                                       & \multicolumn{5}{c|}{$r = 8 /  255$}                                                                                                                                                    & \multicolumn{5}{c|}{$r = 4 /  255$}                                                                                                                                                       & \multicolumn{5}{c}{$r = 8 /  255$}                                                                                                                                                    \\ \cline{3-22} 
\multicolumn{2}{l|}{}                               & \multicolumn{1}{c}{50}           & \multicolumn{1}{c}{100}          & \multicolumn{1}{c}{200}          & \multicolumn{1}{c}{500}          & \multicolumn{1}{c|}{1000}            & \multicolumn{1}{c}{50}           & \multicolumn{1}{c}{100}          & \multicolumn{1}{c}{200}          & \multicolumn{1}{c}{500}          & \multicolumn{1}{c|}{1000}         & \multicolumn{1}{c}{50}           & \multicolumn{1}{c}{100}          & \multicolumn{1}{c}{200}          & \multicolumn{1}{c}{500}          & \multicolumn{1}{c|}{1000}            & \multicolumn{1}{c}{50}           & \multicolumn{1}{c}{100}          & \multicolumn{1}{c}{200}          & \multicolumn{1}{c}{500}          & \multicolumn{1}{c}{1000}         \\ \hline
\multicolumn{1}{c|}{\multirow{2}{*}{DSR/\%}}   & PS & \textbf{100.0}                   & \textbf{100.0}                   & \textbf{100.0}                   & \textbf{100.0}                   & \multicolumn{1}{r|}{\textbf{100.0}}  & \textbf{100.0}                   & \textbf{100.0}                   & \textbf{100.0}                   & \textbf{100.0}                   & \textbf{100.0}                    & \textbf{100.0}                   & \textbf{100.0}                   & \textbf{100.0}                   & \textbf{100.0}                   & \multicolumn{1}{r|}{\textbf{100.0}}  & \textbf{100.0}                   & \textbf{100.0}                   & \textbf{100.0}                   & \textbf{100.0}                   & \textbf{100.0}                   \\
\multicolumn{1}{c|}{}                          & PL & \textbf{100.0}                   & \textbf{100.0}                   & \textbf{100.0}                   & \textbf{100.0}                   & \multicolumn{1}{r|}{\textbf{100.0}}  & \textbf{100.0}                   & \textbf{100.0}                   & \textbf{100.0}                   & \textbf{100.0}                   & \textbf{100.0}                    & 98.0                             & 99.0                             & 99.5                             & 99.2                             & \multicolumn{1}{r|}{99.4}            & \textbf{100.0}                   & \textbf{100.0}                   & 99.5                             & 99.8                             & 99.8                             \\ \hline
\multicolumn{1}{c|}{\multirow{2}{*}{DGSR/\%}}  & PS & \textbf{93.4}                    & \textbf{93.4}                    & \textbf{93.4}                    & \textbf{93.4}                    & \multicolumn{1}{r|}{\textbf{93.4}}   & \textbf{93.4}                    & \textbf{93.4}                    & \textbf{93.4}                    & \textbf{93.4}                    & \textbf{93.4}                     & \textbf{88.3}                    & \textbf{88.3}                    & \textbf{88.3}                    & \textbf{88.3}                    & \multicolumn{1}{r|}{\textbf{88.3}}   & \textbf{88.3}                    & \textbf{88.3}                    & \textbf{88.3}                    & \textbf{88.3}                    & \textbf{88.3}                    \\
\multicolumn{1}{c|}{}                          & PL & 93.2                             & 93.2                             & 92.8                             & 92.0                             & \multicolumn{1}{r|}{92.3}            & 91.2                             & 93.3                             & \textbf{93.4}                    & \textbf{93.4}                    & \textbf{93.4}                     & 60.0                             & 77.5                             & 84.2                             & 81.4                             & \multicolumn{1}{r|}{87.2}            & 68.6                             & 74.5                             & 77.3                             & 82.5                             & 86.6                             \\ \hline
\multicolumn{1}{c|}{\multirow{2}{*}{Time/s}}   & PS & \textbf{238.7}                   & \textbf{471.6}                   & \textbf{963.0}                   & \textbf{2516.9}                  & \multicolumn{1}{r|}{\textbf{5441.6}} & \textbf{229.9}                   & \textbf{470.7}                   & \textbf{950.1}                   & \textbf{2485.0}                  & \textbf{5436.0}                   & \textbf{265.8}                   & \textbf{525.1}                   & \textbf{1050.3}                  & \textbf{2586.4}                  & \multicolumn{1}{r|}{\textbf{5262.3}} & \textbf{264.7}                   & \textbf{523.1}                   & \textbf{1042.8}                  & \textbf{2627.8}                  & \textbf{5325.1}                  \\
\multicolumn{1}{c|}{}                          & PL & 263.3                            & 497.1                            & 987.1                            & 2562.7                           & \multicolumn{1}{r|}{4991.7}          & 244.7                            & 495.5                            & 992.9                            & 2468.7                           & 5169.8                            & 292.5                            & 583.4                            & 1155.1                           & 3076.9                           & \multicolumn{1}{r|}{5865.6}          & 341.5                            & 592.4                            & 1154                             & 3007.7                           & 5941.1                           \\ \hline
\multicolumn{1}{c|}{\multirow{2}{*}{Verified}} & PS & \multicolumn{1}{c}{\textbf{\checkmark}} & \multicolumn{1}{c}{\textbf{\checkmark}} & \multicolumn{1}{c}{\textbf{\checkmark}} & \multicolumn{1}{c}{\textbf{\checkmark}} & \multicolumn{1}{c|}{\textbf{\checkmark}}    & \multicolumn{1}{c}{\textbf{\checkmark}} & \multicolumn{1}{c}{\textbf{\checkmark}} & \multicolumn{1}{c}{\textbf{\checkmark}} & \multicolumn{1}{c}{\textbf{\checkmark}} & \multicolumn{1}{c|}{\textbf{\checkmark}} & \multicolumn{1}{c}{\textbf{\checkmark}} & \multicolumn{1}{c}{\textbf{\checkmark}} & \multicolumn{1}{c}{\textbf{\checkmark}} & \multicolumn{1}{c}{\textbf{\checkmark}} & \multicolumn{1}{c|}{\textbf{\checkmark}}    & \multicolumn{1}{c}{\textbf{\checkmark}} & \multicolumn{1}{c}{\textbf{\checkmark}} & \multicolumn{1}{c}{\textbf{\checkmark}} & \multicolumn{1}{c}{\textbf{\checkmark}} & \multicolumn{1}{c}{\textbf{\checkmark}} \\
\multicolumn{1}{c|}{}                          & PL & \multicolumn{1}{c}{\textbf{\checkmark}} & \multicolumn{1}{c}{$\times$}           & \multicolumn{1}{c}{$\times$}           & \multicolumn{1}{c}{$\times$}           & \multicolumn{1}{c|}{$\times$}              & \multicolumn{1}{c}{$\times$}           & \multicolumn{1}{c}{$\times$}           & \multicolumn{1}{c}{$\times$}           & \multicolumn{1}{c}{$\times$}           & \multicolumn{1}{c|}{$\times$}           & \multicolumn{1}{c}{\textbf{\checkmark}} & \multicolumn{1}{c}{\textbf{\checkmark}} & \multicolumn{1}{c}{\textbf{\checkmark}} & \multicolumn{1}{c}{\textbf{\checkmark}} & \multicolumn{1}{c|}{$\times$}              & \multicolumn{1}{c}{$\times$}           & \multicolumn{1}{c}{$\times$}           & \multicolumn{1}{c}{$\times$}           & \multicolumn{1}{c}{$\times$}           & \multicolumn{1}{c}{$\times$}           \\ \hline
\end{tabular}

}
\vspace{-1em}
\end{table*}

\begin{table*}
    \caption{The efficacy of single patch moudle}
    \label{tab:rq4_single}
    \centering
    \resizebox{\linewidth}{!}{
    \begin{tabular}{c|rrrrrrrrrr|rrrrrrrrrr}
    \hline
    \multirow{3}{*}{} & \multicolumn{10}{c|}{VGG19}                                                                                                                                                                                                                                         & \multicolumn{10}{c}{ResNet-18}                                                                                                                                                                                                                                     \\ \cline{2-21} 
                      & \multicolumn{5}{c|}{r = 4/255}                                                                                                   & \multicolumn{5}{c|}{r = 8/255}                                                                                                   & \multicolumn{5}{c|}{r = 4/255}                                                                                                   & \multicolumn{5}{c}{r = 8/255}                                                                                                   \\ \cline{2-21} 
                      & \multicolumn{1}{c}{50} & \multicolumn{1}{c}{100} & \multicolumn{1}{c}{200} & \multicolumn{1}{c}{500} & \multicolumn{1}{c|}{1000} & \multicolumn{1}{c}{50} & \multicolumn{1}{c}{100} & \multicolumn{1}{c}{200} & \multicolumn{1}{c}{500} & \multicolumn{1}{c|}{1000} & \multicolumn{1}{c}{50} & \multicolumn{1}{c}{100} & \multicolumn{1}{c}{200} & \multicolumn{1}{c}{500} & \multicolumn{1}{c|}{1000} & \multicolumn{1}{c}{50} & \multicolumn{1}{c}{100} & \multicolumn{1}{c}{200} & \multicolumn{1}{c}{500} & \multicolumn{1}{c}{1000} \\ \hline
    RSR/\%            & 10.0                   & 18.0                    & 16.5                    & 18.2                    & \multicolumn{1}{r|}{16.7} & 10.0                   & 9.0                     & 14.5                    & 11.8                    & 11.6                      & 12.0                   & 18.0                    & 16.5                    & 16.2                    & \multicolumn{1}{r|}{16.2} & 10.0                   & 8.0                     & 12.0                    & 11.6                    & 11.3                     \\ \hline
    RGR/\%            & 10.0                   & 18.0                    & 16.5                    & 18.2                    & \multicolumn{1}{r|}{16.7} & 10.0                   & 9.0                     & 14.5                    & 11.8                    & 11.6                      & 12.0                   & 18.0                    & 16.5                    & 16.2                    & \multicolumn{1}{r|}{16.2} & 10.0                   & 8.0                     & 12.0                    & 11.6                    & 11.3                     \\ \hline
    DSR/\%            & 6.0                    & 4.0                     & 2.5                     & 1.8                     & \multicolumn{1}{r|}{1.3}  & 6.0                    & 6.0                     & 3.0                     & 2.0                     & 1.3                       & 4.0                    & 1.0                     & 3.5                     & 1.8                     & \multicolumn{1}{r|}{1.3}  & 8.0                    & 6.0                     & 4.0                     & 4.8                     & 4.1                      \\ \hline
    DGSR/\%           & 5.8                    & 5.2                     & 3.9                     & 3.9                     & \multicolumn{1}{r|}{3.8}  & 5.7                    & 5.6                     & 2.05                    & 1.9                     & 1.9                       & 5.2                    & 1.5                     & 1.5                     & 1.5                     & \multicolumn{1}{r|}{1.4}  & 7.9                    & 7.2                     & 5.4                     & 4.8                     & 4.8                      \\ \hline
    \end{tabular}
    }
    % \vspace{-1em}
\end{table*}

\subsection{Impact of Patch Module Size and Quantity on Efficiency}

% In this experiment, we assess the efficacy of size and quantity of patch modules in the context of repairing local robustness properties. 
% We conduct a comparative analysis of \ours's performance across various sizes and quantities, presenting the experimental outcomes in Table \ref{tab:rq4}. Two distinct patch module sizes, namely small and large, are considered. The small patch module consists of one linear layer with 64 neurons in the first two layers and 16 neurons in the last layer, while the large patch module comprises three hidden layers with 256 neurons in the first two layers and 64 neurons in the last layer. 
% The performance of \ours is evaluated across different combinations of patch module scales and quantities, 
% including small property-guided patch module (PS) and large property-guided patch module (PL)
% Notably, the experimental findings 
% indicate that both sizes of the patch modules exhibit commendable performance, with minimal 
% impact observed on the repair performance due to the network size. 
% Concerning the quantity of patch modules, 
% the property-guided patch modules outperform the untargeted label repairing patch modules in defending against adversarial attacks within $D_t$. We hypothesize that this superiority is attributed to the specialization inherent in property-guided repair.

In this experiment, we evaluate the effectiveness of patch module size and quantity in repairing local robustness properties. We perform a comparative analysis of \ours's performance across different sizes and quantities, presenting the experimental results in Table \ref{tab:rq4}. The performance of \ours is assessed using VGG19 and ResNet18 on the CIFAR-10 dataset, across various combinations of patch module scales and quantities, including small patch modules (PS) and large patch modules (PL).

Two distinct patch module sizes are considered: small and large. The small patch module consists of a single linear layer, which takes the output from the network's penultimate layer as input. The large patch module also takes the output from the penultimate layer as input but additionally comprises three hidden layers with 200 neurons each. It is noted that larger patch networks tend to yield poorer results, possibly due to increased over approximation error when passing through ReLU nodes in the hidden layers. As the number of hidden layers increases, the cumulative over approximation error grows, leading to less effective repairs. Smaller patch scales slightly reduce the repair time. Additionally, when using DeepPoly to verify the feature layer repairs after applying smaller patches, the verification success rate is higher.

We also tested the effect of using a single patch to repair the neural network, and the results are shown in 
Table~\ref{tab:rq4_single}. It can be observed that the repair effectiveness using a single patch is significantly 
inferior to using multiple patches. A single patch only achieves 10\% to 20\% RSR and RGR. In contrast, using multiple
patches results in both RSR and RGR reaching 100\%, as shown in Table~\ref{tab:cifar10}. 
Additionally, the single patch performs much worse in terms of DSR and DGSR. 
Therefore, using multiple patches for repair is highly beneficial.

\noindent\doublebox{
    \begin{minipage}{0.96\linewidth}
    {\bf Answer to RQ4:} 
    % Interestingly, we observe that the scale of the patch modules do not emerge as a critical factor influencing repair performance.
    We observe that smaller patch modules are more effective, likely due to reduced over approximation error and shorter repair times, as well as higher verification success rates with abstract interpretation tools.
    For the aspect of patch module quantity, multiple patch modules significantly outperform only one patch module.

\end{minipage}
}

\section{Related work}
\textit{Provable DNN repair.}
The methods most closely related to ours are ART~\cite{art} and REASSURE~\cite{reassure}.
In comparison to REASSURE, our approach fundamentally differs in its repair objectives.
REASSURE focuses on fixing activation patterns of neural networks,
 yet for high-dimensional data, the input constraints of a property may encompass 
 a large number of activation patterns. In contrast, \ours leverages formal verification to directly repair properties.
Although ART also employs training in its repair process, 
it modifies parameters in the original neural network.
On the other hand, \ours uses patch modules specifically to fix properties, 
ensuring the performance of the original network.
Other methods such as PRDNN \cite{prdnn} and APRNN \cite{aprnn} formulate the repair problem in linear programming. 
However, their efficacy is limited on properties with high-dimensional polytopes, such as the robustness of 
image classification.
% %\vspace{-1em}

\textit{Heuristic DNN repair.}
Utilizing heuristic algorithms such as particle swarm optimization and differential evolution,
CARE~\cite{care} and Arachne~\cite{sohn2022arachne} aim to pinpoint the neurons responsible for faults.
VeRe~\cite{vere_paper} focuses on providing formal verification guidance to assist fault localization,
and defines target intervals for repair synthesis.
For example, DeepRepair~\cite{yu2021deeprepair} and few-shot guided mix~\cite{ren2020few} expand the set of negative samples to generate additional training data.
Conversely, Tian et al.~\cite{tian2018deeptest} augment the existing data by introducing real-world environmental effects, such as fog, to the samples.
DL2~\cite{dl2} fuses logical constraints and loss functions, but without convex-certified guarantees. 
% %\vspace{-1em}

\textit{DNN verification.}
% In 2010, \cite{pulina2012challenging} proposed the first method for DNN verification. 
Over the past
decade, various approaches of DNN verification has been proposed including techniques such as
constraint 
%bunel2020branch,gokulanathan2020simplifying,lin2019robustness,narodytska2018verifying
solving~\cite{katz2017reluplex,marabou,planet,DBLP:conf/cav/HuangKWW17}, abstract interpretation \cite{gehr2018ai2,singh2019abstract,singh2018fast,li2019analyzing,krelu}, linear relaxation \cite{julian2019verifying,paulsen2020reludiff,xu2020automatic,fastlin,batten2021efficient}, global optimisation \cite{RHK2018,DBLP:conf/nfm/DuttaJST18,DBLP:conf/ijcai/RuanWSHKK19}, CEGAR \cite{atva2020,abstractioncav20,atva2022}, reduction to two-player games \cite{DBLP:conf/tacas/WickerHK18,DBLP:journals/tcs/WuWRHK20}, and star-set abstraction \cite{starset19,imagestar20}. These method offer provable estimations of DNN robustness. Moreover, statistical approaches, presented in \cite{statistical19,DBLP:conf/icml/WengCNSBOD19,baluta2021scalable,quanti5,quanti6,BayesianIJCAI,huangpei01,Probalistic2019,BayesianAAAI,deeppac}, prove to be more efficient and scalable, particularly suited for intricate DNN structures, allowing for the establishment of quantifiable robustness at a specified confidence level. Certified training \cite{colt,TAPS,sabr} uses convex approximation in training the loss function, but it is only used to enhance the robustness of DNN, without making it accessible to repair properties in the training procedure.

% Our approach is a provable DNN repair technique, related to ART~\cite{art} and REASSURE~\cite{reassure}. We directly guarantee properties encompassing multiple activation patterns, contrasting REASSURE's pattern-fixing focus and offering superior generalization. ART tunes network parameters, while we train property-specific patches for enhanced effectiveness. PRDNN~\cite{prdnn} and APRNN~\cite{aprnn} use linear programming but struggle with high-dimensional polytopes. Heuristic methods (CARE~\cite{care}, Arachne~\cite{sohn2022arachne}, VeRe~\cite{vere_paper}, DeepRepair~\cite{yu2021deeprepair} localize faults and augment data. DL2~\cite{dl2} fuses logical constraints and loss functions without convex-certified guarantees.DNN verification began with~\cite{pulina2012challenging} (2010), spawning techniques like constraint solving~\cite{bunel2020branch,katz2017reluplex,marabou,planet,DBLP:conf/cav/HuangKWW17}, abstract interpretation~\cite{gehr2018ai2,singh2019abstract,singh2018fast,krelu}, linear relaxation~\cite{julian2019verifying,paulsen2020reludiff,xu2020automatic,fastlin,batten2021efficient}, global optimization~\cite{RHK2018,DBLP:conf/nfm/DuttaJST18,DBLP:conf/ijcai/RuanWSHKK19}, CEGAR~\cite{atva2020,abstractioncav20,atva2022} and star-set abstraction, all providing provable robustness. Statistical methods~\cite{statistical19,DBLP:conf/icml/WengCNSBOD19,BayesianIJCAI,BayesianAAAI,deeppac} excel in efficiency and scalability, enabling robustness quantification at specified confidence levels.

\section{Conclusion}

We introduce \ours, a novel approach for property-based repair of local robustness using limited data. Our method provides patch modules as neural networks to repair within the robustness neighborhood, enabling the generalization of this defense to other inputs. In terms of efficiency, scalability, and generalization, our approach surpasses existing methods. 

\section{Acknowledgements}

This work is supported by CAS Project for Young Scientists in Basic Research, Grant No.YSBR-040, ISCAS Basic Research ISCAS-JCZD-202302, and ISCAS New Cultivation Project ISCAS-PYFX-202201.
% The limitation of our work is that the misclassification of may result from issues such as backdoor attacks, biased training data, or overfitting, and these situations are beyond the scope of repairing adversarial attacks and thus cannot be addressed in this work. our patch allocation may assign inappropriate patches when the input is missclassified, and in this case the repair may fail.
The limitation of our work is that our patch allocation may assign inappropriate patches when the input is misclassified, and in this case \ours may fail to repair its local robustness.
For future work, we plan to integrate verification methods based on Branch-and-Bound like \cite{reluplex} to enhance the performance of our approach, improving both precision and offering an alternative refinement approach. Additionally, our patch-based repair framework has the potential to evolve into a black-box repair method combined with black-box verification methods like \cite{deeppac}.

%%
%% The next two lines define the bibliography style to be used, and
%% the bibliography file.
\clearpage

\bibliographystyle{IEEEtran}
\bibliography{patchpro}

%
% If your work has an appendix, this is the place to put it.
\clearpage
\appendices

\section{Proof of Theorem 3.4}
\setcounter{theorem}{0} % 重置定理计数器为适当的值
\begin{theorem}
    Let $\varphi=(F,B(x_i,r))$ be a local robustness property.
    If~$\mathcal L(\varphi) = 0$ on $B(x_i,r)$, i.e.,
    \begin{flalign*}
        \mathcal L^*(\varphi):=&\max (\elmax(\alpha_{\ell}^\mathrm{T},\bm 0) \cdot (x_i+r\cdot\bm 1)+ \elmin(\alpha_{\ell}^\mathrm{T},\bm 0) \cdot  &\\
        & (x_i-r\cdot \bm 1) + \beta_\ell,0) = 0,&        
    \end{flalign*}
    where $\elmax$ and $\elmin$ are the element-wise $\max$ and $\min$ operation, $\bm 0$ and $\bm 1$ are the vector in $\mathbb R^{n_0}$ with all the entries $0$ and $1$, respectively, then the property $\varphi$ holds. 
\end{theorem}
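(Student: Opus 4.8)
The plan is to reduce the statement to one elementary fact --- the exact value of the maximum of an affine function over an $L_\infty$-ball --- and then combine it with the soundness of the DeepPoly relaxation recorded in Eq.~\eqref{eq:deeppoly}.

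First I would observe that $B(x_i,r)$ is the axis-aligned box $\prod_{d=1}^{n_0}[x_{i,d}-r,\, x_{i,d}+r]$, and that the affine functional $x\mapsto\alpha_\ell^\mathrm{T} x+\beta_\ell$ is maximized over this box at the vertex chosen coordinatewise: push the $d$th coordinate to $x_{i,d}+r$ when $(\alpha_\ell)_d>0$ and to $x_{i,d}-r$ when $(\alpha_\ell)_d<0$ (the choice being irrelevant when $(\alpha_\ell)_d=0$). Summing the coordinatewise contributions yields
\[
\max_{x\in B(x_i,r)}\bigl(\alpha_\ell^\mathrm{T} x+\beta_\ell\bigr)=\elmax(\alpha_\ell^\mathrm{T},\bm 0)\cdot(x_i+r\cdot\bm 1)+\elmin(\alpha_\ell^\mathrm{T},\bm 0)\cdot(x_i-r\cdot\bm 1)+\beta_\ell,
\]
which is exactly the term appearing inside $\mathcal L^*(\varphi)$. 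Since $\mathcal L^*(\varphi)=\sum_{\ell\ne\ell_0}\max(\cdot,0)$ is a sum of nonnegative terms, $\mathcal L^*(\varphi)=0$ holds iff $\alpha_\ell^\mathrm{T} x+\beta_\ell\le 0$ for every $\ell\ne\ell_0$ and every $x\in B(x_i,r)$, equivalently $\max(\alpha_\ell^\mathrm{T} x+\beta_\ell,0)\equiv 0$ on $B(x_i,r)$, i.e.\ $\mathcal L(\varphi)\equiv 0$ on $B(x_i,r)$; this justifies the ``i.e.'' in the statement.

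Then I would substitute this bound into Eq.~\eqref{eq:deeppoly}: for every $x\in B(x_i,r)$ and every $\ell\ne\ell_0$ we obtain $F(x)_\ell-F(x)_{\ell_0}\le\alpha_\ell^\mathrm{T} x+\beta_\ell\le 0$, hence $F(x)_{\ell_0}\ge F(x)_\ell$ for all competing labels, so $C_F(x)=\ell_0=C_F(x_i)$ and the property $\varphi$ holds. The hard part is not conceptual but lies in the first step: one must verify that the $\elmax/\elmin$ expression is genuinely the box maximum of the affine upper bound and not merely a further over-approximation of it, since the whole argument hinges on this identity being tight. A minor additional subtlety is that the relaxation delivers $F(x)_\ell-F(x)_{\ell_0}\le 0$ rather than a strict inequality; this is handled either by reading DeepPoly's output as a strictly negative numerical bound whenever $\mathcal L^*=0$, or by resolving the $\arg\max$ in favour of $\ell_0$ on ties --- under either convention the classification is unchanged and $\varphi$ is satisfied.
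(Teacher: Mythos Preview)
Your proposal is correct and follows essentially the same approach as the paper: both compute the box maximum of the affine DeepPoly bound via the coordinatewise sign-split (yielding the $\elmax/\elmin$ expression), establish the equivalence between $\mathcal L^*(\varphi)=0$ and $\mathcal L(\varphi)\equiv 0$ on $B(x_i,r)$, and then invoke Eq.~\eqref{eq:deeppoly} to conclude $F(x)_\ell-F(x)_{\ell_0}\le 0$. Your observation about the non-strict inequality is a genuine subtlety that the paper's proof also leaves unaddressed (it too only derives $\le 0$), so you are in fact being slightly more careful than the original.
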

% \ell \ne \ell_0
\begin{proof}
    % To get the $\max_{x\in B(x_i,r)}\alpha_\ell^\mathrm{T} x + \beta_\ell$, we add the product of upper bound of entries of $x_i$ and entries of $\alpha_\ell$ larger than $0$ to the product of lower bound of entries of $x_i$ and entries of $\alpha_\ell$ less than $0$. Then we have
    As $\alpha_\ell^\mathrm{T} x + \beta_\ell$ is a linear function with respect to $x$, we can calculate the maximum of $\alpha_\ell^\mathrm{T} x + \beta_\ell$ on ${B(x_i,r)}$ as follows: 
    \begin{align*}
          &\quad \quad\max_{x\in B(x_i,r)}\alpha_\ell^\mathrm{T} x + \beta_\ell \\
        &= \sum_{i \in \mathbb R^{n_0}}\mathbbm{1}_{\{(\alpha_\ell)_{i} > 0\}}(\alpha_\ell)_{i} \max_{x\in B(x_i,r)}x \\
        & + 
        \sum_{i \in \mathbb R^{n_0}}\mathbbm{1}_{\{(\alpha_\ell)_{i} < 0\}}(\alpha_\ell)_{i} \max_{x\in B(x_i,r)}x + \beta_\ell \\
        &= \sum_{i \in \mathbb R^{n_0}}\mathbbm{1}_{\{(\alpha_\ell)_{i} > 0\}}(\alpha_\ell)_{i} \cdot (x_i+r) \\
        &+ \sum_{i \in \mathbb R^{n_0}}\mathbbm{1}_{\{(\alpha_\ell)_{i} < 0\}}(\alpha_\ell)_{i} \cdot (x_i-r) + \beta_\ell 
    \end{align*}
    Then we have
    \begin{flalign*}
        & \mathcal L(\varphi)(x) = \sum_{\ell \ne \ell_0} \max (\alpha_\ell^\mathrm{T} x + \beta_\ell,0) = 0,\forall x \in B(x_i,r)&\\
        % \iff &\max(\max (\alpha_\ell^\mathrm{T} x + \beta_\ell,0))   = 0\\
        \iff &\sum_{\ell \ne \ell_0}\max(\elmax(\alpha_{\ell}^\mathrm{T},\bm 0) \cdot (x+r\cdot\bm 1)+\elmin(\alpha_{\ell}^\mathrm{T},\bm 0)  &\\
        &\cdot (x-r\cdot \bm 1) + \beta_\ell,0) = 0&\\
        \iff &\mathcal L^*(\varphi) = 0.&\\ 
    \end{flalign*}
    Therefore, if $\mathcal L^*(\varphi) = 0$, we have
    \begin{align*} 
    \forall x \in B(x_i,r), F(x)_\ell-F(x)_{\ell_0} \le \alpha_\ell^\mathrm{T} x + \beta_\ell \leq 0,
    \end{align*}
    then the property $\varphi$ holds. 
\end{proof}

\section{An explainable example}

\begin{figure}[htbp]
    \centering
    \scalebox{0.8}{
   \begin{tikzpicture}[->,>=stealth,auto,node distance=1.2cm,semithick,scale=1,every node/.style={scale=1}]
      \tikzstyle{blackdot}=[circle,fill=black,minimum size=6pt,inner sep=0pt]
      \tikzstyle{state}=[minimum size=0pt,circle,draw,thick]
      \tikzstyle{stateNframe}=[minimum size=0pt]	
      \node[state](x1){$x_1$};
      \node[state](x2)[below of=x1,yshift=-0.3cm]{$x_2$};
      \node[state](x3)[right of=x1,xshift=1.2cm]{$x_3$};
      \node[state](x4)[right of=x2,xshift=1.2cm]{$x_4$};
      \node[state](x5)[right of=x3,xshift=1.2cm]{$x_5$};
      \node[state](x6)[right of=x4,xshift=1.2cm]{$x_6$};
      \node[state](y1)[right of=x5,xshift=1.2cm]{$y_1$};
      \node[state](y2)[right of=x6,xshift=1.2cm]{$y_2$};
    %   \node[state](y)[right of=y1,xshift=1.2cm,yshift=-0.8cm]{$y$};
       %	\node[blackdot](d1)[above of=r11,xshift=0.6cm]{};
    %   \node[stateNframe](bias1)[below of=x3,yshift=0.7cm]{\tiny{bias $-0.1$}};
  
    \path (x1) edge	[-]						    node[font=\footnotesize] {$0.8$} (x3)
          (x1) edge[-]							node[font=\footnotesize][xshift=-0.3cm,yshift=0.25cm] {$1.4$} (x4)
          (x2) edge	[-]						    node[font=\footnotesize][xshift=0.4cm,yshift=-0.7cm] {$1.1$} (x3)
          (x2) edge[-]							node[font=\footnotesize][below] {$1.2$} (x4)
          (x3) edge	[-]						    node[font=\footnotesize] { $\mathrm{ReLU}$} (x5)
          (x4) edge[-]							node[font=\footnotesize] {$\mathrm{ReLU}$} (x6)
          (x5) edge	[-]						    node[font=\footnotesize] {$-0.8$} (y1)
          (x5) edge[-]							node[font=\footnotesize][xshift=-0.3cm,yshift=0.25cm] {$1.1$} (y2)
          (x6) edge	[-]						    node[font=\footnotesize][xshift=0.4cm,yshift=-0.7cm] {$0.4$} (y1)
          (x6) edge[-]							node[font=\footnotesize][below] {$-1.1$} (y2);
  \end{tikzpicture}}
    \caption{A fully connected neural network $N$ with ReLU activations.} \label{fig:example1}
    \vspace{-1em}
  \end{figure}
  
\begin{example}
    Consider the neural network in Fig.~\ref{fig:example1} and
    % where the input layer include two neurons $x_1$ and $x_2$ and it outputs  $(y_1, y_2) \in \mathbb R^2$. 
    % The behaviour of this neural network is the composition of two affine functions, where the first one is 
    % $\begin{pmatrix}
    % x_3 \\
    % x_4
    % \end{pmatrix}= \begin{pmatrix}
    % 0.8 & 1.4 \\
    % 1.1 & 1.2
    % \end{pmatrix}\begin{pmatrix}
    % x_1 \\
    % x_2
    % \end{pmatrix}$, and the second one is
    % $\begin{pmatrix}
    % y_1 \\
    % y_2
    % \end{pmatrix}= \begin{pmatrix}
    % -0.8 & 1.1 \\
    % 0.4 & -1.1
    % \end{pmatrix}\begin{pmatrix}
    % x_5 \\
    % x_6
    % \end{pmatrix}$.  
    % and ReLU activations where $x_5=\mathrm{ReLU}(x_3)$ 
    % and $x_6=\mathrm{ReLU}(x_4)$.
    the inputs $x = (-0.7,1)$ labeled ``2''.
    Within the region $B(x,0.5)$, we have its counterexample $x^* = (-0.2, 1.5)$, which violates the local robustness. 
    To repair the network, we need to construct a patch $P=wx$ where $w\in\mathbb{R}^{2\times 2}$.

    After initializing $w$ to all $0.1$,    
    % Then, let we start executing the steps of algorithm~\ref{alg:advrepair}. Firstly, we set learning rate $\eta=1$, maximum number of iterations $M$ is 10 and maximum number of
    % training epochs $R$ is 1. For line 1-6, we can get the following properties:
    % the local robustness properties are
    % % $\property{1} = (N, [-1.2,-0.2] \times [0.5,1.5], y_1 \leq y_2)$ and $\property{2} = (N, [-0.1,0.9] \times [0.1,1.1], y_2 \leq y_1)$.
    % $B(x^0,0.5)$, and $B(x^1,0.5)$, and then we can init two patch moudles $P_1$ and $P_2$ to repair $\property{1} = B(x^0,0.5)$ and $\property{2} =B(x^1,0.5)$ respectively.
    % % Now we will run \ours to repair these two properties. 
    % For convenience, we set the patch networks to be single-layer networks with an input dimension of 2 
    % and an output dimension of 2, which initial weights all be 0.1,
    % and denote the Patch module $P_k$'s
    % weight of of the $i$-th neuron of input layer to
    % the $j$-th neuron of output layer as $w_{ij}^{P_k}$.
    % For line 7-10, we train the patch module $P_1$ and $P_2$ respectively. 
we train the patch according the ``while'' loop begins at Line~7 in Alg.~\ref{alg:advrepair}. As shown in Line~5 in Alg.~\ref{alg:training}), we first execute DeepPoly to obtain the safety violated loss function $\mathcal{L}=w_{1,1}x_1+w_{1,2}x_2 - w_{2,1}x_1-w_{2,2}x_2 + (0.7x_1 + 0.14x_2 + 1.08)$. To maximize $\mathcal{L}$, we set $x_1=-0.2$ and $x_2=1.5$, then we have $\mathcal{L}^*(w)=-0.2w_{1,1} + 1.5w_{1,2}+0.2w_{2,1}-1.5w_{2,2}+1.15$. Finally, we update $w$ by $w-\eta\nabla \mathcal L^*(w)$ with the learning rate $\eta$. This process is repeated until the robustness is proven or the epochs reaches its limit $R$. 

To show the subsequent repair process, we set $\eta=0.6$ and $R=1$. Under this setting, the network is not be repaired, then we need to refine the robustness to two properties by bisecting the region $B(x,0.5)$ (see Line ~14--19 in Alg.~\ref{alg:advrepair}). 
Specifically, by the judgment that $\partial_1\mathcal{L} > \partial_2\mathcal{L}$, we select the dimension of $x_1$ and divide the robustness region $B(x,0.5)$ into $[-1.2,-0.7] \times [0.5,1.5]$ and $[-0.7,-0.2] \times [0.5,1.5]$. We perform the above repair process again, and based on a more accurate abstraction provided by two new properties, we finally obtain the repaired network with the patch 
\[
P=  \begin{pmatrix}
        0.22 & -0.8 \\
        -0.02 & 1
    \end{pmatrix}\begin{pmatrix}
            x_1 \\
            x_2
        \end{pmatrix}\,.
\]
\end{example}

\section{Limitation}
As mentioned in section~\ref{allocation}, the adversarial examples not within any known robustness regions may lead to the establishment of incorrect new properties, and the resulting inappropriate patches allocation may ultimately affect the performance of the repaired neural network.
Although this situation is rare in reality, we created an extreme setting here to investigate this weakness of our method.
Specifically, considering the local robustness with radius $r=4/255$, we reuse the repaired VGG19 for CIFAR-10 and execute it over the adversarial dataset $D_\mathrm{adv}$ consisting of the adversarial examples generated by attacking the original network over the testset $D_\mathrm{t}$. The attack utilize AutoAttack with the step size of 10.
By preserving the new properties established in this execution,
we retest the drawdown of the repaired model over $D_\mathrm{t}$.
%The results are shown in Fig.~\ref{fig:adv_dd}.
The results are presented in Fig.~\ref{fig:adv_dd}, showing that the accuracy of the repaired model drops significantly under this extreme setting.
This is a limitation of \ours, and it also echoes the statement aforementioned that adversarial detection can serve as an important supplement to our method.

\begin{figure}[t]
    \centering
    \includegraphics[width=0.7\linewidth]{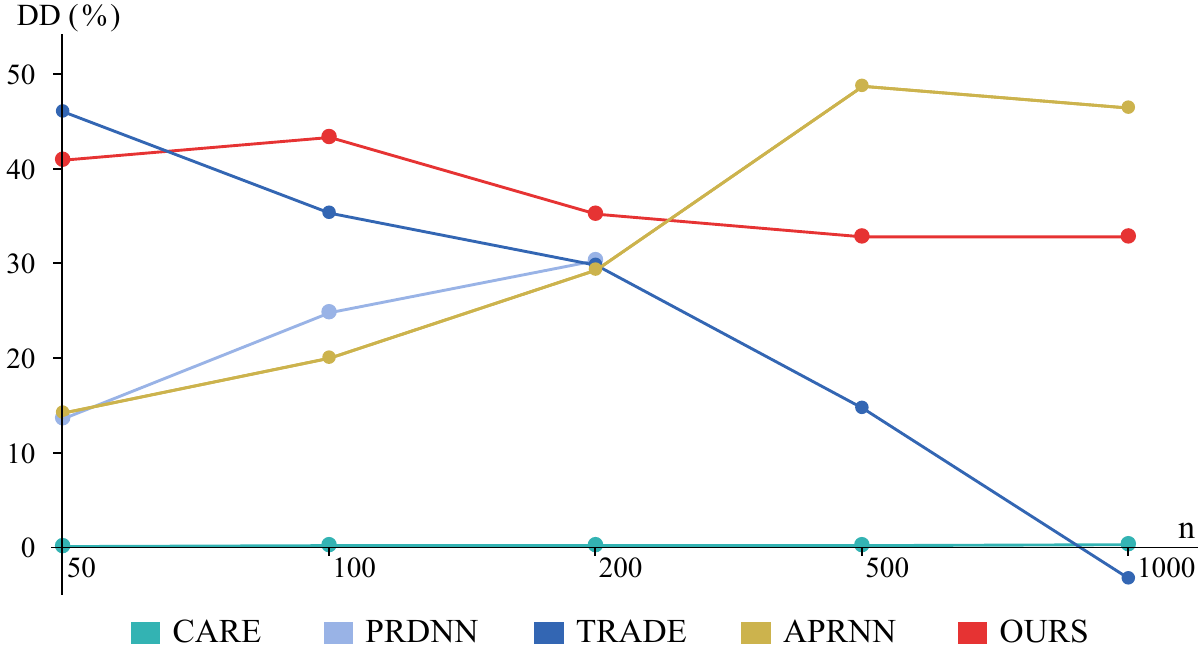}
    % \caption{Framework of \ours. }
    \caption{Results under the extreme setting.
    }
    \label{fig:adv_dd}
    \vspace{-1em}
\end{figure}

\section{Network architectures and accuracies of the DNN trained on MNIST}
The network architectures and accuracies of the DNN trained on MNIST are detailed in Table~\ref{tab:mnist-arc}. 
\label{appendix:exp}

\begin{table}[t]
\caption{Network architectures and accuracies of DNN trained on MNIST dataset}
\label{tab:mnist-arc}
\centering
\begin{tabular}{c|c|c}
\hline
Name                        & Accuracy/\%           & Model structure                       \\ \hline
\multirow{6}{*}{FNN\_small} & \multirow{6}{*}{96.6} & linear layer of 50 hidden units       \\
                            &                       & linear layer of 50 hidden units       \\
                            &                       & linear layer of 50 hidden units       \\
                            &                       & linear layer of 50 hidden units       \\
                            &                       & linear layer of 32 hidden units       \\
                            &                       & linear layer of 10 hidden units       \\ \hline
\multirow{6}{*}{FNN\_big}   & \multirow{6}{*}{97.2} & linear layer of 200 hidden units      \\
                            &                       & linear layer of 200 hidden units      \\
                            &                       & linear layer of 200 hidden units      \\
                            &                       & linear layer of 200 hidden units      \\
                            &                       & linear layer of 32 hidden units       \\
                            &                       & linear layer of 10 hidden units       \\ \hline
\multirow{3}{*}{CNN}        & \multirow{3}{*}{98.3} & Conv2d(1, 16, 4, stride=2, padding=1) \\
                            &                       & linear layer of 100 hidden units      \\
                            &                       & linear layer of 10 hidden units       \\ \hline
\end{tabular}

\end{table}

\end{document}